\documentclass{article}

\usepackage[preprint]{neurips_2026}
\usepackage{math_headers}
\usepackage{subcaption}
\usepackage{placeins}

\title{Interpreting Reinforcement Learning Agents with Susceptibilities}

\newcommand\equalContribution{=}

\author{%
    Chris Elliott\textsuperscript{\equalContribution}\\
    Timaeus
\And
    Einar Urdshals\textsuperscript{\equalContribution}\\
    Timaeus
\And
    David Quarel\\
    Timaeus
\And
    Daniel Murfet\\
    Timaeus
}

\date{\today}

\begin{document}

\maketitle
\makeatletter
\def\@makefnmark{\hbox{\@textsuperscript{\normalfont\@thefnmark}}}
\makeatother
\renewcommand{\thefootnote}{=}
\setcounter{footnote}{0}
\footnotetext{These authors contributed equally to this work.}
\setcounter{footnote}{0}
\renewcommand{\thefootnote}{\arabic{footnote}}

\begin{abstract}
Susceptibilities are a technique for neural network interpretability that studies the response of posterior expectation values of observables to perturbations of the loss. We generalize this construction to the setting of the regret in deep reinforcement learning and investigate the utility of susceptibilities in a simple gridworld model that nevertheless exhibits non-trivial stagewise development. We argue that susceptibilities reveal internal features of the development of the model in parameter space that one cannot detect purely by studying the development of the learned policy. We validate these results with activation-steering, and discuss the framework's extension to RLHF post-training.
%We generalize susceptibilities, a linear-response interpretability technique, from supervised learning to deep reinforcement learning. Applied to RL agents trained on the cheese-in-the-corner gridworld, susceptibilities reveal directional asymmetries in the local regret geometry that are not apparent from the learned policy, distinct algorithmic regimes at successive training stages, spontaneous symmetry breaking under uniform training determined by initialization, and continued parameter-space simplification while the policy is fixed. We validate these results with activation-steering, and discuss the framework's extension to RLHF post-training.
%Susceptibilities are a technique for neural network interpretability that studies the response of posterior expectation values of observables to perturbations of the loss.  We generalize this construction to the setting of the regret in deep reinforcement learning and investigate the utility of susceptibilities in a simple gridworld model that nevertheless exhibits non-trivial stagewise development.  We argue that susceptibilities reveal internal features of the development of the model in parameter space that one cannot detect purely by studying the development of the learned policy.
\end{abstract}

\section{Introduction}

Deep reinforcement learning (RL) agents based on large language models are now widely deployed, particularly for tasks involving coding \citep{massenkoffmccrory2026labor}. Understanding and controlling the behavior of these agents is therefore an important scientific and sociotechnical problem. Unfortunately, our understanding is presently limited: for instance, most efforts in neural network interpretability have been in the setting of supervised learning, with some notable exceptions \citep{McGrath2022Chess, mini2023understanding}. We apply the recently developed interpretability framework of susceptibilities \citep{Susceptibilities, BIF} to study a simple cheese-in-the-corner gridworld deep RL agent, and both validate the applicability of these techniques by comparing to known ground truth and uncover some previously unnoticed aspects of the development of the policy over training.

%Susceptibilities \cite{Susceptibilities, BIF} are a recent interpretability technique drawn from the linear-response framework of statistical mechanics, conceptually adjacent to influence-function methods for data attribution \cite{KohLiang2017}: they measure the response of the posterior expectation value of a chosen observable, localized on a component of the network, to a perturbation of the data distribution or loss.  In parallel work, \cite{RL1} extended singular learning theory to deep reinforcement learning: the generalized posterior on policies obeys an analogue of Watanabe's free energy formula with the regret playing the role of the loss, and the local learning coefficient of the regret governs the stagewise development of policies through Bayesian phase transitions.  The present work connects these two threads: we generalize susceptibilities to the reinforcement learning setting introduced in \cite{RL1} and apply them to the cheese-in-the-corner environment studied there, investigating what susceptibilities reveal about the internal structure of an RL agent beyond what is visible from the development of the learned policy alone, complementing structural-interpretability efforts on deep RL agents \cite{McGrath2022Chess, mini2023understanding} and progress-measure approaches to phase transitions in deep learning more broadly \cite{Nanda2023Grokking}.

Some of the results are summarized in \cref{fig:alpha_0.6_single_run}. To explain, recall that in deep RL the central geometric object is the \emph{regret landscape}, which is the analogue of the loss landscape in supervised learning. The key idea driving the approach to interpretability taken in this paper, following \citep{hoogland2024developmental,murfet2025programs}, is that \emph{internal structure of the learned agent is encoded in the local geometry of the regret landscape}; this was explored in the RL setting by \citet{RL1}.

We formalize this by working with the generalized posterior distribution
\begin{equation}
p(w) \propto \exp(-n\beta G(w)) \phi(w)\,, \qquad \mathbb{E}_{w \sim p(w)}[ \OO ] := \int \OO(w) p(w) dw
\end{equation}
where $G(w)$ is the regret associated to a policy $\pi_w$, where $w \in W$ is the parameter, $\phi(w) > 0$ is the prior, $n$ plays the role of the number of observed trajectories and $\beta > 0$ is the inverse temperature. To probe the internal structure of an agent with policy $\pi_{w^*}$, we study how variations in the initial state distribution $\Lambda \in \Delta(\mc S)$ affect the local geometry of the regret $G$ near $w^*$: the idea being that if some parts of the model are particularly important for acting in trajectories beginning at $s_0 \in \mc S$ then varying the probability of $s_0$ should vary the geometry in those directions in the regret landscape more.

More precisely, a perturbation of the initial state distribution from $\Lambda$ to $\Lambda + h \xi$ deforms the regret function from $G$ to a perturbed regret $G^h$, and through the generalized free-energy formalism of \citet{RL1} this in turn deforms the generalized Bayesian posterior on policies from $p(w)$ to $p(w|h)$.  We measure this deformation by tracking how it shifts the posterior expectation of an observable function on parameter space: given an observable $\mc \OO \colon W \to \RR$, the \emph{susceptibility} associated to $\mc O$ and the perturbation direction $\xi$ is the leading-order response of $\mathbb{E}_{w \sim p(w|h)}[ \OO ]$ to $\xi$.

When the perturbation $\xi_{s_0}$ upweights a particular initial state $s_0$, and the observable $\OO = \OO_C$ is a per-component loss (say for a convolutional layer, see \cref{weight_restricted_observable_appendix}) we obtain a susceptibility that measures a coupling of the initial state $s_0$ and the component $C$
\begin{equation}
\chi_C(\xi_{s_0}) := \frac{1}{n\beta}\frac{\partial}{\partial h} \mathbb{E}_{w \sim p(w|h)}[ \OO_C ]\Big|_{h = 0}\,.
\end{equation}
From this and a list of components $C_1,\ldots,C_H$ we construct the \emph{susceptibility vector}
\begin{equation}
\chi(\xi_{s_0}) := \big( \chi_{C_1}(\xi_{s_0}), \ldots, \chi_{C_H}(\xi_{s_0}) \big) \in \mathbb{R}^H\,.
\end{equation}
For the cheese-in-the-corner agent shown in \cref{fig:alpha_0.6_single_run} the network has $H = 6$ components consisting of three convolutional blocks (Conv 1--3), two fully-connected layers (FF 1, FF 2) and a policy head (\cref{model_appendix}). We project $\chi(\xi_{s_0})$ onto $\mathbb{R}^2$ by averaging the convolution and fully-connected layers separately. The scatter-plots shown are of the resulting projections as $s_0$ ranges over all initial states (or rather, of empirical estimators of these quantities) providing a picture of all these variations in the initial state distribution and how they affect the regret. We see that the susceptibilities respond in interesting ways during the stagewise development.

\begin{figure}[ht]
    \centering
    \includegraphics[width=0.99\linewidth]{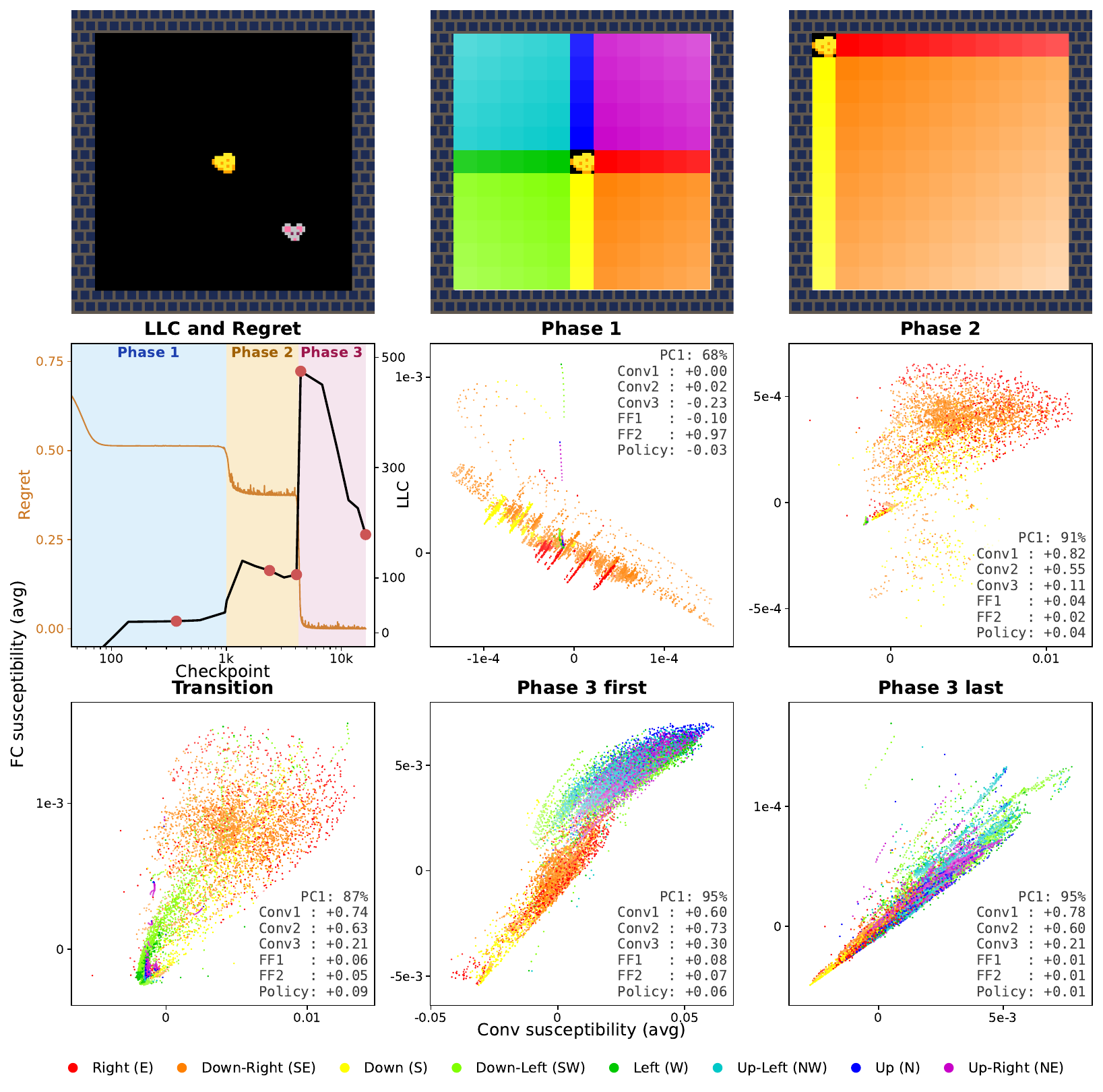}
    \caption{\textbf{Training dynamics for a model trained with $\alpha=0.6$}. \textbf{Top row:} Cheese-in-corner environment; an RL agent (mouse) navigates to cheese (+1 reward). Initial states are colored by mouse position relative to the cheese (two rightmost panels); in a fraction $1-\alpha$ of environments cheese is in the top-left corner (top right panel), so all initial mouse positions are red/orange/yellow. \textbf{Middle and lower row:} Middle left shows the LLC and regret curves; the model passes through 3 phases, visible as LLC spikes and regret drops. Red dots mark the checkpoints whose susceptibility scatter plots are shown, projecting susceptibilities to 2D (average convolution-block vs.\ average fully-connected-block susceptibility). Red/orange/yellow states dominate phases 1--2, with other directions concentrated near the origin. Lower left: transition to phase 3, where other directions (notably green) gain non-zero susceptibilities, before two clusters emerge early in phase 3 (lower middle). Late in phase 3, the clusters merge and the overall susceptibility scale declines. Each susceptibility panel reports the variance explained by PC1 and PC1's cosine similarity to each weight restriction direction.}
    %\caption{A scatter plot of susceptibilities for a model trained with $\alpha = 0.6$, projected to two dimensions (average convolution block susceptibility against average fully connected block susceptibility).  The top left and middle panels represents checkpoints in the middle of phases 1 and 2 respectively.  These phases are dominated by susceptibilities for states receiving reward when the model is moving towards the top left corner (red, orange and yellow points).  The top right panel represents a checkpoint during the phase transition, as the model learns to respond non-trivially to some new states (green).  The lower left panel represents a checkpoint early in phase 3, shortly after the phase transition: the model now behaves approximately optimally in all states and all states are now represented non-trivially in the susceptibility scatter plot. We note that the states that were represented non-trivially in phase two now form a cluster partly separated from the other states. The lower middle panel shows the susceptibilities at the end of training, where we see that the clustering has gotten less pronounced, and the overall scale of the susceptibilities has shrunk significantly. Finally, at the bottom right we see the LLC and Regret curves for this model, with the red dots indicating the checkpoints at which the susceptibilities have been estimated. The variation explained by PC1 and the cosine similarity between the PC1 and each of the weight restriction directions is indicated in each susceptibility panel.}
    \label{fig:alpha_0.6_single_run}
\end{figure}

We offer the following contributions.
\begin{enumerate}
    \item In \cref{susceptibilities_background_section}, we present \textbf{a generalization of susceptibilities to the reinforcement learning setting}.  We define susceptibilities for three natural classes of perturbation of a Markov decision problem (initial state distribution, reward function, and transition dynamics) and provide a fluctuation-dissipation formula allowing for efficient estimation from SGLD samples of the posterior.

    \item In \cref{developmental_section}, we present \textbf{a stagewise empirical account of how the internal structure of a deep RL agent develops during training} in the gridworld environment of \citet{Misgen, RL1}.  Successive stages of training are characterized by qualitatively different susceptibility structures, culminating in an optimal phase during which the parameterization continues to simplify despite the policy itself being essentially fixed.  For uniformly-trained agents we document a parallel story in which analogous structure emerges via spontaneous symmetry breaking.

    \item In \cref{activation_steering_section} and \cref{direction_regret_section}, we \textbf{independently validate the representational structure detected by susceptibility analysis through two complementary observables}: an activation-space probe (activation steering) and a parameter-space probe (direction-conditioned posterior regret).  The directions flagged by susceptibilities are on average more resistant to being steered away from in activation space than non-flagged directions, and they are also the directions of lowest direction-conditioned posterior regret. Perturbing the training distribution biases which directions acquire this distinguished status.

    \item In the remaining subsections of \cref{evidence_section}, we \textbf{show that susceptibilities probe features of deep RL training dynamics that the policy trajectory does not reveal}: a training-distribution intervention establishes a causal role for phase-1-distinguished states in driving the phase 2 to phase 3 transition (\cref{antenna_section}); a skip-versus-visit comparison shows that phase transition history is encoded in end-of-training susceptibilities via transition timing (\cref{phase2_residue_section}); and a coordinated decline of multiple complexity measures during the optimal phase reveals continuing internal reorganization while the policy is approximately fixed (\cref{within_optimal_section}).
\end{enumerate}

The paper is structured as follows.  \Cref{background_section} collects background material on the Markov decision problem setup, local learning coefficients and presents our extension of susceptibilities to the reinforcement learning context.  We then present the cheese-in-the-corner environment and IMPALA architecture studied in the empirical sections.  \Cref{developmental_section} presents our empirical account of how susceptibility structure evolves, and what this tells us about the development of the model, through the phases of training.  Direct evidence for the individual interpretative claims made in this account is collected in \cref{evidence_section}.  \Cref{discussion_section} discusses limitations and open questions; supporting theoretical and experimental details are deferred to the appendices.

\section{Background} \label{background_section}

This section summarizes the background material on which our results rest.  Firstly, \Cref{markov_section} reviews the Markov decision problem setup and the generalized Bayesian posterior on policies introduced in \citet{RL1}.  Next, \Cref{LLC_section} introduces the local learning coefficient and its weight-restricted refinement.  \Cref{susceptibilities_background_section} reviews the susceptibility framework of \citet{Susceptibilities} and presents its natural extension to perturbations of the initial state distribution in the reinforcement learning setting.  Finally, \Cref{prior_results_section} describes the \emph{cheese in the corner} environment and IMPALA-style architecture on which our experiments are based (following the work of \citealp{Misgen}), and recalls the stagewise training behavior of these models established in \citet{RL1}.

\subsection{Markov Decision Problem Setup} \label{markov_section}
We begin by briefly summarizing the notation for finite Markov decision problems that we will use below.  Let $\mc S, \mc O, \mc A$ be finite sets of \emph{states, observations} and \emph{actions} respectively.  Fix a \emph{transition function}\footnote{We adopt the standard restriction on $p$ that the observation depends only on the output state.}
\[p \colon \mc S \times \mc A \to \Delta(\mc S \times \mc O \times \RR)\]
and write $r(s,a)$ for the expected value of the real-valued factor (the \emph{expected reward}).

We fix a parameterized family of \emph{policies}: a real analytic manifold $W$ together with a real analytic function $w \mapsto \pi_w \in \Delta(\mc A)^{\mc O}$.  We also fix an \emph{initial state distribution} $\Lambda \in \Delta(\mc S)$ and a maximum episode length $T_{\mr{max}}$.

Together this data determines a probability distribution on trajectories.  A \emph{trajectory} is a finite sequence
\[\tau = (s_0, o_0, a_1, s_1, o_1, \ldots, a_{T_{\mr{max}}}, s_{T_{\mr{max}}}, o_{T_{\mr{max}}})\]
where $s_i \in \mc S, o_i \in \mc O$ and $a_i \in \mc A$.  Write $\mc T$ for the finite set of all trajectories.  The transition function $p$, the initial state distribution $\Lambda$ and the parameterized family of policies $\{\pi_w\}$ together determine a real analytic family of probability distributions $\{q_w\}$ on $\mc T$ by
\[q_w(\tau) = \Lambda(s_0) p(o_0|s_0) \prod_{i=1}^{T_{\mr{max}}} \pi_w(a_i | o_{i-1}) p(s_i | s_{i-1}, a_i) p(o_i | s_i).\]

Let us now define the expected return and the regret as expectation values with respect to this family of distributions. Fix a discount factor $\gamma \in [0,1]$.  The \emph{return} of a trajectory is $r(\tau) = \sum_{i=1}^{T_{\mr{max}}} \gamma^{i-1} r(s_{i-1}, a_i)$.
The \emph{expected return} is $R(w) = \bb E_{\tau \sim q_w}(r(\tau))$ and the \emph{regret} is
\[G(w) = R_{\mr{max}} - R(w), \text{ where } R_{\mr{max}} = \sup_{w \in W} R(w).\]
Lastly, given a dataset $D_n = \{(w_i, \tau_i)\}_{i=1}^n$ with $\tau_i \sim q_{w_i}$, the \emph{importance-weighted regret estimator} is the unbiased estimator for $G(w)$ defined by
\[G_n(w) = \frac{1}{n} \sum_{i=1}^n \frac{q_w(\tau_i)}{q_{w_i}(\tau_i)} g(\tau_i)\]
where $g(\tau) = R_{\mr{max}} - r(\tau)$.  The importance weights depend only on policy ratios and can be computed without knowledge of the transition function.

\subsection{Local Learning Coefficients} \label{LLC_section}

Singular learning theory (SLT) \citep{WatanabeGrey,WatanabeGreen} provides the mathematical framework for studying Bayesian learning in singular statistical models, including deep neural networks.  A central object in SLT is the \emph{local learning coefficient} (LLC) \citep{LLC}, a geometric invariant $\lambda(\wstar) > 0$ associated to each local minimum $\wstar$ of a loss function that measures the effective complexity of the model near $\wstar$.  The LLC governs Bayesian model selection through the \emph{free energy formula}: the free energy in a neighborhood $U$ of $\wstar$ satisfies
\[F_n(U) = nL_n(\wstar) + \lambda(\wstar) \log n + o_P(\log n)\]
where $L_n$ is the empirical loss.  Bayesian model selection is determined by where the posterior concentrates, and posterior mass on a neighborhood $U$ is proportional to $\exp(-F_n(U))$: regions with smaller free energy attract more weight.  The loss term in $F_n(U)$ scales with $n$ while the complexity term scales only as $\log n$, so for small $n$ the posterior prefers simpler solutions (smaller $\lambda$) and as $n$ grows the balance shifts toward lower-loss solutions, with rapid switches in concentration known as \emph{Bayesian phase transitions}.  See \cref{LLC_appendix} and \citet[\S 4]{RL1} for details.

In \citet{RL1} it was shown that the free energy formula generalizes to the reinforcement learning setting, with the regret $G$ playing the role of the loss and the importance-weighted estimator $G_n$ playing the role of the empirical loss.  The LLC of a local minimum $\wstar$ of $G$ therefore controls the concentration of the generalized posterior over policies, and the same complexity-regret tradeoff governs generalized Bayesian learning in deep reinforcement learning.

Given a decomposition $W = V \times C$ of the parameter space into a \emph{component} $C$ (for instance a single layer or convolutional block) and its complement $V$, the \emph{weight-refined LLC} (or \emph{rLLC}) $\lambda_C(\wstar)$ at a local minimum $w^* = (v^*, c^*)  \in V \times C$ is defined analogously but restricting to variations purely in the $C$ factor, holding the parameters in $V$ fixed at the value $v^*$ \citep{Wang}.  The rLLC measures the effective complexity contributed by the component $C$ to the model's representation near $\wstar$.  This construction applies equally in the reinforcement learning setting, and the results of \citet{RL1} imply that one may consider the rLLC as controlling the local complexity of the model at $\wstar$ along the component $C$.

Both the LLC and the rLLC may be estimated from the posterior distribution on $W$.  Fix an inverse temperature $\beta > 0$.  The LLC is recovered from the posterior expectation of the loss via
\[\lambda(\wstar) \approx n\beta \left(\langle L \rangle_{n\beta} - L(\wstar)\right)\]
and the rLLC is recovered similarly using the weight-restricted observable $\phi_C(w) = \delta_C(w)(L(w) - L(\wstar))$, where $\delta_C$ restricts to the subspace $\{v^*\} \times C$ (see \cref{LLC_appendix} for details).  In practice one approximates the posterior expectation by SGLD sampling \citep{LLC}.

\subsection{Susceptibilities} \label{susceptibilities_background_section}

The LLC and rLLC measure the complexity of a model at a particular point in parameter space; we would like to go a step further and study the dependence of this complexity on the structure of the data -- for instance in the reinforcement learning setting on the distribution over initial states visited during training.  \emph{Susceptibilities} \citep{Susceptibilities} address this by measuring the infinitesimal change in the rLLC under a perturbation of such data within the learning problem.  In this section we first review the construction in the supervised learning setting and then present our generalization to a version applicable in reinforcement learning.

\subsubsection{Susceptibilities in Supervised Learning}
We begin by reviewing the theory developed in \citet{Susceptibilities}.

Susceptibilities measure the local \emph{response} of a model, understood to mean the change in the posterior expectation value of an observable quantity, to a perturbation of the truth.  Consider the following distribution learning setting: fix a data space $X$, a true distribution $q \in \Delta(X)$ and a \emph{model}: a family of probability distributions $p_w$ on $X$ parameterized by a manifold $W$.  Consider the optimization problem on $W$ in which one aims to minimize the Kullback--Leibler divergence
\[K(w) = \mr{KL}(q || p_w) = \bb E_{x \sim q} \left( \log \frac{q(x)}{p_w(x)} \right)\]
and let $\wstar \in W$ be a global minimum of $K$.

Given a component $C \sub W$ of the model we may investigate how the rLLC of $C$ changes under perturbations of the truth.  For simplicity let us consider linear mixtures, so let $q'$ be another probability distribution on $X$ and let
\[q^h(x) = (1-h)q(x) + hq'(x)\]
for $0 \le h \le 1$.  This naturally leads to a corresponding deformation of the KL divergence
\[K^h(w) = \mr{KL}(q^h || p_w).\]
Recall the weight-restricted observable $\phi_C(w) = \delta_C(w)(K(w) - K(\wstar))$ from \cref{LLC_section}, whose posterior expectation measures the rLLC.  The \emph{susceptibility} associated to $C$ and the deformation $q^h$ is defined as the empirical covariance
\[\widehat\chi_C = -\widehat{\mr{Cov}}_{n\beta}(\phi_C(w),\, K^1(w) - K(w))\]
where the covariance is taken with respect to the Bayesian posterior at inverse temperature $\beta$ inherited from a Gaussian prior around $\wstar$.  Since $n\beta\langle \phi_C \rangle_{n\beta}$ approximates the rLLC, the susceptibility measures the response of the rLLC of the component $C$ to a linear perturbation of the truth.  Like the LLC it admits an efficient estimator via SGLD sampling; the details are presented in \citet{Susceptibilities}.

\subsubsection{Initial State Susceptibilities in Reinforcement Learning}
Let us now fix a finite Markov decision problem and study the natural generalization of the construction laid out in the previous section.  We now no longer have a ``true'' distribution to perturb; there are nevertheless a number of natural perturbations of the Markov decision problem that we may study instead.  In this paper we will focus on \emph{initial state perturbations}, although in \cref{susc_theory_appendix} we present a more general class of constructions.

So, let $\Lambda \in \Delta(\mc S)$ be the initial state distribution of a Markov decision problem (following the notation introduced in \cref{markov_section}).  We will consider linear perturbations of $\Lambda$.  So, let $\xi$ be a function $\xi \colon \mc S \to \RR$ so that $\sum_{s \in \mc S} \xi(s) = 0$, and suppose that $\xi(s) < 0$ only if $\Lambda(s) > 0$.
\begin{definition}
The linear deformation of $\Lambda$ associated to $\xi$ is the family of functions
\[\Lambda^h(s) = \Lambda(s) + h \xi(s)\]
for real parameters $h \ge 0$.  The function $\Lambda^h$ is a probability distribution for $0 \le h \le h_0$ for some positive $h_0$ \footnote{Concretely we may let $h_0 = -\mr{max}_{\xi(s)<0} \Lambda(s)/\xi(s)$.}.
\end{definition}

\begin{example}
Let $s_0 \in \mc S$ be a state.  There is a linear deformation of $\Lambda$ defined for $0 \le h \le 1$ by
\[\Lambda^h_{s_0}(s) = (1-h)\Lambda(s) + h\delta_{s_0}(s) = \Lambda(s) + h(\delta_{s_0}(s) - \Lambda(s))\]
where $\delta_{s_0}$ is the constant distribution concentrated at $s_0$.
\end{example}

We will now define susceptibilities associated to such a linear deformation directly paralleling the construction presented for supervised learning.  Fix a model $W$ parameterizing a space of policies for the Markov decision problem.  Let $G$ denote the regret function, and let $G_n$ be its importance-weighting estimator.  The deformation $\Lambda^h$ of $\Lambda$ naturally determines deformations of the regret and its estimators, which we'll denote by $G^h, G_n^h$.

Let $C \sub W$ be a subspace, thought of as a \emph{component} of the parameter space.  Fix a number $n$ of data points and a value $\beta>0$ for the inverse temperature.  Finally, fix a local minimum $w^*$ of the regret.

\begin{definition}
The \emph{initial state susceptibility estimator} at $w^*$ associated to the linear deformation $\Lambda^h$ and a component $C$ is the empirical covariance
\[\widehat \chi_C(\xi) = -\widehat{\mr{Cov}}_{n\beta}((G_n(w) - G_n(w^*))\delta_C(w), G_n^1(w) - G_n(w)).\]
\end{definition}
As before, we refer to \cref{susc_theory_appendix} for details on how this empirical covariance is determined by sampling.

\subsubsection{Interpretation} \label{interpretation_section}
How, then, should one think about these susceptibilities?  Recall from \cref{LLC_section} that the rLLC $\lambda_C(\wstar)$ measures the effective complexity contributed by a component $C$ to the model's representation near a local minimum $\wstar$.  The susceptibility $\widehat \chi_C(\xi)$ agrees to leading order with the rate of change of the rLLC under the perturbation $\xi$ (see Remark \ref{susc_LLC_remark} in \cref{susc_theory_appendix} for a more precise statement).  In other words:
\begin{itemize}
 \item A \emph{positive} susceptibility $\widehat \chi_C(\xi) > 0$ indicates that perturbing the initial state distribution in the direction $\xi$ \emph{increases} the effective complexity of the model in the component $C$.
 \item A \emph{negative} susceptibility $\widehat \chi_C(\xi) < 0$ indicates that this perturbation \emph{decreases} the effective complexity in $C$.
\end{itemize}
One may therefore view the susceptibility as measuring how significantly different components of the model's internal algorithm are engaged by different classes of state.  If the rLLC of a component responds strongly to concentrating the initial distribution towards a particular state $s$, it suggests that the component plays a more significant role in the model's representation of how to act in states resembling $s$.

\begin{remark}
 There is a more general notion of susceptibilities in which one chooses both a perturbation (as above) and an observable, and measure the response of the posterior expectation value of that observable to the perturbation, or -- closely relatedly -- their posterior covariance.  The construction above specifically uses the weight-restricted regret as the observable. This choice is motivated by the connection to the rLLC.  One could equally well study susceptibilities for other observables -- for instance, functions of the model's activations or action probabilities -- and the covariance description would still apply (see Proposition \ref{susc_covariance_prop}).  We would, however, lose the description that is present for our weight-restricted regret specifically, that the susceptibilities would be closely entwined with the geometry of the regret landscape, and therefore by the free energy formula to the Bayesian learning behavior of the model.
\end{remark}

\subsection{Environment, Model and Prior Results} \label{prior_results_section}

In this section we present the empirical setting to be analyzed in detail: the simplified \emph{cheese in the corner} model studied in \citet{RL1}, following the environment and model considered in \citet{Misgen}.  We also review the empirical results from LLC estimation, in which the LLC exhibits spikes associated to \emph{phase transitions} occurring during the learning process.

\subsubsection{Environment} \label{environment_section}

The environment is an $11 \times 11$ grid, illustrated in top row of \cref{fig:alpha_0.6_single_run}.  One cell contains a goal location (the \emph{cheese}) and another the initial location of the agent (the \emph{mouse}).  The agent may move in any of the four cardinal directions; collisions with walls leave the agent's position unchanged.  The episode ends when the agent reaches the cheese, receiving return $\gamma^{T-1}$ where $T$ is the number of steps taken, or when the episode times out after $T_{\mr{max}}$ steps, receiving return $0$.

The initial state distribution is determined by a \emph{mixing parameter} $\alpha \in [0,1]$.  We define a distribution
\[\Lambda_\alpha = (1-\alpha)\Lambda_{\mr{corner}} + \alpha \Lambda_{\mr{uniform}}\]
on the state space, where under $\Lambda_{\mr{corner}}$ the cheese always spawns in the top-left corner (with the mouse uniformly distributed over all other cells) and under $\Lambda_{\mr{uniform}}$ both the cheese and mouse locations are uniformly distributed over all distinct pairs of cells.  When $\alpha = 0$ the agent only ever encounters the cheese in the corner and strategies that proceed to the top-left corner via a shortest path have zero regret.  As $\alpha$ increases the agent is increasingly strongly incentivized to learn policies that navigate to the cheese when it is located anywhere in the grid.  We discuss the implementation of this environment precisely in \cref{environment_appendix}.

\subsubsection{Model}

The model follows the IMPALA-style architecture used in \citet{Misgen} and \citet{RL1}; see \cref{model_appendix} for full details.  It consists of a convolutional encoder followed by a feedforward network.  The encoder is a sequence of three convolutional blocks, each consisting of a convolution, max-pooling, and two residual blocks, with channel dimensions $3 \to 16 \to 32 \to 32$.  The output is flattened and passed through two feedforward layers (FF 1 and FF 2, the latter replacing the LSTM used in the original IMPALA architecture \citep{IMPALA}) to produce a $256$-dimensional representation.  A final linear layer produces logits over the four actions.

For susceptibility estimation we define six components within parameter space.  Each component consists of the weights belonging to one of the following modules; the full parameter space is the direct sum of these components.
\begin{itemize}
 \item \emph{Conv 1--3}: The three convolutional blocks of the encoder.
 \item \emph{FF 1}: The linear layer mapping the flattened convolutional output to the observation embedding.
 \item \emph{FF 2}: The feedforward layer mapping the observation embedding to the pre-policy representation.
 \item \emph{Policy}: The policy layer producing action logits.
\end{itemize}
We estimate susceptibilities with respect to each of these six weight restrictions independently, providing a decomposition of the model's response to perturbations across the architectural components of the model.

\subsubsection{Phases and LLCs}

It was observed in \citet{RL1} that models trained on the simplified cheese-in-the-corner environment with intermediate values of $\alpha$ exhibit \emph{stagewise development}: training proceeds through a sequence of phases in which the regret is approximately constant separated by rapid transitions.  The following three phases are typical for models trained with $\alpha = 0.6$ and $\gamma = 0.98$ (the hyperparameters used in this paper).
\begin{itemize}
 \item \emph{Phase 1}: The agent moves up with probability $0.5$ and left with probability $0.5$ in all states.
 \item \emph{Phase 2}: The agent moves deterministically towards the top-left corner, choosing a shortest path that passes through the cheese when possible.
 \item \emph{Phase 3}: The agent moves deterministically towards the cheese along a shortest path in all states.
\end{itemize}

The LLC estimator tracks this stagewise development.  Phase transitions are accompanied by rapid increases in the LLC, and later phases correspond to larger LLC values, consistent with the theoretical prediction that Bayesian phase transitions should proceed from simpler to more complex policies.

We additionally observed that within phases 2 and 3 the LLC estimate exhibits a \emph{decline} over the course of training.  During this period the policy itself does not change appreciably -- the regret remains approximately constant -- but the representation of the policy in parameter space continues to evolve.  This within-phase decline in complexity suggests that SGD progressively simplifies the parameterization of a fixed policy even as the regret remains approximately constant.  Our susceptibility analysis below provides more concrete evidence for this hypothesis.

\section{Susceptibilities and the Developmental Process} \label{developmental_section}
We will now give a detailed phase-by-phase developmental account of the structural development of initial state susceptibilities within the cheese-in-the-corner environment.  We explain what structure in the model's parameterization these susceptibilities reveal, structure that is not detectable from the development of the learned policy alone.  In the following \cref{evidence_section} we present statistical evidence that supports the empirical claim we make here; inline references for each claim are presented to the corresponding evidence sections.

\subsection{Models and Data}

The empirical results in this section draw on three classes of cheese-in-the-corner models, each trained with vanilla REINFORCE (see \cref{model_appendix}) under a different choice of mixing parameter $\alpha$ and discount factor $\gamma$.
\begin{itemize}
    \item $\alpha = 0.6$, $\gamma = 0.98$: the models studied in \citet{RL1}, on which we develop our main account of phase 1 to phase 3 development.  These hyperparameters were chosen to consistently show three phase structural development.
    \item $\alpha = 0.5$, $\gamma = 0.99$: hyperparameters under which there are substantial fractions of training runs \emph{both} that exhibit three phase development, \emph{and} that skip directly from phase 1 to phase 3.  This allows for a direct comparison between models that did and did not exhibit phase 2 structure (\cref{phase2_residue_section}).
    \item $\alpha = 1$, $\gamma = 0.98$: uniform training, under which there is no incentive to visit intermediate phases and so the model proceeds directly to optimal policies.  The absence of any bias toward particular cardinal directions in the training distribution makes this setting a clean testbed for the developmental component of the phase 3 asymmetry.
\end{itemize}

\subsection{Three-Stage Development at \texorpdfstring{$\alpha < 1$}{alpha < 1}} \label{three_stage_section}
When visualizing susceptibilities, we have found it useful to color states based on the relative position of the mouse to the cheese, see top row of \cref{fig:alpha_0.6_single_run}. This is classified into 4 cardinal direction and 4 quadrants, and the direction states indicates the location of the mouse relative to the cheese: \emph{Right} corresponds to states where the mouse starts directly to the right of the cheese, \emph{Down-Right} corresponds to the mouse starting strictly below and strictly to the right of the cheese etc. All 8 relative positions form a partition of all possible states.

\begin{figure}
    \centering
    \includegraphics[width=0.99\linewidth]{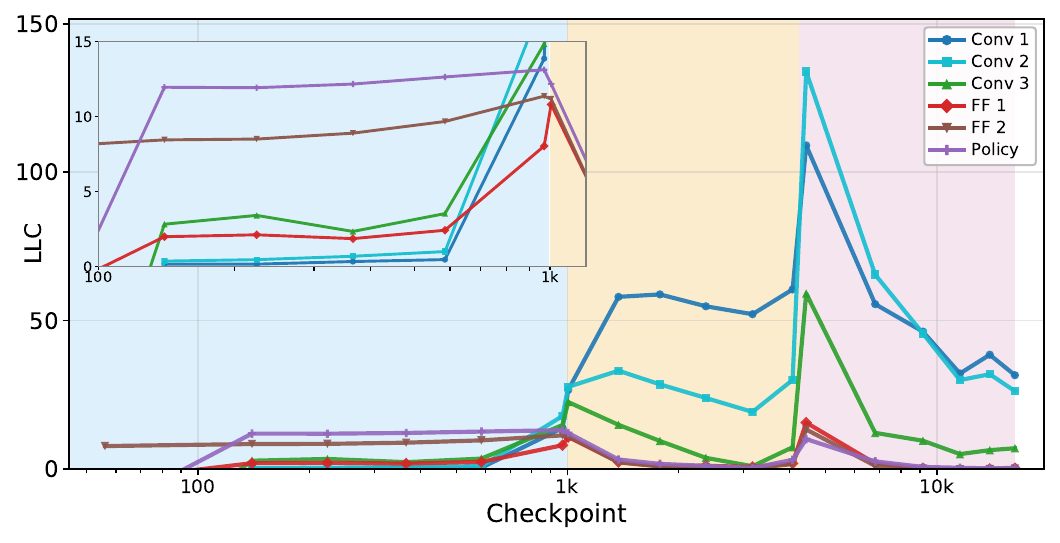}
    \caption{\textbf{Weight-restricted LLCs for the individual layers of the model.} We see that in phase 1 (blue background), while the policy is "blind", the LLC is dominated by the two last layers.  Then, as the model enters phase 2 (beige background) and learns to "see", the Conv layers activate and start to dominate the LLC. We note the LLC of all layers have a peak as the model enters phase 3 (magenta background).}
    \label{fig:single_wr_llc}
\end{figure}
In \cref{fig:alpha_0.6_single_run}, we see an example of the susceptibilities for phase 1 and phase 2, as well as susceptibilities at the transition between phase 2 and 3, for the start and the end of phase 3.
 The weight-restricted LLCs from \cref{fig:single_wr_llc} and susceptibility structure together motivate a stagewise developmental account, which we develop in the subsections below. Throughout this subsection, aggregate statistics are reported over a fixed population of ten $\alpha = 0.6$ training runs for which we estimated susceptibilities at all six weight restrictions across a wide range of checkpoints spanning phases 1--3 (elsewhere in the paper we consider larger populations of seeds sampled at the FF 2 weight restriction only and at a smaller set of checkpoints). We refer to this group throughout as the ``ten-run $\alpha = 0.6$ population''.
For the regret and LLC estimates for the full model, see the middle left panel of \cref{fig:alpha_0.6_single_run} and for individual components, see \cref{fig:single_wr_llc}.

\begin{remark}
The overall scale of the susceptibility data changes considerably between the phases, and phase transitions are accompanied by changes in scale as the global and weight-restricted LLCs jump.  We observe that the scale expands by roughly an order of magnitude entering phase 2, expands by a further factor of $\sim 4$ entering phase 3, and contracts by a factor of $\sim 5$ over the course of phase 3.  To quantify this we measure the standard deviation of the full 6D susceptibility vector for each checkpoint in the ten run $\alpha = 0.6$ population, reported in \cref{tab:scale_statistics}.

\begin{table}[h]
\centering
\caption{Susceptibility scale across phases, summarized as the mean $\pm$ standard deviation of the per-panel susceptibility standard deviation across the ten-run $\alpha = 0.6$ population.}
\label{tab:scale_statistics}
\begin{tabular}{lr}
\toprule
Phase & Mean $\pm$ Std \\
\midrule
1 & $3.5 \times 10^{-4} \pm 1.7 \times 10^{-4}$ \\
2 & $3.1 \times 10^{-3} \pm 6.8 \times 10^{-4}$ \\
3 start & $1.19 \times 10^{-2} \pm 2.8 \times 10^{-3}$ \\
3 end & $2.2 \times 10^{-3} \pm 1.4 \times 10^{-3}$ \\
\bottomrule
\end{tabular}
\end{table}
\end{remark}

\subsubsection{Phase 1: a ``Blind'' Policy.}
From the weight restricted LLCs, i.e. the LLCs of the individual layers, we see that during phase 1, the LLC is dominated by the last two layers.  This is expected, as phase 1 is a ``blind'' policy, and therefore is insensitive to perturbations of the conv-layers.  We see that the susceptibility scatter plot is dominated by Right, Down-Right and Down, with the other directions being concentrated near the origin -- these are the only states for which the model has the potential to receive reward in this phase.  When we look at the PCA analysis of the susceptibility matrix we see that the conv-layers only explain 1.2\% of the variance of the phase 1 susceptibilities.

We observe that, nevertheless, the susceptibilities demonstrate that the model already demonstrates some latent geometric ``understanding'' of the environment even within phase 1, where the policy is independent of the input state.  We see that the susceptibilities differentiate the  Right, Down-Right and Down states from the other states.  In particular the susceptibilities are structurally different for those states in which the phase 1 policy can reach the cheese and those states in which it cannot.  This distinction only begins to affect the policy at the subsequent transition to phase 2.

Looking at the phase 1 susceptibilities in \cref{extra_plots_appendix}, shown in more detail in \cref{fig:antenna_states}, we consistently see streaks in the Up-Right and Down-Left directions, each separated from the rest of the Up-Right and Down-Left susceptibilities. The streaks correspond to states where the mouse starts on the top row and the cheese is placed just below the top left corner, and states where the mouse starts on the left wall, and the cheese is just to the right of the top left corner.  We hypothesize that these streak states play a causal role in the phase 2 to phase 3 transition; see the right panel of \cref{fig:antenna_states} for the intervention experiment supporting this claim, with additional details in \cref{antenna_section}.

\begin{figure}
\centering
    \includegraphics[width=0.99\linewidth]{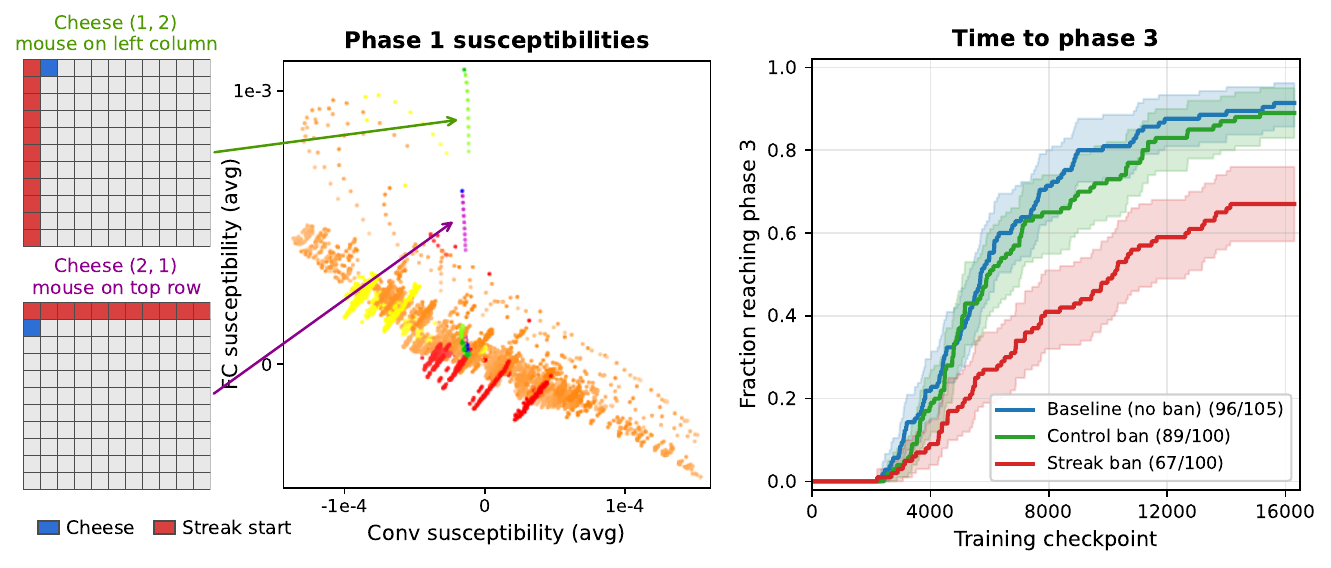}
    \caption{\textbf{``Streaks'' in phase 1 susceptibilities.}  In the left panels we see the states corresponding to the streaks, where the cheese is placed to the right of the top left corner and the mouse is located along the left column (top) where the cheese is placed below the top left corner and the mouse is located along the top row (bottom). The arrows are pointing to the linear pattern in the susceptibility plot in the middle panel, which is a zoomed in version of phase 1 of \cref{fig:alpha_0.6_single_run} with larger markers. This linear pattern is made up of the states to the left, and are what we refer to as streaks. To the right, we show the impact on the time to reach phase 3 of removing the streaks from the training distribution (red line), together with the baseline training distribution (blue) and a control baseline where a collection of states that don't separate into streaks have been removed (green). The shaded region shows the 95\% confidence interval. For more details, see \cref{antenna_section}.}
    \label{fig:antenna_states}
\end{figure}

\subsubsection{Phase 2: Learning to See.}
As the model enters phase 2 we see the LLC of the convolutional blocks shoot up and start to dominate. The conv-blocks dominate the behavior starting from phase 2, as seen in the weight refined LLCs.  This is also seen by PC1 starting in phase 2 explaining more than 90\% of the variation in the susceptibilities, and PC1 leaning mostly on the conv directions. This transition reflects the development of spatial representations in the convolutional blocks: phase 2 requires processing the relative position of the mouse and the cheese, though only in a simple way: avoiding moving too far in the up or left directions.  In the susceptibility scatter the same three directions (Right, Down-Right, Down) continue to dominate the spread -- these are the states where the corner-seeking policy already obtains near-optimal reward, so perturbations in their direction most actively probe the regret landscape -- while the remaining initial state susceptibilities for which no reward is received stay concentrated near the origin.

\subsubsection{Phase 3 Onset: Mixture of Algorithms} \label{branching_logic_section}
As the model enters phase 3, the other directions explode into a second cloud, indicating the formation of a new algorithm.
The remaining directions: Down-Left, Left, Up-Left, Up, Up-Right (those states where the phase 2 corner-seeking policy received zero reward) now appear with large positive susceptibility and separated from the warm-color directions which remain closer to and below the origin.  PC1 remains above 95\% and Conv-dominated, indicating that the new structure is again primarily located in the convolutional blocks.
For all these checkpoints, though less so at the end of training, we see that the susceptibilities for Right, Down-Right and Down differ from the other susceptibilities.

An analogous clustering phenomenon is observed in the $\alpha = 1$ models; we defer the $\alpha = 1$ case to \cref{alpha1_section}.  For $\alpha < 1$, susceptibilities associated to initial states where the mouse is located below and to the right of the cheese -- in which phase 2 policies receive optimal reward -- have minimal susceptibilities within all of the top three principal components with respect to the susceptibility matrix at the beginning of phase 3.  In other words, although the policy itself is symmetrical -- it obtains approximately optimal reward independent of the relative position of the mouse and cheese -- its representation in parameter space is \emph{asymmetrical}.

Following the interpretation in \cref{interpretation_section} this asymmetry in susceptibilities suggests that the implementation of the optimal policy under the deep neural network architecture is asymmetrical from the point of view of the geometry of the loss landscape.  That is, increasing the representation of states in the training distribution asymmetrically affects the refined LLCs for each component.  There is a distinguished class of states associated to a subset of relative positions of the mouse and cheese whose increased presence \emph{reduces} the refined LLCs (simplifies the model), while increasing the presence of generic states \emph{increases} the refined LLCs (increases the complexity of the model).

\begin{remark}
One natural mechanism for this kind of asymmetry is \emph{branching logic}: an algorithm that routes different classes of input through subroutines of differing complexity.  In its simplest form, consider a parameterized function $F_w \colon X \to Y$ that factors as
\[F_w(x) = \begin{cases} F^1_{w_1}(x) & x \in X_1 \\ F^2_{w_2}(x) & x \in X_2 \end{cases}\]
for a partition $X = X_1 \sqcup X_2$ and a corresponding decomposition $w = (w_1, w_2)$ of the parameters.  Even if $F^1$ and $F^2$ implement the same function, the geometry of the parameterizations $w_1 \mapsto F^1_{w_1}$ and $w_2 \mapsto F^2_{w_2}$ near a common optimum may differ, and this difference would be detected by the rLLC for the components spanning the two factors in parameter space.  We would anticipate that the component associated to the simpler subroutine would have a lower refined LLC.  Susceptibilities would then reflect this asymmetry, since concentrating the initial distribution towards inputs in $X_1$ would probe the geometry of $F^1$ more intensely than that of $F^2$.

Of course our deep neural network does not implement branching logic in such a simple sense.  Nevertheless, a network may implement an \emph{approximate} version of this structure, in which the dependence of the output on certain weights is much stronger for some classes of input than for others.  The susceptibility analysis detects exactly this kind of approximate conditional structure.
\end{remark}

\begin{example}[The case of $\alpha < 1$]
When $\alpha < 1$ the training distribution $\Lambda_\alpha$ is biased towards states in which the cheese is in the top-left corner.  The model first learns phase 2 -- a policy that navigates towards the corner -- and only later transitions to phase 3 -- a policy that navigates towards the cheese regardless of its location.  Crucially, phase 2 policies are already optimal for the subset of states in which the mouse is below and to the right of the cheese, since moving towards the corner is also moving towards the cheese.  We therefore expect the transition from phase 2 to phase 3 to primarily involve developing new computational structure for the \emph{remaining} states, while the representation of how to act in the ``easy'' states -- those already handled correctly by phase 2 -- can remain comparatively simple.  The lower susceptibilities associated to these states match this prediction.  It leads us to hypothesize that the model retains a simpler internal ``subroutine'' that addresses these states, inherited from phase 2, while deploying a more complex subroutine for states requiring genuinely new behavior.  This also provides a more concrete interpretation of the within-phase decline in the LLC noted in the previous section.  Our hypothesis would lead us to predict that learning favors the progressive simplification of the representation of an optimal policy by eliminating the residual asymmetrical representation learned in phase 2. For visualizations of the susceptibilities for different runs with $\alpha=0.6$, see \cref{fig:alpha_0.6_all_susceptibilities}.
\end{example}

Two independent lines of evidence support the functional reality of the asymmetric phase 3 representation: activation steering experiments (\cref{activation_steering_section}) and direction-conditioned posterior regret expectations (\cref{direction_regret_section}).  A further analysis of the effect of the duration spent in phase 2 during training (\cref{phase2_residue_section}) shows that the timing of the phase transitions is clearly detectable in the susceptibilities at the end of training.  The effect is, however, mediated primarily by the duration spent in phase 3 rather than the duration spent in the intermediate phase 2.

\subsubsection{Within Phase 3: Progressive Simplification}
Over the course of phase 3 the two clouds start merging as the LLC decreases towards the end of training.  The overall susceptibility scale shrinks by approximately an order of magnitude, with the per-seed ratio of phase-3-start to phase-3-end standard deviations averaging 7.1 across the ten-run $\alpha = 0.6$ population. Since the policy does not substantively change during phase 3, this reflects a simplification of the parameterization of an approximately fixed policy rather than any change in behavior.  One might naturally hypothesize that, during phase 3, SGD progressively simplifies the representation by erasing the asymmetric structure developed at the transition from phase 2 to phase 3: the distinction discussed in \cref{branching_logic_section} between the ``easy'' states (handled by a simpler subroutine established at the phase 2 to phase 3 transition) and the ``hard'' states (handled by the more complex subroutine developed at the phase 3 transition) is gradually eliminated.  We develop this simplification in detail, alongside geometric evidence from the Hessian trace, in \cref{within_optimal_section}.

\subsection{The \texorpdfstring{$\alpha = 1$}{alpha = 1} Case: Spontaneous Symmetry Breaking} \label{alpha1_section}
\begin{figure}
    \centering
    \includegraphics[width=0.99\linewidth]{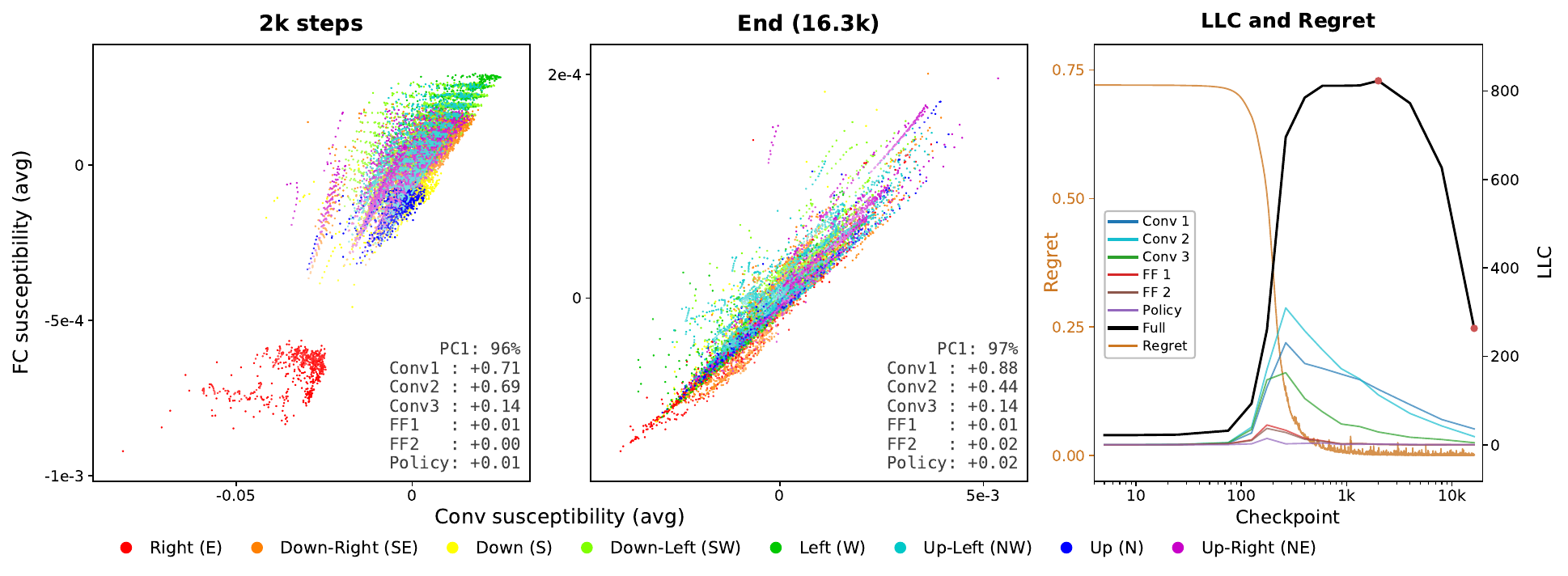}
    \caption{Susceptibilities and LLC estimator compared to regret for a single training run with $\alpha=1$ and $\gamma=0.98$. The red dots on the LLC estimator curve correspond to the checkpoints at which the susceptibilities have been evaluated. The variation explained by PC1 and the cosine similarity between the PC1 and each of the weight restriction directions is indicated at the upper left corner of each susceptibility panel.}
    \label{fig:alpha_1.0_single_run}
\end{figure}
In \cref{fig:alpha_1.0_single_run}, we see the susceptibilities and weight restricted LLCs of a model trained with $\alpha=1$ at checkpoint 2k and at the end of training. At checkpoint 2k we see that the Right states form a cluster different from the rest of the directions, suggesting that the network handles these initial states differently. As the LLC drops towards the end of training, we see this special direction increasingly folding into the other susceptibilities, being treated less as a special direction. Just like in phase 3 of the $\alpha=0.6$ case, the LLC and susceptibilities are dominated by the Conv layers, and they decrease in a similar fashion throughout phase 3.

This asymmetry -- in which one or more cardinal directions form clusters with distinct and more negative susceptibilities in at least the three first principal directions in observable space $\RR^6$ -- is repeated across the majority of independent seeds for $\alpha = 1$.  79.2\% of seeds have 1--2 \emph{distinguished} directions, defined by comparing the susceptibilities for the FF 2 layer at training step 2000: a direction is distinguished if the 99th percentile of its set of susceptibilities is lower than the 1st percentile of the set of all diagonal state susceptibilities.

The distribution across cardinal directions and pairs of directions is given in \cref{tab:alpha1_direction_freq}.  The marginal frequencies are broadly balanced (Right 21.8\%, Down 19.8\%, Left 29.7\%, Up 29.7\%), with a mild preference for Left and Up.

\begin{table}[h]
\centering
\caption{Frequencies of distinguished cardinal directions at training step 2000 across 101 $\alpha = 1$ seeds.  Diagonal entries give the frequency with which a single direction is distinguished; off-diagonal entries give the frequency with which a pair of directions is simultaneously distinguished.  The 20.8\% of seeds with zero or three-or-more distinguished directions do not appear in the table.}
\label{tab:alpha1_direction_freq}
%\begin{tabular}{lrrrr}
%\toprule
% & Right & Down & Left & Up \\
%\midrule
%Right & $12.9\%$ & $5\%$ & $0\%$ & $4\%$ \\
%Down  &          & $8.9\%$ & $5\%$ & $1\%$ \\
%Left  &          &          & $17.8\%$ & $6.9\%$ \\
%Up    &          &          &          & $17.8\%$ \\
%\bottomrule
%\end{tabular}
\begin{tabular}{lrrrr}
\toprule
 & Right & Down & Left & Up \\
\midrule
Right & $12.9\%$ $[6.9, 19.8]$ & $5.0\%$ $[1.0, 9.9]$ & $0.0\%$ & $4.0\%$ $[1.0, 7.9]$ \\
Down &  & $8.9\%$ $[4.0, 14.9]$ & $5.0\%$ $[1.0, 9.9]$ & $1.0\%$ $[0.0, 3.0]$ \\
Left &  &  & $17.8\%$ $[10.9, 25.7]$ & $6.9\%$ $[2.0, 11.9]$ \\
Up &  &  &  & $17.8\%$ $[10.9, 25.7]$ \\
\bottomrule
\end{tabular}
\end{table}

\begin{remark}
    These results are consistent across a variety of different processes by which one may detect clustering.  For further discussion see \cref{cluster_criteria_appendix}.
\end{remark}

As in the $\alpha < 1$ case, we find that the asymmetry in susceptibilities corresponds to a functional asymmetry in the model's internal computation.  The activation steering and training-distribution intervention experiments presented in \cref{activation_steering_section} provide direct evidence for this in the $\alpha = 1$ setting.

\subsubsection{Spontaneous Symmetry Breaking}
When $\alpha = 1$ the training distribution is uniform and there is no bias towards any region of the state space.  The symmetry group of the environment acts on the state space, and the optimal region in policy space is preserved by this symmetry.  Nevertheless, the susceptibility estimates frequently exhibit clustering associated to one or two cardinal directions, and this choice varies across random seeds.  This is a form of \emph{spontaneous symmetry breaking}: the optimization process selects, from among the many equivalent optimal parameterizations, one that treats certain directions more simply than others.  The susceptibility analysis detects which directions were selected.  Since there is no training-history explanation analogous to the $\alpha < 1$ case, the origin of the asymmetry would necessarily lie in the interaction between the random initialization and the dynamics of policy gradient learning. For visualizations of the susceptibilities for different runs with $\alpha=1$, see \cref{fig:alpha_1.0_all_susceptibilities}.

\subsubsection{Sensitivity to Initialization} \label{init_sensitivity_section}
Since susceptibility clustering varies across independent training runs, it is natural to ask what determines which direction is selected in a given run.  In our environment the symmetry breaking is most strongly determined by the initialization of the model parameters rather than the trajectory randomness introduced by mini-batch sampling and policy rollouts. To check this we may fix the initialization and vary the random seed controlling trajectory sampling during training.  We defer the experimental details and supporting evidence to \cref{fixed_init_appendix}.

\section{Empirical Evidence} \label{evidence_section}
We will now provide the specific evidence for the claims made in \cref{three_stage_section,alpha1_section} about the internal structure of the model represented by our susceptibility data.  Each subsection contains experimental and statistical evidence for one or more of these claims.

\subsection{Methodological Conventions} \label{methods_conventions_section}

We first record two conventions that are used throughout the evidence below.

\subsubsection{Phase Classification}
We detect models that did and did not visit phase 2 (and measure the number of checkpoints spent in the phase) by measuring an $L^2$-distance from the appropriate region in policy space.  That is, we classify a checkpoint as visiting phase $P$ if the $L^2$ distance from the policy at that checkpoint to the phase-$P$ subspace of policy space falls below $0.15$ times the diameter of policy space.  For more details and a discussion of this threshold see \cite[Section 6.4.1, Appendix H]{RL1}.

\subsubsection{On- and Off-Distribution Susceptibility Estimation} \label{on_off_remark}
Given a parameter $w^* \in W$ one can study susceptibility estimates associated to \emph{any} loss function, not necessarily the same as the loss that was used to find this parameter.  The susceptibility estimator is a sensible quantity for any loss function $G'$ for which $w^*$ is approximately a local minimum.  For example, in our cheese-in-the-corner environment, if $w^*$ is approximately optimal for the regret $G$ associated to an initial state distribution $\Lambda_\alpha$ as in \cref{environment_section}, it will also be approximately optimal for the regret $G'$ associated to any other $\Lambda_{\alpha'}$ as long as $\alpha, \alpha' > 0$.  We may, for example, use the universal choice $\alpha'=1$. The analogous choice is known to affect LLC estimation in the cheese-in-the-corner environment \cite[Remark 5, Appendix F]{RL1}, and we find that it also affects susceptibility estimation.  Unless otherwise specified we use on-distribution susceptibility estimates, and we discuss the dependence on this distinction in more detail in \cref{on_off_appendix}.

\subsubsection{Distinguished Directions} \label{distinguished_directions_section}
Several of the analyses below partition the directions in observable space into \emph{distinguished} and non-distinguished classes based on the susceptibility data.  We use a percentile criterion on the FF 2 layer susceptibilities, with a slightly different form for $\alpha = 1$ and $\alpha < 1$ models reflecting the cluster structures of \cref{branching_logic_section,alpha1_section}.  For $\alpha = 1$ models, where the natural cluster comprises cardinal directions only, a cardinal direction $D_c$ is distinguished if the 99th percentile of $\widehat \chi_{\mr{FF 2}}(D_c)$ lies below the 1st percentile of $\widehat \chi_{\mr{FF 2}}(D_d)$ pooled over the diagonal directions $D_d$ (the strict ``P99/P1'' criterion).  For $\alpha < 1$ models, where the natural cluster typically includes diagonals (e.g.\ the $\{\mr{Right}, \mr{Down\text{-}Right}, \mr{Down}\}$ cluster of \cref{branching_logic_section}), the strict P99/P1 criterion fires on too few seeds; we instead rank all eight directions by median FF 2 susceptibility and find the largest low-median group whose pooled 95th percentile lies below the 5th percentile of the remaining directions, allowing arbitrary bipartitions (the generalized ``P95/P5'' criterion).  Alternative criteria (including mean-separation and gap-based variants) yield qualitatively similar partitions; see \cref{cluster_criteria_appendix} for details.

\subsubsection{Confidence intervals}
\label{confidence_intervals_section}
All confidence intervals (CI) in this paper, denoted as $\text{mean}$ $[\text{lower CI}, \text{upper CI}]$, are 95\% percentile bootstrap intervals over independent training runs estimated with B = 10,000 resamples.

\subsection{Phase 1 Streak Intervention} \label{antenna_section}

We test the claim of \cref{three_stage_section} that the phase 1 ``streak'' states play a causal role in the phase 2 to phase 3 transition.

In phase 1 the streak states are the only states outside the Down-Right cluster (see \cref{fig:antenna_states}) with non-negligible susceptibilities; the rest of these two direction classes have susceptibilities concentrated near the origin. When we estimate susceptibilities in phase 1 across our population of ten runs with $\alpha = 0.6$ we find streak state susceptibilities with a mean $\pm$ std $L^2$ norm of $1.4\times 10^{-3} \pm 7.7\times 10^{-4}$ compared to $3.6\times 10^{-4}\pm3.1\times10^{-4}$ for the non-streak states outside the Down-Right cluster. If the model's parameterization handles direction classes coherently -- consistent with the phase 3 susceptibility clusters of \cref{branching_logic_section} -- then the streak susceptibilities in phase 1 are the main channel through which states that don't receive reward influence the parameterization.  Removing this channel should therefore impair the activation of these direction classes at the phase 2 to phase 3 transition, and thereby impair the model's ability to reach phase 3 at all.

We test this conjecture as follows. We train models with the same $\alpha=0.6$ training distribution, but with the states corresponding to the phase 1 Up-Right and Down-Left streaks removed. The impact of this can be seen in \cref{fig:antenna_states}, right panel. For example, we note a reduction in training runs reaching phase 3 by checkpoint 5000 from 35.2\% to 18.0\%, and a reduction from 91.4\% to 67.0\% at the end of training. We show these reach fractions and confidence intervals in \cref{tab:phase3-reach}. As a control we train models with the mirrored states removed (i.e. mouse bottom row, cheese just above bottom right corner and mouse right wall, cheese just left of bottom right corner), resulting in no significant difference from the baseline.

\begin{table}[t]
\centering
\caption{Fraction of seeds that have reached phase 3 by checkpoint
\(T\), with 95\% bootstrap confidence intervals (10{,}000 resamples
over seeds).}
\label{tab:phase3-reach}
\begin{tabular}{lccc}
\toprule
Condition & \(n\) & \(T = 5{,}000\) & \(T = 16{,}270\) (end of training) \\
\midrule
Baseline (no ban)  & 105 & 35.2\,[26.7,\,44.8] & 91.4\,[85.7,\,96.2] \\
Control ban        & 100 & 37.0\,[28.0,\,47.0] & 89.0\,[83.0,\,95.0] \\
Streak ban         & 100 & 18.0\,[11.0,\,26.0] & 67.0\,[58.0,\,76.0] \\
\bottomrule
\end{tabular}
\end{table}
\subsection{Functional Evidence for Cluster Structure} \label{activation_steering_section}

We present two related lines of evidence for the claim that the phase 3 susceptibility clusters of \cref{branching_logic_section,alpha1_section} correspond to a functional asymmetry in the model's internal computation.  The first is a training-distribution intervention that biases the emergence of distinguished cluster directions in $\alpha = 1$ models.  The second is an activation steering experiment, presented first for the $\alpha = 1$ case and then extended to $\alpha < 1$ models.  We also report a partial-predictivity finding for $\alpha = 0.5$: the activation steering asymmetry is stronger than the susceptibility analysis alone would identify.

\subsubsection{Training-Distribution Intervention: Biasing the Distinguished Direction}
We replace the $\alpha = 1$ training distribution with the mixture
\[\Lambda^{\mr{R}}_\eta = \eta \Lambda_{\mr{uniform}} + (1 - \eta) \Lambda_{\mr{Right}},\]
where $\Lambda_{\mr{Right}}$ is the uniform distribution over initial states with the mouse cardinally right of the cheese.  Right becomes the unique distinguished direction in 60/100 seeds for $\eta=0.8$ and 77/100 seeds for $\eta=0.7$, compared to around 15\% when training with $\alpha=1$.  This demonstrates that the training distribution can bias which direction becomes distinguished.  Note that we here compute the susceptibilities using the $\alpha'=1$ uniform distribution to rule out the asymmetry being induced by the SGLD sampling process.

\subsubsection{Activation Steering for $\alpha = 1$}
To independently test the hypothesis that susceptibility clustering corresponds to different algorithms, we construct an activation steering experiment, following the activation-addition / contrastive-activation-addition line of work \citep{Turner2023ActAdd, Rimsky2024CAA} and its application to maze-solving policy networks \citep{mini2023understanding}.  The motivation is that if the agent has learned two different algorithms depending on the cardinal direction, they are likely to have different robustness to steering. Specifically, we would expect that the more specialized algorithm should be more robust to steering.

We place the mouse and the cheese on our open grid, and record the activations $\mathbf{a}^\ell$ at the output of layer $\ell$. Then we mirror the cheese around the mouse, and record the mirrored activations $\mathbf{a}_M^\ell$. We then find the smallest scale $s$ so that adding $s(\mathbf{a}^\ell_M-\mathbf{a}^\ell)$ to the activations $\mathbf{a}^\ell$ causes the highest logit not to be the logit corresponding to walking towards the cheese. We then average $s$ over every possible cheese and mouse location for each cardinal direction, restricting to locations where the model already acts optimally and where mirroring does not bring the cheese outside the grid or to a tile with a wall. Doing this, we obtain an $\overline{s}_\text{min}^{\ell,{D_c}}$ for each layer $\ell$ and cardinal direction $D_c$. We then construct a test statistic $X_i^\ell = \text{mean}_{D_c\in \text{distinguished}_i}(\overline{s}_{\text{min},i}^{\ell,D_c})-\text{mean}_{D_c\notin \text{distinguished}_i}(\overline{s}_{\text{min},i}^{\ell,D_c})$, where $i$ denotes index over independent training runs, and the distinguished directions are defined by being distinguished in the susceptibility plots. We have found that the distinguished directions most of the time are visible in all layers, with the first and the last layers being the least reliable. To identify distinguished directions, we use the FF 2 layer, defining a cardinal direction $D_c$ as being distinguished if the 99th percentile of $\widehat \chi_\ell(D_c)$ lies below the 1st percentile of $\widehat \chi_\ell(D_d)$, where $D_d$ denotes the diagonal directions. We train $101$ independent seeds, of which $80$ have 1--2 distinguished directions and $79$ yield valid measurements for all directions. Distinguished directions being more robust to steering translates to $X_i>0$.

Under the hypothesis that the distinguished directions as shown by the susceptibilities are unrelated to $\overline{s}_\text{min}^{\ell,D_c}$, $\bb E_{i} \left(X_i^\ell\right) = 0$, and we perform a one-sample two-sided $t$-test and reject the hypothesis that $\bb E_{i} \left(X_i^\ell\right) = 0$ with p-values shown in \cref{tab:ttest-steering}.  We see that starting at Conv 3, $X_i$ is positive (distinguished directions are harder to steer), and the hypothesis that $\bb E_{i} \left(X_i^\ell\right) = 0$ can be strongly rejected. If we use that we expect the distinguished directions to be more robust to steering and instead perform a one-sided t-test with $\bb E_{i} \left(X_i^\ell\right) > 0$ as the alternative hypothesis, we get even stronger $p$-values.

\begin{table}[t]
  \centering
  \caption{One-sample $t$-test of steering threshold for distinguished vs.\ non-distinguished cardinal directions ($n=79$ seeds).}
  \label{tab:ttest-steering}
  \begin{tabular}{lcccc}
    \toprule
    Layer & $t$ & $p$ (two-sided) & mean($X_i$) & $X_i > 0$ \\
    \midrule
    \texttt{Conv 1} & $-2.62$ & $1.1 \times 10^{-2}$ & -0.025 [-0.043, -0.006] & 37\% [27, 48] \\
    \texttt{Conv 2} & $\phantom{-}0.89$ & $3.7 \times 10^{-1}$ & +0.009 [-0.011, +0.029] & 54\% [43, 66] \\
    \texttt{Conv 3} & $\phantom{-}6.11$ & $3.7 \times 10^{-8}$ & +0.073 [+0.050, +0.096] & 68\% [58, 78] \\
    \texttt{FF 1}     & $\phantom{-}6.79$ & $2.0 \times 10^{-9}$ & +0.080 [+0.057, +0.102] & 72\% [62, 81] \\
    \texttt{FF 2}    & $\phantom{-}6.67$ & $3.3 \times 10^{-9}$ & +0.076 [+0.054, +0.098] & 76\% [66, 85] \\
    \bottomrule
  \end{tabular}
\end{table}

Activation steering effect sizes are larger for the right-mixture distinguished directions than for the spontaneous $\alpha=1$ case, with $\eta=0.8$ producing a mean-to-std ratio of 2.27 for the seeds where Right has been induced as the only distinguished direction, compared to 0.57 for the $\alpha=1$ models with single distinguished directions.

\subsubsection{Activation Steering for $\alpha < 1$}

We extend the activation steering experiment to $\alpha < 1$ models.  We identify the distinguished cardinal directions from the phase 3 on-distribution susceptibilities (see \cref{cluster_criteria_appendix} for analysis of the choice of the cluster-detection procedure) and compute the test statistic $X_i^\ell$ at the FF 2 layer.

For $\alpha = 0.5$ models the susceptibility-distinguished directions have significantly higher steering resistance than the non-distinguished directions, both at phase 3 start ($\mr{mean}(X) = +0.12$ $[0.10, 0.15]$, $t = 9.3$, $n = 66$) and at phase 3 end ($\mr{mean}(X) = +0.25$ $[0.21, 0.29]$, $t = 12.8$, $n = 41$). In both cases a one-sample two-sided $t$-test rejects the null hypothesis $\bb E(X_i^{\mr{FF 2}}) = 0$ at $p < 10^{-12}$.  

In addition to the binary distinguished/non-distinguished test, we compute the per-seed Spearman rank correlation between the four cardinal directions' FF 2 susceptibility ranks and their steering threshold ranks.  The mean correlation is $\ol \rho = +0.45$ $[+0.37, +0.53]$ for $\alpha = 0.5$ models at phase 3 start and $\ol \rho = +0.49$ $[+0.40, +0.58]$  at phase 3 end, and $\ol \rho = +0.31$ $[+0.20, +0.41]$ for $\alpha = 0.6$ models at phase 3 start, decaying to $\ol \rho \sim +0.14$ $[+0.04, +0.25]$ at phase 3 end.  In each condition a one-sample two-sided $t$-test on the per-seed distribution of $\rho$ rejects the null hypothesis $\bb E(\rho) = 0$ at $p < 10^{-6}$, with the exception of the $\alpha = 0.6$ condition at phase 3 end, where the test rejects at the weaker level $p < 0.01$.  The full rank analysis is presented in \cref{rank_correlation_appendix}. 

\subsubsection{Susceptibilities are not Fully Predictive for Steering Asymmetry}

For the $\alpha = 0.5$ models we make the following complicating finding: the cardinal directions Right and Down have higher steering resistance than Left and Up in nearly all seeds, including those seeds for which the susceptibility analysis does not identify Right or Down as distinguished. Among 104 matched $\alpha = 0.5$ seeds partitioned by whether the susceptibility criterion flags $\{\mr{Right}, \mr{Down}\}$, the fixed $\{\mr{Right}, \mr{Down}\}$-vs-$\{\mr{Left}, \mr{Up}\}$ test statistic is significantly positive in both subgroups (susceptibility flag $\{\mr{Right}, \mr{Down}\}$ distinguished: $\mr{mean}(X) = +0.25$ $[0.21, 0.29]$, $t = 12.82$, $n=41$; does not: $\mr{mean}(X) = +0.17$ $[0.14, 0.20]$, $t = 10.74$, $n=63$).  In both subgroups a one-sample two-sided $t$-test rejects the null hypothesis $\bb E(X_i^{\mr{FF 2}}) = 0$ at $p < 10^{-15}$.

The functional asymmetry between $\{\mr{Right}, \mr{Down}\}$ and $\{\mr{Left}, \mr{Up}\}$ is therefore only partly captured by the susceptibility analysis.  The mechanism for the persistence of this asymmetry beyond the susceptibility-flagged seeds is not clear; we discuss it in \cref{limitations_section}. 

\subsection{Phase 3 Susceptibilities Carry Phase Transition History} \label{phase2_residue_section}

We test whether phase 3 susceptibilities carry information about whether and for how long the model passed through phase 2 during training.  We exploit the $\alpha = 0.5, \gamma = 0.99$ population in which substantial fractions of training runs both visit and skip phase 2, transitioning directly from phase 1 to phase 3 in the latter case.

For each seed $i$ we compute the per-seed Down-Right discrepancy
\[\Delta_i = \ol \chi_{\mr{DR},i} - \tfrac{1}{3}\bigl(\ol \chi_{\mr{DL},i} + \ol \chi_{\mr{UL},i} + \ol \chi_{\mr{UR},i}\bigr),\]
where $\ol \chi_{D_d,i}$ is the mean on-distribution FF 2 susceptibility over states in diagonal direction class $D_d$ in seed $i$.  We measure $\Delta_i$ at the end of phase 3 and ask how it depends on features of the phase transition history, in particular the presence or absence of phase 2.  

Welch's two-sample $t$-test rejects the null hypothesis that the populations that visited and skipped phase 2 have equal expected value for $\Delta$ at the end of training ($p = 1.63 \times 10^{-4}$).  Treating the duration spent in phase 2 as a continuous covariate, the per-seed Spearman rank correlation between visit duration and $\Delta_i$ at end of training is $\rho = -0.367 $ $[-0.533, -0.184]$ ($p = 1.3 \times 10^{-3}$, n=104): seeds that visited phase 2 for longer have a more-distinguished Down-Right susceptibility cluster at end of training.  By comparison, the corresponding correlation at the first phase 3 checkpoint per seed is much weaker: $\rho = -0.21$ $[-0.400, -0.003]$ ($p = 3.7 \times 10^{-2}$, $n = 104$).

We attribute the presence of this distinction at the end of training primarily to the timing of the transition into phase 3 (we will discuss susceptibility decay within phase 3 in \cref{within_optimal_section} below). Models that enter phase 3 later in training have spent fewer steps in phase 3 by the end of training and have therefore experienced less of the decay. We test this by repeating the $t$ and Spearman tests described above at a checkpoint 6000 gradient steps after entering phase 3 in each model.  The $t$-test is no longer significant ($t = -1.46$, $p = 0.15$, $n = 104$), and the Spearman correlation between phase 2 duration and $\Delta_i$ is reduced to $\rho = -0.22$ $[-0.40, -0.02]$ ($p = 2.7 \times 10^{-2}$).

We return to the question of whether any phase-2-specific structure persists once phase 3 duration is held fixed, and what this says about the mixture-of-algorithms hypothesis presented in \cref{branching_logic_section}, in \cref{limitations_section}.

\begin{remark}
The same qualitative pattern holds when susceptibilities are evaluated off-distribution at $\alpha' = 1$: the Spearman rank correlation between phase 2 duration and $\Delta_i$ at end of training remains negative and significant, attenuated by the smaller sample of off-distribution evaluations.
\end{remark}

\begin{remark} \label{steering_skip_visit_remark}
The naive residue reading of \cref{branching_logic_section} would predict that seeds visiting phase 2 retain more of the $\{\mr{Right}, \mr{Down}\}$-vs-$\{\mr{Left}, \mr{Up}\}$ steering asymmetry $X_i$ of \cref{activation_steering_section} than seeds skipping it.  We find the opposite: at the first phase 3 checkpoint the skipped population has $\mr{mean}(X) = +0.193$ $[+0.161, +0.226]$ ($n = 25$) versus $+0.111$ $[+0.084, +0.137]$ for visited ($n = 79$), Welch $t = -3.78$, $p = 3.77 \times 10^{-4}$; the effect strengthens at end of training (ckpt $16270$) to $+0.297$ $[+0.248, +0.343]$ ($n = 25$) versus $+0.176$ $[+0.149, +0.202]$ ($n = 79$), $t = -4.23$, $p = 1.36 \times 10^{-4}$.  We return to the question of the sign in \cref{limitations_section}.
\end{remark}

\subsection{Direction-Conditioned Posterior Regret} \label{direction_regret_section}

We present a parameter-space counterpart to the activation-space measurement of \cref{activation_steering_section}, providing a further line of evidence for the functional asymmetry claimed in \cref{branching_logic_section}.  For each cardinal direction $D_c \in \{\mr{Right}, \mr{Left}, \mr{Up}, \mr{Down}\}$, let $\Lambda_{D_c}$ denote the uniform distribution on the set of initial states in which the mouse is cardinally aligned with the cheese in direction $D_c$, and let $G_{D_c}$ denote the expected regret associated to the initial state distribution $\Lambda_{D_c}$, as in \cref{markov_section}.  We consider the posterior expectation
\[\bb E_{\mr{post}}(G_{D_c}) = \frac{1}{T} \sum_{j=1}^{T} G_{D_c}(y_j'),\]
where $y_1', \ldots, y_T'$ are SGLD samples from the full-model posterior associated to the regret $G$ (and not to the direction-restricted regret $G_{D_c}$, for which SGLD sampling would collapse onto a degenerate policy that always moves in direction $D_c$).  Define the asymmetry statistic
\[A(\wstar) = \mr{std}_{D_c} \bb E_{\mr{post}}(G_{D_c}),\]
the standard deviation across cardinal directions.  $A$ is a first-moment measure of phase 3 asymmetry, complementary to the second-moment signal provided by susceptibilities.

On a per-seed level, we find a strong correspondence between $\ol \chi_{D_c}$ and $\bb E_{\mr{post}}(G_{D_c})$ early in phase 3, with $\ol \rho = +0.77$ $[+0.67, +0.86]$ ($n=20$) for $\alpha = 1$ and $\ol \rho = +0.87$ $[+0.81, +0.92]$ ($n=30$) for $\alpha = 0.6$.  Both have $p < 10^{-11}$, being additional evidence for the directions in susceptibilities corresponding to real asymmetries in the model. We cover correlations between $A(w^*)$ and other measures below in \cref{within_optimal_section}.

\subsection{Coordinated Simplification Within Phase 3} \label{within_optimal_section}

We test the claim of \cref{three_stage_section} that during phase 3 the parameterization progressively simplifies, by exhibiting a coordinated decline across four quantities that probe the regret landscape from distinct angles:
\begin{enumerate}
    \item The \emph{full-model LLC estimate} $\widehat \lambda(\wstar)$, defined in \cref{LLC_section}.
    \item The \emph{susceptibility cluster centroid distance}: the distance between the centroids of the two susceptibility clusters identified at phase 3 onset in \cref{branching_logic_section}, normalized by the pooled standard deviation of the susceptibilities within clusters.  This directly tracks the asymmetric branching hypothesized in \cref{branching_logic_section}.
    \item The \emph{Hessian trace} $\mr{tr}(\nabla^2 G(\wstar))$ measures the mean curvature of the regret landscape at the approximate local minimum.  We estimate it using Hutchinson's method: for random vectors $v$ with $\bb E[vv^T] = I$ one has $\mr{tr}(\nabla^2 G(\wstar)) = \bb E_v[v^T \nabla^2 G(\wstar) v]$, and each sample $v^T \nabla^2 G(\wstar) v$ may be computed without calculating $\nabla^2 G(\wstar)$ explicitly.  See \cref{hessian_appendix} for further details and estimation hyperparameter selection.
    \item The \emph{direction-conditioned posterior regret asymmetry} $A(\wstar) = \mr{std}_{D_c} \bb E_{\mr{post}}(G_{D_c})$ introduced in \cref{direction_regret_section}.
\end{enumerate}
These quantities fall into two pairs.  The LLC estimate and the Hessian trace are scalar complexity measures at the local minimum; their joint decline reflects an overall simplification of the parameterization.  The cluster centroid distance and the asymmetry statistic $A(\wstar)$ depend on a choice of partition of state space; their joint decline reflects a simultaneous rebalancing of the parameterization towards greater symmetry across direction classes.

The coordinated decline is present across the $\alpha = 0.6$, $\alpha = 0.5$ and $\alpha = 1$ hyperparameter regimes.  The $\alpha = 1$ case in particular is worth noting in the context of \cref{branching_logic_section}.  The fact that the simplification is still measurable across all complexity measures tells us that we are not only measuring the simplification of structures developed during intermediate phases, but also the reduction of the spontaneous asymmetry inherited from initialization as discussed in \cref{init_sensitivity_section}.

In \cref{fig:phase3_metrics} we show how the four metrics, together with the raw cluster distance, evolve over phase 3 for a single representative training run with $\alpha = 0.6$.  All metrics decline between the start and end of phase 3, though most are not monotonically decreasing.  To evaluate the degree to which the metrics correlate with one another and with the checkpoint index, we perform a Spearman rank correlation analysis between each pair of metrics across the ten-run $\alpha = 0.6$ population of \cref{three_stage_section} (data for these runs is collected in \cref{extra_plots_appendix}), reporting the resulting $\ol \rho$ values in \cref{tab:phase3_metrics}.  As expected from the figure, the LLC, raw cluster distance, and checkpoint index are all strongly correlated with each other ($|\ol \rho| > 0.9$).  The Hessian trace and the normalized cluster distance have noisier correlations with the other metrics ($|\ol \rho| \approx 0.5$), and the correlation between the Hessian trace and the normalized cluster distance is not significant. The direction-conditioned posterior regret asymmetry lands somewhere in-between, with significant $|\ol\rho|$ in the 0.5-0.8 range, with the exception of the correlation with the Hessian trace, which is insignificant. This is consistent with our hypothesis that the algorithm is being simplified throughout phase 3 by erasing the branching logic, with the Hessian trace being the most noisy measures of this simplification, leading to non-significant correlations with other noisy measures like normalized cluster distance and direction-conditioned posterior regret asymmetry. 

\begin{figure}
    \centering
    \includegraphics[width=0.99\linewidth]{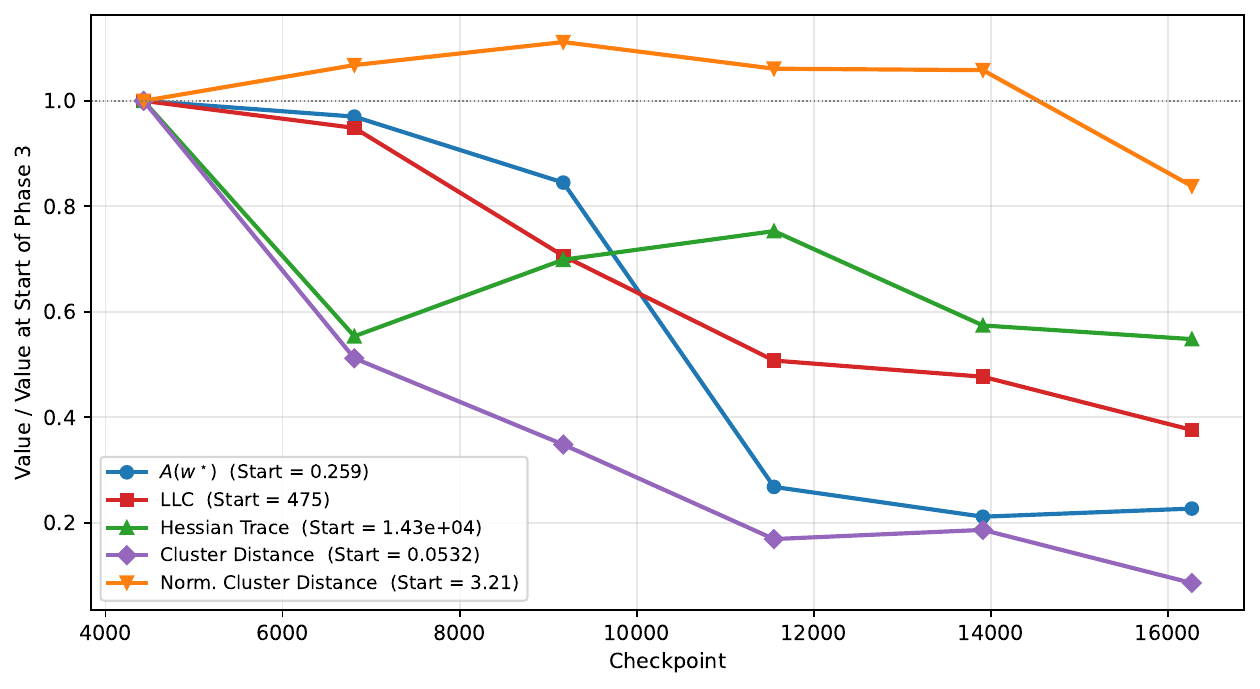}
    \caption{Comparison of the behavior of the four metrics and the unnormalized cluster distance over the course of phase 3 for the representative model trained with $\alpha = 0.6$ depicted in \cref{fig:alpha_0.6_single_run}.}
    \label{fig:phase3_metrics}
\end{figure}

\begin{table}[h]
    \centering
\resizebox{\textwidth}{!}{%\
\begin{tabular}{l|ccccc}
\toprule
 & $\widehat \lambda$ & $\widehat{\mr{Tr}(H)}$ & $\delta$ & $\delta/\sigma$ & $A(w^\star)$ \\
\midrule
Checkpoint & $-0.99\,[-1.00, -0.97]$ & $-0.49\,[-0.74, -0.19]$ & $-0.96\,[-0.98, -0.94]$ & $-0.52\,[-0.75, -0.22]$ & $-0.79\,[-0.89, -0.66]$ \\
$\widehat \lambda$ &  & $+0.49\,[+0.17, +0.75]$ & $+0.95\,[+0.93, +0.98]$ & $+0.54\,[+0.25, +0.77]$ & $+0.77\,[+0.64, +0.87]$ \\
$\widehat{\mr{Tr}(H)}$ &  &  & $+0.44\,[+0.14, +0.70]$ & $+0.23\,[-0.08, +0.52]$ & $+0.27\,[-0.05, +0.57]$ \\
$\delta$ &  &  &  & $+0.59\,[+0.32, +0.80]$ & $+0.75\,[+0.60, +0.87]$ \\
$\delta/\sigma$ &  &  &  &  & $+0.51\,[+0.35, +0.68]$ \\
\bottomrule
\end{tabular}%
}
    \caption{Mean Spearman rank correlation $\ol \rho$ between pairs of quantities varying within phase 3, computed across ten $\alpha = 0.6$ models.  The four quantities are $\widehat \lambda$ (the LLC estimate), $\widehat{\mr{Tr}(H)}$ (the Hessian trace estimate), $\delta$ (the susceptibility cluster centroid distance of \cref{branching_logic_section}), and $\delta/\sigma$ ($\delta$ normalized by the susceptibility standard deviation) and $A(w^*)$ (the direction-conditioned posterior regret asymmetry).  Each cell reports the mean correlation followed by the bracketed $95\%$ bootstrap confidence interval.  The first row gives the correlation of each quantity with checkpoint index.}
    \label{tab:phase3_metrics}
\end{table}

\begin{remark}
This reduction-of-asymmetry story is complicated by an independent observation: the declining complexity measures reported above, including those that specifically track parameterization asymmetry, do not predict a corresponding decrease in the \emph{activation steering asymmetry} of \cref{activation_steering_section}.  Three observations converge on this tension:
\begin{enumerate}
    \item The susceptibility-based criterion of \cref{activation_steering_section} for detecting activation-steering-distinguished directions becomes unreliable by the end of phase 3, coincident with the decline of the cluster centroid distance.  In the $\alpha = 1$ models, for example, the criterion identifies distinguished directions in only $2$ of $97$ seeds at the end of training compared with $80$ of $101$ seeds at the reference checkpoint early in phase 3.
    \item The activation steering asymmetry of \cref{activation_steering_section} nevertheless persists: directions distinguished early in phase 3 continue to exhibit higher steering resistance than the remaining directions at the end of training.
    \item Cross-checkpoint steering predictions are stronger than same-checkpoint ones: using directions distinguished by the susceptibility criterion early in phase 3 to predict steering resistance at the end of phase 3 yields a larger effect size than using the directions distinguished at the end of phase 3 itself.  For the $\alpha = 1$ models at the FF 2 layer, the cross-checkpoint test gives $\mr{mean}(X)=+0.127$ $[+0.099, +0.157]$, $t = 8.44$ ($n = 80$), compared with $\mr{mean}(X)=+0.076$ $[+0.054, +0.098]$, $t = 6.67$ ($n = 79$) for the same-checkpoint test early in phase 3 and $\mr{mean}(X)=+0.024$ $[-0.005, +0.054]$, $t = 1.57$ ($n = 67$, non-significant) for the same-checkpoint test at the end of phase 3 (using the percentile cluster detection criterion with a threshold relaxed from P99/P1 to P64/P36 to retain enough seeds, see \cref{cluster_criteria_appendix}). 
\end{enumerate}
Two of these measures of parameterization asymmetry -- the susceptibility cluster structure and the direction-conditioned posterior regret -- decline over phase 3, but the activation-steering asymmetry does not.  This suggests that \emph{parameterization asymmetry} is a more multi-dimensional feature than the susceptibility analysis alone would indicate: activation steering probes a component of it that does not decrease within the optimal phase.  We discuss this tension further in \cref{limitations_section}.
\end{remark}

\section{Discussion} \label{discussion_section}

In this section we step back and discuss what the susceptibility framework does and does not let us conclude, both about the toy environment that has been the subject of our analysis and in other more general contexts.  We separate methodological limitations of susceptibility-based interpretability that we expect to apply beyond the present setting (\cref{limitations_general_section}) from observations specific to the environment under consideration in our experiments that our analysis left unresolved (\cref{limitations_section}); we then list outstanding questions we did not pursue (\cref{outstanding_questions_section}), and close with implications for applications to interpretability for RLHF (\cref{rlhf_section}).

\subsection{Limitations of our Approach} \label{limitations_general_section}
First, let us discuss some general limitations associated with susceptibility estimation as an interpretability tool in reinforcement learning.
\begin{enumerate}
    \item \emph{Applicability of generalized Bayesian inference.}  The approach developed in \citet{RL1} presumes that policy development is mediated by generalized Bayesian inference for the regret function.  In practice one does not learn in this way: the training trajectories are in reality generated by reinforcement learning algorithms such as -- in our case -- policy gradient methods.  One should then ask whether the theoretical underpinnings of susceptibilities are applicable.  This is an inherited limitation: SLT-based interpretability in supervised learning faces the same Bayes-vs-SGD gap (see e.g.\ \citep{HitchcockHoogland} for a discussion), with an empirical track record that we extend rather than resolve here.  For some discussion of this point we refer to \citep[Section 7.3]{RL1}.
    \item \emph{Assumptions on the environment.}  Even if one accepts the generalized Bayesian context as informative the key results of singular learning theory rely on a technical condition on the transition functions of the Markov model.  One requires that optimal policies pursue optimal trajectories with probability one.  This is automatic when the transition function is deterministic, as is the case in our environment here.  Nonetheless reinforcement learning problems whose optimal policies obtain maximal reward only in expectation -- which includes most stochastic environments of practical interest -- fall outside the present framework and would require a different theoretical treatment that we do not provide.
    \item \emph{Generalization beyond toy environments.}  This leads to a natural objection.  All of our experiments are on the cheese-in-the-corner environment of \citet{RL1}, chosen to exhibit a clear stagewise phase development structure.  Whether the developmental phenomena and the cluster structure we observe generalize to richer RL settings (for instance those with continuous or much larger state spaces, partial observability) is an empirical question we don't address.  Some of the methodological choices we have made, for instance the ability to evaluate regret exactly rather than estimate it through on- or off-policy sampling, will not apply in more complex settings.
    \item \emph{Choice of observables and perturbations.}  A susceptibility measures the response of the posterior expectation of a specified observable (e.g. loss on a component) to a specified family of distributional perturbations.  Our interpretability claims are made relative to the family of initial state likelihood perturbations chosen here; a different choice, for instance of expected rewards, would probe a different aspect of parameter geometry.  Even the choice to study per-initial-state perturbations was only possible robustly in our environment because the number of states is small enough that we may consider all such perturbations. This will not be true more generally; one would need, for instance, to consider samples from a choice of distribution over initial states.
    \item \emph{Sensitivity of SGLD-based estimation.}  The susceptibility estimator inherits the hyperparameter sensitivities of SGLD-based LLC estimation.  We use the hyperparameter regime developed using the approach of \cite[Appendix A]{RL1}, optimized for well-behaved LLC estimation, but we have not systematically explored the robustness of our cluster identifications to these choices.
    \item \emph{Computational cost.}  Estimating a susceptibility matrix at a single checkpoint requires independent SGLD samples for each perturbation; populations of seeds and checkpoints multiply this cost.  The dominant per-sample cost in the RL setting is the regret estimation rather than the SGLD update itself, which contrasts with susceptibility estimation in language models, where covariances are taken against the per-token loss and each sample is essentially a forward pass.  Recent work in this latter setting has scaled susceptibility estimation to LLMs at the billion-parameter scale \citep{SpectroscopyAtScale}; whether comparable scaling is achievable for RL depends on how regret -- or a tractable surrogate -- is estimated.  We discuss this in \cref{rlhf_section} below.
\end{enumerate}

\subsection{Unexplained Observations} \label{limitations_section}

In this section we discuss specific findings in the cheese-in-the-corner environment that our analysis left unresolved, including situations in which the susceptibility analysis and other evidence are in tension, and features in other measurements that susceptibilities did not detect.
\begin{enumerate}
    \item \emph{Residue of phase 2.}  The end-of-training skip-versus-visit susceptibility difference (\cref{phase2_residue_section}) is consistent with phase 3 duration being the proximate driver, with no separable phase-2-specific signal that we can detect.  This complicates the mixture-of-algorithms reading of \cref{branching_logic_section}: if the asymmetry visible at phase 3 onset reflected a genuinely persistent ``simpler subroutine'' inherited from phase 2, we would expect some signature beyond the timing-mediated effect we observe.  Two readings are compatible with the data: that the algorithmic asymmetry is real but is rapidly erased within phase 3, or that the cluster structure at phase 3 onset is a transient feature of a freshly-completed phase transition rather than a persistent structural property.  Our measurements do not distinguish them.
    
    Activation steering on the same population (\cref{steering_skip_visit_remark}) gives a finding with sign opposite to the naive residue prediction: $\{\mr{Right}, \mr{Down}\}$ states are \emph{harder} to steer in seeds that skipped phase 2 than in those that visited.
    
    \item \emph{Direction asymmetry beyond susceptibilities.}  The $\{\mr{Right}, \mr{Down}\}$ vs $\{\mr{Left}, \mr{Up}\}$ activation-steering asymmetry in $\alpha = 0.5$ seeds is retained even in seeds where the susceptibility criterion does not flag $\{\mr{R}, \mr{D}\}$ as distinguished (\cref{activation_steering_section}).  The susceptibility analysis only partially captures whatever feature of parameterization the steering test is detecting.
    
    \item \emph{Phase 3 progression.}  Our data does not distinguish three candidate accounts of phase 3 development: complexity equalization between internal subroutines, reorganization of the parameterization without complexity change, and proportional decline of all measures below the detection threshold.  Relatedly, the four within-phase-3 quantities of \cref{within_optimal_section} all decline but they are not tightly coupled, and we do not have a framework that predicts which of them should covary.
\end{enumerate}
Several of these questions point in the same direction: toward the nature of the parameterization asymmetry probed by activation steering but not resolved by the susceptibility analysis.  In \cref{within_optimal_section} we posited that the asymmetric representation of the model's behavior in different classes of states comprises multiple independent dimensions that do not always covary, but our techniques are not sufficient to test this supposition.

\subsection{Outstanding Questions} \label{outstanding_questions_section}
We now present questions that we feel are natural but that we have not addressed yet in the current work.
\begin{enumerate}
    \item \emph{Other structural features in susceptibilities.}  The susceptibility data across phases contain structure we did not pursue systematically: for instance we frequently see susceptibilities forming linear ``streaks'' like those of \cref{antenna_section} but associated to other cheese locations: there is a structural relationship between the similarity of states and the susceptibilities whose features we did not address.  We also don't address the meaning of the left and right singular vectors of susceptibility matrices.  For instance we consistently see over 90\% of the variance explained by the first principal component, but the significance of the first right singular vector in state space remains unclear.
    \item \emph{Relationship to existing interpretability tools.}  How do the parameter-space clusters identified by susceptibility analysis relate to features surfaced by activation-space tools such as sparse autoencoders \citep{Bricken2023Monosemanticity, Huben2024SAE, Templeton2024Scaling} (after the sparse-dictionary approach of \citet{Arora2018Polysemy}) or linear probes?
    \item \emph{Patterning interventions.}  Following the patterning paradigm of \citet{Patterning}, susceptibilities can be used to design data-side interventions intended to reshape training.  We explored this informally for cheese-in-the-corner by computing a phase 3 cluster-separation direction in observable space, applying the susceptibility pseudo-inverse to obtain the corresponding optimal perturbation of the initial-state distribution, and training models with the perturbed distribution, but did not obtain clear enough results to pursue more systematically in the present work.  Whether this style of intervention can systematically reshape phase 3 development either in our environment or more generally remains an open question.
\end{enumerate}

\subsection{Susceptibilities for RLHF} \label{rlhf_section}
Susceptibilities have been used for LLC interpretability using techniques developed in the supervised setting \citep{Susceptibilities, Embryology, Spectroscopy} and scaled to the billion-parameter regime \citep{SpectroscopyAtScale}.  The approach taken therein studies cluster structure in weight-restriction space interpretable in terms of features of the data distribution.  RLHF post-training of large language models is a contextual-bandit reinforcement learning problem -- a prompt is sampled, the policy emits a response, a reward model returns a scalar signal used to update the policy -- and the susceptibility framework we develop in this paper applies to the optimization trajectory of the parameterized policy under perturbations of either the prompt distribution or the per-prompt rewards output by the model.

The alignment relevance follows from the loss-complexity tradeoff that drives internal model selection in singular learning theory, the implications of which for alignment are discussed in \citet{YAWYE} and \citet[\S 7.1]{RL1}.

In a regime where many generalizations are compatible with the training data, the (generalized) Bayesian posterior may favour a simpler, less accurate solution over a more accurate but more complex one.  Reward models in RLHF are trained on preference datasets that are small relative to pre-training corpora, which is precisely the regime in which this tradeoff is most salient.  The tendency of reward models to latch onto simple features such as response length and list formatting fits this account, as do higher-level behaviors such as sycophancy and reward hacking.  At the simplest level, susceptibility analysis applied to a reward model offers the same kind of cluster-based interpretability we have used here: a diagnostic for which features of preference data the optimization process treats in a unified way.  Going further, the patterning paradigm of \citet{Patterning} uses susceptibility-identified directions as targets for data-side interventions; transferring this to RLHF turns the question into one of which observables of the reward model one should steer.  Designing such observables is an important parallel ingredient that we leave to future work.

In the language-model setting, susceptibilities are estimated as covariances against the per-token loss $\ell_{xy}(w) = -\log p(y | x, w)$ -- where $x,y$ are input and output variables respectively -- costing a forward pass per SGLD sample.  On the other hand, in the RL setting they are covariances against the regret $G(w)$, whose estimation requires policy rollouts and substantially increases the per-sample cost.  Scaling RLHF susceptibilities will therefore require cheaper regret surrogates -- for instance learned value functions or per-token-loss-like reformulations -- together with empirical decisions about the on-distribution versus off-distribution choice analogous to the one we examine in \cref{on_off_appendix}.  

\subsection{Conclusions} \label{conclusions_section}

We have seen empirically that susceptibilities reveal developmentally dependent structures in the parameterization of deep RL agents (asymmetric parameterizations, earlier-phase-dependent structure, and structural development within the optimal phase) that are invisible from analysis of the policy itself.  The independent agreement of multiple measurement tools (including activation-steering experiments and direction-conditioned posterior regrets) with the structures identified by susceptibilities establishes that this parameter-space structure is functionally meaningful and not an artefact of the estimation process.

\begin{ack}
This project is funded by the Advanced Research + Invention Agency (ARIA).
\end{ack}

\bibliographystyle{abbrvnat}
\bibliography{RL.bib}

@misc{RL1,
      title={Stagewise Reinforcement Learning and the Geometry of the Regret Landscape},
      author={Chris Elliott and Einar Urdshals and David Quarel and Matthew Farrugia-Roberts and Daniel Murfet},
      year={2026},
      eprint={2601.07524},
      archivePrefix={arXiv},
      primaryClass={cs.LG},
      url={https://arxiv.org/abs/2601.07524},
}

@misc{massenkoffmccrory2026labor,
 author = {Maxim Massenkoff and Peter McCrory},
 title = {Labor market impacts of AI: A new measure and early evidence},
 date = {2026-03-05},
 year = {2026},
 url = {https://www.anthropic.com/research/labor-market-impacts},
}

@article{Dayan,
  title={Improving Generalization for Temporal Difference Learning: The Successor Representation},
  author={Dayan, Peter},
  journal={Neural Computation},
  volume={5},
  number={4},
  pages={613--624},
  year={1993},
  publisher={MIT Press}
}

@misc{mini2023understanding,
      title={Understanding and Controlling a Maze-Solving Policy Network},
      author={Ulisse Mini and Peli Grietzer and Mrinank Sharma and Austin Meek and Monte MacDiarmid and Alexander Matt Turner},
      year={2023},
      eprint={2310.08043},
      archivePrefix={arXiv},
      primaryClass={cs.LG},
      url={https://arxiv.org/abs/2310.08043},
}

@book{WatanabeGrey,
  title={Algebraic geometry and statistical learning theory},
  author={Watanabe, Sumio},
  volume={25},
  year={2009},
  series={Cambridge Monographs on Applied and Computational Mathematics},
  publisher={Cambridge University Press}
}

@book{WatanabeGreen,
  title={Mathematical theory of {B}ayesian statistics},
  author={Watanabe, Sumio},
  year={2018},
  publisher={Chapman and Hall/CRC}
}

@article{WatanabeWBIC,
  title={A widely applicable {B}ayesian information criterion},
  author={Watanabe, Sumio},
  journal={The Journal of Machine Learning Research},
  volume={14},
  number={1},
  pages={867--897},
  year={2013},
  publisher={JMLR. org}
}

@inproceedings{LLC,
title={The Local Learning Coefficient: A Singularity-Aware Complexity Measure},
author={Edmund Lau and Zach Furman and George Wang and Daniel Murfet and Susan Wei},
booktitle={The 28th International Conference on Artificial Intelligence and Statistics},
year={2025},
url={https://openreview.net/forum?id=1av51ZlsuL}
}

@inproceedings{Wang,
title={Differentiation and Specialization of Attention Heads via the Refined Local Learning Coefficient},
author={George Wang and Jesse Hoogland and Stan van Wingerden and Zach Furman and Daniel Murfet},
booktitle={The Thirteenth International Conference on Learning Representations},
year={2025},
url={https://openreview.net/forum?id=SUc1UOWndp}
}

@inproceedings{Misgen,
  title={Mitigating Goal Misgeneralization via Minimax Regret},
  year = {2025},
  author={Abdel Sadek, Karim
          and Farrugia-Roberts, Matthew
          and Erlebach, Hannah
          and de Witt, Christian Schroeder
          and Krueger, David
          and Anwar, Usman
          and Dennis, Michael D},
  booktitle={Reinforcement Learning Conference}
}

@inproceedings{WellingTeh,
  title={Bayesian learning via stochastic gradient {L}angevin dynamics},
  author={Welling, Max and Teh, Yee W},
  booktitle={Proceedings of the 28th international conference on machine learning (ICML-11)},
  pages={681--688},
  year={2011}
}

@article{HitchcockHoogland,
  title={From Global to Local: A Scalable Benchmark for Local Posterior Sampling},
  author={Hitchcock, Rohan and Hoogland, Jesse},
 journal = {arXiv preprint 2507.21449},
  year={2025}
}

@misc{Susceptibilities,
      title={Structural Inference: Interpreting Small Language Models with Susceptibilities},
      author={Garrett Baker and George Wang and Jesse Hoogland and Daniel Murfet},
      year={2025},
      eprint={2504.18274},
      archivePrefix={arXiv},
      primaryClass={cs.LG},
      url={https://arxiv.org/abs/2504.18274},
}

@misc{Embryology,
  title         = {Embryology of a Language Model},
  author        = {George Wang and Garrett Baker and Andrew Gordon and Daniel Murfet},
  year          = {2025},
  eprint        = {2508.00331},
  archivePrefix = {arXiv},
  primaryClass  = {cs.LG},
  url           = {https://arxiv.org/abs/2508.00331},
}

@misc{Spectroscopy,
  title         = {Towards Spectroscopy: Susceptibility Clusters in Language Models},
  author        = {Andrew Gordon and Garrett Baker and George Wang and William Snell and Stan van Wingerden and Daniel Murfet},
  year          = {2026},
  eprint        = {2601.12703},
  archivePrefix = {arXiv},
  primaryClass  = {cs.LG},
  url           = {https://arxiv.org/abs/2601.12703},
}

@misc{SpectroscopyAtScale,
  title         = {Spectroscopy at Scale: Finding Interpretable Structure in Pythia-1.4B},
  author        = {Daniel Murfet and Andrew Gordon and Max Adam and George Wang and Jesse Hoogland and Garrett Baker and William Snell and Stan van Wingerden and Adam Newgas and Billy Snikkers and Rohan Hitchcock},
  year          = {2026},
  howpublished  = {\url{https://timaeus.co/research/2026-04-21-spectroscopy-main}},
  url           = {https://timaeus.co/research/2026-04-21-spectroscopy-main},
}

@misc{Patterning,
  title         = {Patterning is Dual to Interpretability: Shaping Neural Network Development with Susceptibilities},
  author        = {George Wang and Daniel Murfet},
  year          = {2026},
  eprint        = {2601.13548},
  archivePrefix = {arXiv},
  primaryClass  = {cs.LG},
  url           = {https://arxiv.org/abs/2601.13548},
}

@misc{YAWYE,
      title={You Are What You Eat--{AI} Alignment Requires Understanding How Data Shapes Structure and Generalisation},
      author={Simon Pepin Lehalleur and Jesse Hoogland and Matthew Farrugia-Roberts and Susan Wei and Alexander Gietelink Oldenziel and George Wang and Liam Carroll and Daniel Murfet},
      year={2025},
      eprint={2502.05475},
      archivePrefix={arXiv},
      primaryClass={cs.LG},
      url={https://arxiv.org/abs/2502.05475},
}

@article{BIF,
  title={Bayesian Influence Functions for {H}essian--Free Data Attribution},
  author={Philipp Alexander Kreer and Wilson Wu and Maxwell Adam and Zach Furman and Jesse Hoogland},
  journal={ArXiv},
  year={2025},
  volume={abs/2509.26544},
  url={https://api.semanticscholar.org/CorpusID:281681466}
}

@misc{IMPALA,
      title={IMPALA: Scalable Distributed Deep-RL with Importance Weighted Actor-Learner Architectures}, 
      author={Lasse Espeholt and Hubert Soyer and Remi Munos and Karen Simonyan and Volodymir Mnih and Tom Ward and Yotam Doron and Vlad Firoiu and Tim Harley and Iain Dunning and Shane Legg and Koray Kavukcuoglu},
      year={2018},
      eprint={1802.01561},
      archivePrefix={arXiv},
      primaryClass={cs.LG},
      url={https://arxiv.org/abs/1802.01561}, 
}

@article{
hoogland2024developmental,
title={Loss Landscape Degeneracy and Stagewise Development in Transformers},
author={Jesse Hoogland and George Wang and Matthew Farrugia-Roberts and Liam Carroll and Susan Wei and Daniel Murfet},
journal={Transactions on Machine Learning Research},
issn={2835-8856},
year={2025},
url={https://openreview.net/forum?id=45qJyBG8Oj},
note={}
}

@article{Li_Chen_Carlson_Carin_2016,
    title={Preconditioned Stochastic Gradient Langevin Dynamics for Deep Neural Networks},
    volume={30},
    url={https://ojs.aaai.org/index.php/AAAI/article/view/10200},
    DOI={10.1609/aaai.v30i1.10200},
    number={1},
    journal={Proceedings of the AAAI Conference on Artificial Intelligence},
    author={Li, Chunyuan and Chen, Changyou and Carlson, David and Carin, Lawrence},
    year={2016},
    month={Feb.}
}

@misc{Turner2023ActAdd,
  title         = {Steering Language Models With Activation Engineering},
  author        = {Alexander Matt Turner and Lisa Thiergart and Gavin Leech and David Udell and Juan J. Vazquez and Ulisse Mini and Monte MacDiarmid},
  year          = {2023},
  eprint        = {2308.10248},
  archivePrefix = {arXiv},
  primaryClass  = {cs.CL},
  url           = {https://arxiv.org/abs/2308.10248},
}

@inproceedings{Rimsky2024CAA,
  title     = {Steering {Llama} 2 via Contrastive Activation Addition},
  author    = {Rimsky, Nina and Gabrieli, Nick and Schulz, Julian and Tong, Meg and Hubinger, Evan and Turner, Alexander},
  booktitle = {Proceedings of the 62nd Annual Meeting of the Association for Computational Linguistics (Volume 1: Long Papers)},
  year      = {2024},
  address   = {Bangkok, Thailand},
  publisher = {Association for Computational Linguistics},
  pages     = {15504--15522},
  doi       = {10.18653/v1/2024.acl-long.828},
  url       = {https://aclanthology.org/2024.acl-long.828/},
  eprint    = {2312.06681},
  archivePrefix = {arXiv},
  primaryClass  = {cs.CL},
}

@misc{murfet2025programs,
  title         = {Programs as Singularities},
  author        = {Daniel Murfet and Will Troiani},
  year          = {2025},
  eprint        = {2504.08075},
  archivePrefix = {arXiv},
  primaryClass  = {cs.LO},
  url           = {https://arxiv.org/abs/2504.08075},
}

@article{McGrath2022Chess,
  title     = {Acquisition of chess knowledge in {AlphaZero}},
  author    = {McGrath, Thomas and Kapishnikov, Andrei and Toma{\v{s}}ev, Nenad and Pearce, Adam and Wattenberg, Martin and Hassabis, Demis and Kim, Been and Paquet, Ulrich and Kramnik, Vladimir},
  journal   = {Proceedings of the National Academy of Sciences},
  volume    = {119},
  number    = {47},
  pages     = {e2206625119},
  year      = {2022},
  publisher = {National Academy of Sciences},
  doi       = {10.1073/pnas.2206625119},
  url       = {https://www.pnas.org/doi/10.1073/pnas.2206625119},
  eprint    = {2111.09259},
  archivePrefix = {arXiv},
  primaryClass  = {cs.LG},
}

@misc{Bricken2023Monosemanticity,
  title        = {Towards Monosemanticity: Decomposing Language Models With Dictionary Learning},
  author       = {Trenton Bricken and Adly Templeton and Joshua Batson and Brian Chen and Adam Jermyn and Tom Conerly and Nick Turner and Cem Anil and Carson Denison and Amanda Askell and Robert Lasenby and Yifan Wu and Shauna Kravec and Nicholas Schiefer and Tim Maxwell and Nicholas Joseph and Zac Hatfield-Dodds and Alex Tamkin and Karina Nguyen and Brayden McLean and Josiah E. Burke and Tristan Hume and Shan Carter and Tom Henighan and Christopher Olah},
  year         = {2023},
  howpublished = {\url{https://transformer-circuits.pub/2023/monosemantic-features/index.html}},
  note         = {Transformer Circuits Thread},
}

@inproceedings{Huben2024SAE,
  title     = {Sparse Autoencoders Find Highly Interpretable Features in Language Models},
  author    = {Huben, Robert and Cunningham, Hoagy and Smith, Logan Riggs and Ewart, Aidan and Sharkey, Lee},
  booktitle = {The Twelfth International Conference on Learning Representations},
  year      = {2024},
  url       = {https://openreview.net/forum?id=F76bwRSLeK},
  eprint    = {2309.08600},
  archivePrefix = {arXiv},
  primaryClass  = {cs.LG},
}

@misc{Templeton2024Scaling,
  title        = {Scaling Monosemanticity: Extracting Interpretable Features from {Claude} 3 {Sonnet}},
  author       = {Adly Templeton and Tom Conerly and Jonathan Marcus and Jack Lindsey and Trenton Bricken and Brian Chen and Adam Pearce and Craig Citro and Emmanuel Ameisen and Andy Jones and Hoagy Cunningham and Nicholas L. Turner and Callum McDougall and Monte MacDiarmid and C. Daniel Freeman and Theodore R. Sumers and Edward Rees and Joshua Batson and Adam Jermyn and Shan Carter and Chris Olah and Tom Henighan},
  year         = {2024},
  howpublished = {\url{https://transformer-circuits.pub/2024/scaling-monosemanticity/index.html}},
  note         = {Transformer Circuits Thread},
}

@article{Arora2018Polysemy,
  title     = {Linear Algebraic Structure of Word Senses, with Applications to Polysemy},
  author    = {Arora, Sanjeev and Li, Yuanzhi and Liang, Yingyu and Ma, Tengyu and Risteski, Andrej},
  journal   = {Transactions of the Association for Computational Linguistics},
  volume    = {6},
  pages     = {483--495},
  year      = {2018},
  publisher = {MIT Press},
  doi       = {10.1162/tacl_a_00034},
  url       = {https://aclanthology.org/Q18-1034/},
  eprint    = {1601.03764},
  archivePrefix = {arXiv},
  primaryClass  = {cs.CL},
}

@article{Williams1992REINFORCE,
  title     = {Simple Statistical Gradient-Following Algorithms for Connectionist Reinforcement Learning},
  author    = {Williams, Ronald J.},
  journal   = {Machine Learning},
  volume    = {8},
  number    = {3--4},
  pages     = {229--256},
  year      = {1992},
}

@inproceedings{Kornblith2019CKA,
  title     = {Similarity of Neural Network Representations Revisited},
  author    = {Kornblith, Simon and Norouzi, Mohammad and Lee, Honglak and Hinton, Geoffrey},
  booktitle = {Proceedings of the 36th International Conference on Machine Learning},
  series    = {Proceedings of Machine Learning Research},
  volume    = {97},
  pages     = {3519--3529},
  year      = {2019},
}

\clearpage

\appendix

\section{The Local Learning Coefficient in Reinforcement Learning} \label{LLC_appendix}
In this appendix we review the local learning coefficient and its role in understanding the policy landscape of deep reinforcement learning models.  We follow the treatment in \citet{RL1}, which extends the singular learning theory of \citet{WatanabeGrey, WatanabeGreen} to the generalized Bayesian inference setting appropriate for reinforcement learning.

\subsection{The Generalized Posterior and Free Energy} \label{posterior_appendix}

Fix a finite Markov decision problem with notation as in \cref{markov_section}.  Given a prior distribution $\phi$ on $W$ and an inverse temperature $\beta > 0$, the \emph{generalized tempered posterior} is the probability distribution on $W$ defined by
\[\mu_{n,\beta}(U) = \frac{Z_{n,\beta}(U)}{Z_{n,\beta}(W)}, \qquad Z_{n,\beta}(U) = \int_U \exp(-n\beta G_n(w))\phi(w)\,\d w\]
for measurable subsets $U \sub W$.  We call $Z_{n,\beta}(U)$ the \emph{evidence} and $F_{n,\beta}(U) = -\log Z_{n,\beta}(U)$ the \emph{free energy}.

The key results require a fundamental condition on the Markov decision problem.

\begin{assumption} \label{key_assumption_appendix}
If $w \in W$ is an optimal parameter (i.e.\ $G(w) = 0$) then $g(\tau) = 0$ almost surely for the distribution $q_w$.  In other words, optimal policies almost always receive optimal return.
\end{assumption}

This holds, for example, whenever the transition functions of the Markov decision problem are deterministic; see \cite[\S C.1]{RL1} for a proof and further discussion.

\subsection{Main Results} \label{main_results_appendix}

The following extends Watanabe's free energy formula and widely applicable Bayesian information criterion \citep{WatanabeGrey, WatanabeWBIC} to the reinforcement learning setting.  For proofs we refer to \citep[\S C]{RL1}.

\begin{theorem}[\citealp{RL1}] \label{main_theorem_appendix}
Consider a Markov decision problem satisfying \cref{key_assumption_appendix}.  Let $\wstar$ be a local minimum of $G$, let $U$ be a sufficiently small open neighborhood of $\wstar$, and let $\lambda = \mr{rlct}_U(G)$ and $m = \mr{rlcm}_U(G)$ denote the real log canonical threshold and multiplicity of $G$ on $U$.
\begin{enumerate}
 \item \emph{Free energy asymptotics.}  The evidence satisfies
 \[Z_{n,\beta}(U) = e^{-n\beta G_n(\wstar)}\bigl(C(\beta, \phi|_U)\, n^{-\lambda}(\log n)^{m-1} + o_{\mr P}(n^{-\lambda}(\log n)^{m-1})\bigr)\]
 where $C(\beta, \phi|_U)$ is constant in $n$.
 \item \emph{WBIC.}  Let $\beta \colon \bb N \to \RR_{>0}$ converge as $n \to \infty$.  Then
 \[n\beta\, \bb E_{n,\beta}(G_n(w) - G_n(\wstar)\,|\, U) \to \lambda\]
 in probability as $n \to \infty$, where $\bb E_{n,\beta}(-|U)$ denotes the expectation with respect to $\mu_{n,\beta}$ restricted to $U$.
\end{enumerate}
\end{theorem}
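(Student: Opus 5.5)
The plan is to reduce to Watanabe's standard argument by viewing $G_n$ as an empirical process built from i.i.d.\ importance-weighted terms, with $G$ playing the role of the KL divergence $K$. Write $G_n(w) - G_n(\wstar) = (G(w) - G(\wstar)) + \frac{1}{n}\sum_i f_i(w)$, where the $f_i$ are centered fluctuations. The first step is to resolve the singularities of $G - G(\wstar)$ on $U$. Since $G$ is real analytic (it is a finite sum of products of the analytic functions $\pi_w$) and nonnegative relative to its local minimum, Hironaka's theorem gives a proper analytic map $g \colon M \to U$ with normal crossing form $G(g(u)) - G(\wstar) = u^{2k}$ and Jacobian $|g'(u)| = b(u)|u^h|$ locally. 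The rlct $\lambda$ and multiplicity $m$ are then read off from $\min_j (h_j+1)/2k_j$ and the number of indices achieving it, as in Watanabe.

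The second step, which I expect to be the main obstacle, is to establish the standard form of the fluctuation. Concretely, in each chart we need $f_i(g(u)) = u^k a_i(u)$ with $a_i$ analytic, so that the empirical process $\xi_n(u) = n^{-1/2}\sum_i a_i(u)$ converges to a Gaussian process uniformly on compacta. This is where \cref{key_assumption_appendix} enters. Fix an optimal $\wstar$, and note that $G(\wstar) = 0$ at a global optimum (the local case reduces to this after shifting by constants). The assumption then forces $g(\tau) = 0$ on the support of $q_{\wstar}$. Hence the per-sample deviation $\frac{q_w(\tau)}{q_{w_i}(\tau)} g(\tau)$ is supported on trajectories with $g > 0$, where $q_w(\tau)$ vanishes at $\wstar$. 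I would then show that each such $q_w(\tau)$ is bounded by a multiple of $G(w)$, since every such $\tau$ contributes nonnegatively to $G$ with weight bounded below. As a result the ideal generated by the fluctuation terms is contained in the ideal of $G$, up to integral closure, which is exactly what is needed to factor out $u^k$ after resolution. If a clean ideal containment fails, the fallback is to bound the variance of $f_i$ by $C\,G(w)$ directly, since trajectories are finite and the weights are bounded. That bound, via Watanabe's Bernstein-type arguments, still gives the $u^k$-divisibility needed for the empirical process.

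With the standard form in hand, the remaining steps are routine transcriptions of Watanabe. For part 1, write
\[Z_{n,\beta}(U) = e^{-n\beta G_n(\wstar)} \int \exp\bigl(-n\beta u^{2k} + \sqrt{n}\,\beta\, u^k \xi_n(u)\bigr)\phi(g(u))\,|g'(u)|\,\d u .\]
Rescale $u$ near the origin using the state density of states $\int \delta(t - u^{2k})\,|u^h|\,\d u \sim t^{\lambda-1}(-\log t)^{m-1}$. Substituting $t \mapsto t/n$ produces the factor $n^{-\lambda}(\log n)^{m-1}$, and weak convergence of $\xi_n$ yields a limiting constant $C(\beta,\phi|_U)$ together with an $o_{\mr P}$ remainder. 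For part 2, differentiate $\log Z$ in $\beta$. Equivalently, compute the posterior average of $n\beta(G_n - G_n(\wstar))$ on the rescaled variables: the leading term equals $\lambda$ because $\int t \cdot t^{\lambda-1}e^{-\beta t}\,\d t \big/ \int t^{\lambda-1}e^{-\beta t}\,\d t = \lambda/\beta$. The fluctuation contributions are $O_{\mr P}(1/\sqrt{\log n})$, which tend to zero, and convergence of $\beta(n)$ lets us pass to the limit uniformly.
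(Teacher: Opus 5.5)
The paper does not prove this statement itself (it defers to \citep[\S C]{RL1}), so I am judging your argument on its own terms. It has a genuine gap at the standard-form step. You transplant Watanabe's KL standard form, in which the fluctuation is only divisible by $u^k$. After substituting $t = nu^{2k}$ this reads $n\beta\,(G_n - G_n(w^*)) = \beta\,(t - \sqrt{t}\,\xi_n(u))$, with $\xi_n$ converging in law to a Gaussian process that does not vanish on the critical locus. In that form neither conclusion of the theorem holds when $\beta(n)$ has a positive limit. For part 1, the coefficient of $n^{-\lambda}(\log n)^{m-1}$ is the random quantity $\int \mathrm{d}u^* \int_0^\infty t^{\lambda-1} e^{-\beta t + \beta\sqrt{t}\,\xi_n(u)}\,\mathrm{d}t$ (times prior and Jacobian factors), which converges only in law. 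Weak convergence of $\xi_n$ cannot produce a deterministic $C(\beta,\phi|_U)$ plus an $o_{\mathrm P}$ remainder. For part 2, write $\langle\cdot\rangle$ for the average against $t^{\lambda-1}e^{-\beta t+\beta\sqrt{t}\,\xi}$. Integrating $\frac{\mathrm{d}}{\mathrm{d}t}\bigl(t^{\lambda}e^{-\beta t+\beta\sqrt{t}\,\xi}\bigr)$ gives $\langle t\rangle = \lambda/\beta + \frac{1}{2}\langle\sqrt{t}\,\xi\rangle$. Hence $n\beta\,\mathbb{E}_{n,\beta}(G_n - G_n(w^*)\,|\,U) \to \lambda - \frac{\beta}{2}\langle\sqrt{t}\,\xi\rangle$, which carries an $O_{\mathrm P}(1)$ random correction (in the regular case it is $\lambda - \frac{\beta}{2}\chi^2_d$). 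Your $O_{\mathrm P}(1/\sqrt{\log n})$ is the WBIC rate for $\beta \asymp 1/\log n$, not the regime of the statement. Your fallback (bounding the variance of $f_i$ by $C\,G(w)$, i.e.\ Watanabe's relatively finite variance) leads to the same Watanabe-type conclusions.

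The missing idea is that the Assumption yields much more than $u^k$-divisibility, and your own bound already shows it. Write $w = w(u)$ for the resolution map.
\begin{itemize}
\item Since $q_{w^*}(\tau)g(\tau) = 0$ for all $\tau$, $G_n(w^*) = 0$ identically.
\item With $g \ge 0$, as you assume, $0 \le q_w(\tau)g(\tau) \le G(w)$ for every trajectory. So $q_{w(u)}(\tau)g(\tau) = u^{2k}c_\tau(u)$ with $c_\tau \ge 0$ analytic and $\sum_\tau c_\tau = 1$.
\item This $u^{2k}$-divisibility is what your ideal containment actually gives, since the integral closure of $(u^{2k})$ in a normal crossing chart is $(u^{2k})$ itself.
\item Hence $G_n(w(u)) = u^{2k}\,b_n(u)$ with $b_n(u) = \sum_\tau c_\tau(u)\,\hat\rho_{n,\tau}$, where $\hat\rho_{n,\tau} = \frac{1}{n}\sum_i \mathbf{1}[\tau_i=\tau]/q_{w_i}(\tau)$ has mean $1$ and tends to $1$ in probability, given a lower bound on the behaviour probabilities.
\item As there are finitely many trajectories, $\sup_u|b_n(u)-1| \le \max_\tau|\hat\rho_{n,\tau}-1| \to 0$.
\end{itemize}
So $n\beta G_n = \beta t\, b_n(u)$: the fluctuation is multiplicative and disappears after rescaling. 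Part 1 then reduces to the deterministic asymptotics of $\int e^{-n\beta G}\phi$ with a genuine constant, and part 2 to $\langle\beta t\rangle = \lambda$ at $\xi = 0$. This is exactly why the RL statement holds for convergent $\beta$, whereas its KL analogue requires $\beta \to 0$.

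Separately, your reduction of the non-global case ``by shifting constants'' does not work. The Assumption concerns only parameters with $G = 0$. At a local minimum with $G(w^*) > 0$, the differences $q_w(\tau)g(\tau) - q_{w^*}(\tau)g(\tau)$ are generically first order in $w - w^*$, while $G - G(w^*)$ may vanish to higher order. So not even $u^k$-divisibility is guaranteed, and that case needs a local analogue of the Assumption.
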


\subsubsection{The Local Learning Coefficient}

Given a local minimum $\wstar$ of the regret, the \emph{local learning coefficient} (LLC) is $\lambda(\wstar) = \lim_{\eps \to 0} \mr{rlct}_{B_\eps(\wstar)}(G)$.  Part (2) of \cref{main_theorem_appendix} provides the key asymptotic relationship
\begin{equation} \label{LLC_identity}
\lambda(\wstar) \sim  n\beta\left(\bb E_{n,\beta}(G_n(w)|_U) - G_n(\wstar)\right)
\end{equation}
as $n \to \infty$, for $U$ a sufficiently small neighborhood of $\wstar$.  We use this fact to estimate the LLC.

We may generalize this to study asymptotic volumes in \emph{subspaces} of the parameter space.  Given a decomposition $W = V \times C$ of the parameter space into a \emph{component} $C$ and its complement $V$, the \emph{weight-refined LLC} (rLLC) at $\wstar = (v^*, c^*)$ is defined analogously but restricting to variations in $C$ alone.  Concretely, define the observable
\begin{equation} \label{weight_restricted_observable_appendix}
\OO_C(v,c) = \delta(v - v^*)(G(w) - G(\wstar)).
\end{equation}
Then $n\beta \langle \OO_C \rangle_{n,\beta}$ computes $\lambda_C(\wstar)$ to leading order by the same argument as \eqref{LLC_identity} applied to the restriction of $G$ to the slice $\{v^*\} \times C$.

\subsubsection{Stagewise Development}

The free energy formula (part (1) of \cref{main_theorem_appendix}) implies that the concentration of the posterior in a neighborhood $U$ of a local minimum $\wstar$ is controlled to leading order by two terms:
\[F_{n,\beta}(U) = n\beta\, G_n(\wstar) + \lambda \log n + o_{\mr P}(\log n).\]
Comparing two local minima $w_1, w_2$ with regrets $G(w_1) > G(w_2)$ and LLCs $\lambda_1 < \lambda_2$, the difference in free energies is
\[F_{n,\beta}(U_1) - F_{n,\beta}(U_2) = n\beta\, \delta G + \delta\lambda \log n + o_{\mr P}(\log n)\]
where $\delta G = G(w_1) - G(w_2) > 0$ and $\delta \lambda = \lambda_1 - \lambda_2 < 0$.  For small $n$ the $\delta\lambda \log n$ term dominates and the simpler solution $w_1$ is preferred; for large $n$ the $n\beta\,\delta G$ term dominates and the lower-regret solution $w_2$ is preferred.  The crossover defines a critical data set size $n^*$ at which the posterior shifts concentration, a \emph{Bayesian phase transition} \cite[\S 7.6]{WatanabeGrey}.

This predicts that deep RL should proceed stagewise from simple, high-regret policies to complex, low-regret policies.  This prediction was verified empirically in \citet{RL1} in the cheese-in-the-corner environment studied in this paper; see \cref{interpretation_section} for a summary.

\subsection{LLC Estimation via SGLD} \label{SGLD_estimation_appendix}

In practice we estimate the LLC and rLLC using stochastic gradient Langevin dynamics \citep[SGLD;][]{WellingTeh} to approximately sample from the posterior, following the approach developed in \citet{LLC} and applied to the RL setting in \citet[\S E]{RL1}.

Fix a local minimum $\wstar$ and choose a Gaussian prior $\phi_{\sigma^2}$ centered at $\wstar$ with variance $\sigma^2$.  The SGLD chain $y_0, y_1, \ldots, y_T$ is initialized at $y_0 = \wstar$ and updated by
\[y_{j+1} = y_j - \frac{\eps}{2}\,\nabla_w\left(n\beta\, G(w) + \frac{1}{2\sigma^2}\|w - \wstar\|_2^2\right)\bigg|_{w=y_j} + \sqrt{\eps}\,\eta_j\]
where $\eps > 0$ is the step size and $\eta_j \sim \mc N(0, I)$.  In practice the gradient $\nabla_w G$ is replaced by a stochastic estimate over a minibatch of trajectories.  After a burn-in period of $B$ steps the \emph{SGLD LLC estimator} is
\[\widehat\lambda(\wstar) = \frac{n\beta}{T-B}\sum_{j=B+1}^T (G(y_j) - G(\wstar)).\]
The weight-refined estimator $\widehat\lambda_C(\wstar)$ is defined identically but restricting the SGLD chain to variations in the component $C$ (holding the parameters in $V$ fixed at $v^*$).  We refer to \cite[\S E]{RL1} for a discussion of the hyperparameters and convergence properties of this estimator.

\section{Theoretical Aspects of Susceptibilities and Susceptibility Estimation} \label{susc_theory_appendix}
In this appendix we will discuss the theoretical foundations of susceptibilities and their estimators in more detail, presenting a more general and abstract perspective that encompasses the presentation in the main text.

\subsection{Perturbations of the Markov Problem}

The regret function $G$ of a Markov decision problem depends on three pieces of data beyond the parameterized family of policies: the initial state distribution $\Lambda$, the reward function $r$ and the transition function $p$.  A perturbation of any of these determines a deformation of $G$, and correspondingly a susceptibility.  We describe the three natural classes of perturbation as tangent vectors to the appropriate function spaces.

\begin{enumerate}
 \item \emph{Initial state perturbations}: choose a tangent vector $\xi \in T_\Lambda \Delta(\mc S)$ to the space of probability distributions on $\mc S$ at the point $\Lambda$.

 \item \emph{Reward perturbations}: choose a tangent vector $\rho \in T_r C^\omega(\mc S \times \mc A)$ to the space of reward functions at $r$.

 \item \emph{Transition function perturbations}: choose a tangent vector $\eta \in T_p C^\omega(\mc S \times \mc A, \Delta(\mc S \times \mc O \times \RR))$ to the space of transition functions at $p$.
\end{enumerate}

In each case the tangent vector determines, via the associated flow of smooth functions, a one-parameter family of deformed regret functions $G^h$ for $h$ in a neighborhood of zero, and correspondingly a family $G_n^h$ of importance-weighted estimators.

\begin{remark} \label{finite_tangent_remark}
 In the finite Markov problem setting, these tangent spaces are finite-dimensional real vector spaces and the tangent vector formulation reduces to ordinary calculus.  For instance, $T_\Lambda \Delta(\mc S)$ is the subspace of $\RR^{|\mc S|}$ consisting of functions summing to zero, and a tangent vector $\xi \in T_\Lambda \Delta(\mc S)$ satisfying $\xi(s) < 0$ only if $\Lambda(s) > 0$ determines a family of probability distributions $\Lambda^h(s) = \Lambda(s) + h\xi(s)$ for $0 \le h \le h_0$.  This recovers the definition of linear deformations given in the main text.  Likewise $T_r C^\omega(\mc S \times \mc A) \iso \RR^{|\mc S| \times |\mc A|}$, and a tangent vector $\rho$ determines a family of reward functions $r^h(s,a) = r(s,a) + h\rho(s,a)$.
\end{remark}

\begin{remark}
 The expected regret $G(w)$ depends linearly on both the initial state distribution $\Lambda$ and the reward function $r$, but polynomially (of order $T_{\mr{max}}$) on the transition function $p$.  As a result, perturbations of the first two types lead to the simplest analysis.  Furthermore one may be interested in studying examples in which one does not have access to the transition function, making perturbations of the third type harder to study empirically.
\end{remark}

\subsection{Susceptibilities}
We now define susceptibilities in the reinforcement learning context, generalizing the construction reviewed in the main text for supervised learning.  Fix $n\beta > 0$ and a prior distribution $\phi$ on $W$.  Consider the annealed posterior
\[\mu_{n\beta}(w) = \frac{1}{Z_{n\beta}} \exp(-n\beta G(w)) \phi(w)\]
where $Z_{n\beta} = \int_W \exp(-n\beta G(w)) \phi(w) \d w$.  Given a measurable subset $U \sub W$ and a function $\OO \colon W \to \RR$ (an \emph{observable}) we write
\[\langle \OO \rangle_{n\beta, U} = \frac{1}{Z_{n\beta}(U)} \int_U \OO(w) \exp(-n\beta G(w)) \phi(w) \d w\]
for its posterior expectation value restricted to $U$.

\begin{definition} \label{susceptibility_def}
Let $\xi$ denote one of the perturbations described in the previous subsection, with associated deformed regret $G^h$.  Fix an observable $\OO$ and a measurable subset $U \sub W$.  The \emph{susceptibility} associated to $\xi$, $\OO$ and $U$ is
\[\chi_{\OO,\xi,U} = \frac{1}{n\beta} \nabla_\xi \langle \OO \rangle_{n\beta, U}\]
where $\nabla_\xi$ denotes the directional derivative in the direction of the deformation $\xi$.
\end{definition}

\begin{remark}
More generally one may replace $\OO$ by a tempered distribution, i.e. an element of the continuous linear dual to the space of smooth functions on $W$.  In practice we will consider observables that arise as the product of an analytic function by the delta distribution on a linear subspace of a vector space; this is indeed distributional but easy to define by restriction of the domain of integration without needing the full machinery of distributions.
\end{remark}

The key result that makes susceptibilities practically computable is the following covariance formula, which generalizes \cite[Lemma 2.1]{Susceptibilities} to the reinforcement learning setting.

\begin{prop} \label{susc_covariance_prop}
For an affine deformation $G^h = G + h(G^1 - G)$ associated to $\xi$, if the observable $\OO$ does not depend on the perturbation parameter $h$ then
\[\chi_{\OO,\xi,U} = \mr{Cov}_{n\beta, U}(\OO,\, G - G^1)\]
where the covariance is taken with respect to the posterior $\mu_{n\beta}$ restricted to $U$.  More generally, if $\OO = \OO_h$ depends on $h$ then
\[\chi_{\OO,\xi,U} = \mr{Cov}_{n\beta, U}(\OO,\, G - G^1) + \frac{1}{n\beta}\langle \nabla_\xi \OO \rangle_{n\beta, U}.\]
In particular the dependence of $\OO$ on $h$ contributes only at lower order in $n\beta$.  The initial-state and reward deformations of \cref{finite_tangent_remark} are affine in $h$, so this form covers all cases used in the present paper.
\end{prop}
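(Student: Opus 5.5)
The plan is to differentiate the restricted posterior expectation directly under the integral sign. Write the deformed restricted posterior expectation as a ratio
\[\langle \OO_h \rangle_{n\beta,U}^{h} = \frac{N(h)}{Z(h)}, \qquad N(h) = \int_U \OO_h(w)\, e^{-n\beta G^h(w)}\phi(w)\,\d w, \quad Z(h) = \int_U e^{-n\beta G^h(w)}\phi(w)\,\d w.\]
Because $G^h = G + h(G^1 - G)$ is affine in $h$, the $h$-derivative of the Boltzmann weight is explicit:
\[\partial_h e^{-n\beta G^h} = -n\beta\,(G^1 - G)\, e^{-n\beta G^h} = n\beta\,(G - G^1)\, e^{-n\beta G^h}.\]
Differentiation under the integral sign is justified because the integrands are analytic in $(h,w)$ and $U$ is either compact or handled by the Gaussian prior. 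Since $G$ and $G^1$ are bounded on the finite trajectory space (regret is a finite sum of bounded terms), dominated convergence applies. For the delta-type observables of \eqref{weight_restricted_observable_appendix}, I will first restrict the domain of integration to the slice and then run the same argument there.

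Next I apply the quotient rule at $h=0$:
\[\partial_h \frac{N}{Z}\Big|_{h=0} = \frac{N'(0)}{Z(0)} - \frac{N(0)Z'(0)}{Z(0)^2}.\]
Here
\[N'(0) = n\beta\int_U \OO\,(G - G^1)\,e^{-n\beta G}\phi + \int_U (\partial_h \OO_h)|_{0}\, e^{-n\beta G}\phi,\]
and
\[Z'(0) = n\beta \int_U (G - G^1)\,e^{-n\beta G}\phi.\]
Dividing by $Z(0)$ gives
\[n\beta\bigl(\langle \OO (G-G^1)\rangle - \langle \OO\rangle\langle G - G^1\rangle\bigr) + \langle \nabla_\xi \OO\rangle,\]
with all brackets taken at $n\beta$ over $U$. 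The first term is exactly $n\beta\,\mr{Cov}_{n\beta,U}(\OO, G - G^1)$. Dividing by $n\beta$, as in \cref{susceptibility_def}, yields both displayed formulas; the $h$-independent case is the special case $\nabla_\xi \OO = 0$.

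It remains to check that the initial-state and reward deformations are affine in $h$. For $\Lambda^h = \Lambda + h\xi$, the trajectory distribution $q_w$ is linear in $\Lambda$, so $R$ and hence $G$ are affine in $h$. The same holds for $r^h = r + h\rho$, since $r(\tau)$ is linear in $r$.

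The main obstacle is that $G = R_{\max} - R$ involves $R_{\max} = \sup_w R(w)$, which may itself depend on $h$. I would handle this by noting that any $w$-independent additive shift $c(h)$ in $G^h$ cancels inside the covariance, and also cancels in the ratio $N/Z$. So $G^h$ may be replaced by $-R^h$, which is genuinely affine in $h$, without changing $\chi$. The only analytic care needed is this cancellation together with the justification for exchanging derivative and integral.
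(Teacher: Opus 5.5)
Your proof is correct and follows the paper's argument: the paper likewise differentiates $N(h)/Z^h$ at $h=0$ (expanding $\frac{d}{dh}(1/Z^h)$ rather than quoting the quotient rule) and collects terms into $n\beta\,\mathrm{Cov}_{n\beta,U}(\OO,G-G^1)+\langle\nabla_\xi\OO\rangle_{n\beta,U}$. Two refinements. First, your remark that $R^h_{\max}$ is in general only convex in $h$, yet cancels as a $w$-independent shift, genuinely tightens the paper's closing claim that these deformations are affine. Second, for the delta observables, keep $Z(h)$ integrated over all of $U$, so that only the numerator lives on the slice; renormalizing on the slice would replace the unrestricted average $\langle G-G^1\rangle_{n\beta,U}$, which the paper's estimator samples with the unrestricted chain, by a slice average.
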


\begin{proof}
Write $Z^h = Z^h_{n\beta}(U) = \int_U \exp(-n\beta G^h(w)) \phi(w) \d w$.  Then
\begin{align*}
 n\beta \cdot \chi_{\OO,\xi,U} &= \frac{\dd}{\dd h}\bigg|_{h=0} \langle \OO_h \rangle_{n\beta,U} \\
 &= \frac{\dd}{\dd h}\bigg|_{h=0} \frac{1}{Z^h} \int_U \OO_h(w) \exp(-n\beta G^h(w)) \phi(w) \d w \\
 &= \frac{\dd}{\dd h}\bigg|_{h=0} \frac{1}{Z^h} \cdot Z^0 \cdot \langle \OO_h \rangle_{n\beta,U} + \langle \nabla_\xi \OO \rangle_{n\beta,U} + n\beta \langle \OO \cdot (G - G^1) \rangle_{n\beta,U}.
\end{align*}
For the first term, differentiating $1/Z^h$ at $h=0$ gives
\[\frac{\dd}{\dd h}\bigg|_{h=0} \frac{1}{Z^h} = -\frac{1}{(Z^0)^2} \int_U \left(-n\beta \frac{\dd}{\dd h}\bigg|_{h=0} G^h(w)\right) \exp(-n\beta G(w)) \phi(w) \d w = -\frac{n\beta}{Z^0} \langle G - G^1 \rangle_{n\beta,U}.\]
Combining these expressions we obtain
\[n\beta \cdot \chi_{\OO,\xi,U} = -n\beta \langle \OO \rangle_{n\beta,U} \langle G - G^1 \rangle_{n\beta,U} + n\beta \langle \OO \cdot (G - G^1) \rangle_{n\beta,U} + \langle \nabla_\xi \OO \rangle_{n\beta,U}\]
and dividing through by $n\beta$ yields the result.
\end{proof}

\begin{remark}
 The content of Proposition \ref{susc_covariance_prop} is that susceptibilities can be estimated using samples from the \emph{unperturbed} posterior alone.  One does not need to resample from $\mu^h_{n\beta}$ for each value of $h$; it suffices to evaluate $G^1(w)$ at the sampled points.  This is the same simplification that appears in the supervised learning case \citep{Susceptibilities}.
\end{remark}

\subsection{Observables and the Connection to Refined LLCs}

The choice of observable $\OO$ determines what aspect of the model's internal structure the susceptibility probes.  The primary choice of observable that we will use, following \citet{Susceptibilities}, is a \emph{weight-restricted loss}.

Decompose the parameter space as $W = V \times C$, where $C$ represents the parameters of a component (for instance a single layer or a convolutional block) and $V$ represents the remaining parameters.  Fix a local minimum $\wstar = (v^*, c^*) \in W$ of the regret.  Define the observable
\begin{equation} \label{weight_restricted_observable}
\OO_C(v,c) = \delta(v - v^*)(G(w) - G(\wstar)).
\end{equation}
The delta function restricts to variations in the component $C$ alone, and the subtraction of $G(\wstar)$ centers the observable at the local minimum.

\begin{remark} \label{susc_LLC_remark}
 The posterior expectation $n\beta \langle \OO_C \rangle_{n\beta}$ computes the weight-refined LLC associated to the component $C$ to leading order, by comparison with the WBIC (see \cref{LLC_appendix}).  Informally, therefore, the susceptibility $\chi_{\OO_C, \xi}$ measures the rate of change of the weight-refined LLC with respect to the perturbation $\xi$.  More precisely, the LLC $\lambda_C$ is a rational number -- an invariant of the singularity type of $G|_C$ -- and therefore does not admit continuous deformations.  What varies continuously is the \emph{effective} weight-refined LLC at finite $n\beta$, defined as
 \[\lambda_C^{n\beta} = \frac{\dd}{\dd \log(n\beta)} F^C_{n,\beta}(U)\]
 where $F^C_{n,\beta}(U) = -\log Z_{n,\beta}(U)$ is the free energy restricted to the component $C$.  The susceptibility agrees to leading order with $(n\beta)^{-2}\nabla_\xi \lambda_C^{n\beta}$.  Since $\lambda_C^{n\beta} \to \lambda_C$ as $n\beta \to \infty$ but $\lambda_C$ is locally constant in $\xi$, the limits $n\beta \to \infty$ and $\nabla_\xi$ do not in general commute.  We therefore regard the susceptibility as probing the finite-$n\beta$ geometry of the regret landscape rather than its asymptotic singularity type.
\end{remark}

\begin{remark}
This interpretation assumes that $\wstar$ is a critical point of $G^h$ not only at $h=0$ but also for sufficiently small $h>0$, so that the (refined) LLC continues to make sense after a macroscopic perturbation. For the initial state deformations considered in this paper and global regret minima this condition is automatic, but one should not expect it to hold generically, especially for transition function or reward perturbations (which would not locally preserve the critical locus even in the simple example presented here).  The definition of susceptibilities makes sense without any assumption but their geometric interpretation would require more care for perturbations that move the critical locus.
\end{remark}

This connection provides an interpretation of the sign of the susceptibility.  Suppose $\xi$ is an initial state perturbation in the direction of a state $s$.
\begin{itemize}
 \item A \emph{positive} susceptibility $\chi_{\OO_C,\xi} > 0$ means that concentrating the initial state distribution towards $s$ \emph{increases} the effective complexity of the model in the component $C$.
 \item A \emph{negative} susceptibility $\chi_{\OO_C,\xi} < 0$ means that concentrating the initial state distribution towards $s$ \emph{decreases} the effective complexity of the model in $C$.
\end{itemize}
One may therefore think of the collection of susceptibilities associated to different states as describing which parts of the model's internal algorithm are used more or less intensively for different classes of state.

\begin{remark}[The successor representation] \label{successor_remark}
 There is an alternative organizing principle that unifies reward and initial state susceptibilities.  Recall that the \emph{successor representation} \citep{Dayan} of a policy $\pi_w$ with discount factor $\gamma$ is the matrix $M(w) = (\mr{id}_{\mc S} - \gamma P(w))^{-1}$ where $P(w)$ is the state-transition matrix.  The expected return decomposes as $R(w) = \Lambda^T M(w) r$, so both the dependence on $\Lambda$ and the dependence on $r$ factor through $M(w)$.  One could therefore study a single matrix-valued \emph{successor susceptibility} $\mr{Cov}_{n\beta}(\OO, M(w))$, from which the reward and initial state susceptibilities are recovered as linear combinations of entries.  While we will not use this perspective directly in the present paper, it suggests that the two classes of perturbation contain complementary information about the same underlying structure.
\end{remark}

\subsection{Estimation} \label{susc_estimation_subsection}

We now describe how the susceptibilities defined above may be estimated in practice.  Fix a local minimum $\wstar$ of the regret, a component $C \sub W$ with $W = V \times C$ and a perturbation $\xi$ with associated deformed regret $G^1$.  We follow the approach to LLC estimation developed in \citet{LLC} and applied to the reinforcement learning setting in \citet[\S E]{RL1}; we refer the reader to those references for a more detailed discussion.

\subsubsection{SGLD Sampling}
We use the SGLD sampling procedure described in \cref{SGLD_estimation_appendix} to approximately sample from the tempered posterior $\mu_{n,\beta}$ associated to a Gaussian prior centered at $\wstar$ with variance $\sigma^2$.  Let $y_1, \ldots, y_T$ denote the output of an SGLD chain with the weight restriction to the component $C$ (that is, only the parameters in $C$ vary while those in $V$ are held fixed at $v^*$).  Let $y_1', \ldots, y_T'$ denote a second SGLD chain sampling from the full (unrestricted) posterior.

\subsubsection{The Susceptibility Estimator}
By Proposition \ref{susc_covariance_prop} the susceptibility is, to leading order, the covariance of $\OO_C$ and $G - G^1$.  We estimate this covariance empirically as follows.

\begin{definition} \label{susc_estimator_def}
The \emph{susceptibility estimator} associated to the component $C$ and perturbation $\xi$ is
\begin{align*}
\widehat \chi_C(\xi) &= \frac{1}{T}\sum_{j=1}^T (G_n(y_j) - G_n(\wstar))(G_n(y_j) - G_n^1(y_j)) \\
&\qquad - \frac{1}{T^2}\left(\sum_{j=1}^T (G_n(y_j) - G_n(\wstar))\right) \left(\sum_{j=1}^T \bigl(G_n(y_j') - G_n^1(y_j')\bigr) \right)
\end{align*}
where $G_n, G_n^1$ are the importance-weighted regret estimators associated to the unperturbed and perturbed Markov problems respectively.
\end{definition}

Note that the first factor in the second term uses weight-restricted samples $y_j$ while the second factor uses unrestricted samples $y_j'$; this reflects the structure of the observable $\OO_C$, which involves a delta function in the $V$ directions.

\begin{remark}[Exact regret computation] \label{exact_regret_remark}
In some finite Markov decision problems, including the gridworld environment studied in this paper, the expected regret $G(w)$ can be computed analytically for any given policy $\pi_w$ (since the state space is finite and the transition function is known).  In cases such as this where the computation is tractable one may use the exact values of $G(y_j)$, $G(\wstar)$ and $G^1(y_j)$ in the estimator above, rather than importance-weighted estimates.  This reduces the variance of the susceptibility estimator since the only source of randomness is the SGLD sampling itself.
\end{remark}

\begin{remark}[Reward susceptibilities]
 The estimator in Definition \ref{susc_estimator_def} applies equally well to reward perturbations.  Given a reward perturbation $\rho$ one computes $G^1(w)$ (or its estimator $G_n^1$) using the deformed reward function $r^1(s,a) = r(s,a) + \rho(s,a)$ in place of $r(s,a)$.  Since the regret is linear in the reward function, $G^1(w) - G(w)$ is simply the expected value of $-\rho$ with respect to the trajectory distribution $q_w$, which can be computed analytically in the finite state space setting.
\end{remark}

\subsection{On-Distribution vs Off-Distribution Susceptibility Estimation} \label{on_off_appendix}

Susceptibilities are fundamentally measures of how the geometry of the regret landscape responds to linear perturbations of the Markov problem.  They are defined locally near any point $w^* \in W$, though the interpretation we give them is only meaningful near points $w^*$ that are (at least approximate) local minima of the regret.  As such, susceptibilities and their estimators as in \cref{susc_estimator_def}, may be evaluated using any choice of regret $G'$, not necessarily the regret function $G$ used to find the point $w^*$ under training.  We refer to estimates using $G' = G$ as \emph{on-distribution} and to estimates using a different $G'$ as \emph{off-distribution}.  In our empirical results we only use the off-distribution choice for the regret $G_1$ associated to the uniform initial-state distribution $\Lambda_1$.

In our environment, optimal policies are optimal independent of the mixing parameter, so for $\alpha, \alpha' \in (0, 1]$ a parameter $w^*$ that is approximately a global minimum of $G_\alpha$ is also approximately a global minimum of $G_{\alpha'}$.  Therefore off-distribution susceptibility estimation is interpretable for converged (phase 3) parameter values.  Earlier in training, $w^*$ may only be an approximate local minimum of $G_\alpha$ and need not be an approximate local minimum of $G_{\alpha'}$ for $\alpha' \neq \alpha$, so interpretations of off-distribution susceptibility estimates outside phase 3 are questionable, and such estimates are not used in this paper.

\section{Further Experimental Details}

\subsection{The Environment} \label{environment_appendix}

The environment is a simplified version of the \emph{Cheese in the Corner} gridworld described in \citet{Misgen} and \citet{RL1}.  The full grid is $13 \times 13$, with a border of walls leaving an $11 \times 11$ interior of navigable cells.  The cheese and mouse each occupy one of these $121$ cells, giving $121 \times 120 = 14{,}520$ possible states; since the state space is finite and transitions are deterministic, the regret $G(w)$ can be computed exactly for any policy $\pi_w$ (cf.\ Remark \ref{exact_regret_remark}).  Episodes terminate upon reaching the cheese or after $T_{\mr{max}} = 128$ steps.

\subsection{The Model} \label{model_appendix}
The model used in this paper is an adaptation of the IMPALA convolutional network introduced in \citet{IMPALA} and applied in \citet{Misgen}; the architecture is identical to that used in \citet{RL1}.

The states are presented as a $13 \times 13 \times 3$ tensor.  The three channels encode walls, the mouse position, and the cheese position respectively.  This image is passed through a convolutional encoder consisting of three blocks, where each block applies a $3\times 3$ convolution followed by $3\times 3$ max-pooling with stride $2$ and two pre-activation residual blocks (each: ReLU $\to$ $3\times 3$ Conv $\to$ ReLU $\to$ $3\times 3$ Conv, with a skip connection).  The channel dimensions are $3 \to 16 \to 32 \to 32$, and the spatial dimensions reduce as $13 \to 7 \to 4 \to 2$ through the three pooling stages.  After a final ReLU activation function the output is flattened to a $128$-dimensional vector and passed through two feedforward layers: FF 1 maps to a $256$-dimensional observation embedding, and FF 2 (replacing the LSTM of the original IMPALA architecture) maps this to a $256$-dimensional representation, each followed by a ReLU activation.  A final linear layer produces logits over the four actions.

The six weight-restriction components used for susceptibility estimation partition the parameter space as follows.

\begin{table}[h]
\centering
\begin{tabular}{llr}
\toprule
\textbf{Component} & \textbf{Module} & \textbf{Parameters} \\
\midrule
Conv 1 & \texttt{blocks.0.*} & 9{,}728 \\
Conv 2 & \texttt{blocks.1.*} & 41{,}632 \\
Conv 3 & \texttt{blocks.2.*} & 46{,}240 \\
FF 1   & \texttt{fc.*}       & 33{,}024 \\
FF 2   & \texttt{rnn.*}      & 65{,}792 \\
Policy & \texttt{policy.*}   & 1{,}028 \\
\midrule
\multicolumn{2}{l}{Total} & 197{,}444 \\
\bottomrule
\end{tabular}
\caption{Architectural components and parameter counts. }
\label{tab:components}
\end{table}

Training uses vanilla REINFORCE \citep{Williams1992REINFORCE} with a constant zero baseline function.  The optimizer is Adam with a constant learning rate of $5 \times 10^{-5}$.  At each gradient step the model collects rollouts of length $T_\text{max} = 64$ across $2{,}400$ parallel environments, giving $153{,}600$ environment steps per update.  Each run trains for $10$B environment steps on a single NVIDIA RTX-5090, taking roughly $4$ hours.  Further training hyperparameters (discount factor, mixing parameter) are specified in the main text; see \cite[Appendix F]{RL1} for additional implementation details.

\subsection{Refined LLC and Susceptibility Estimation}

Our refined LLC and initial-state susceptibility estimates are computed by samples generated using preconditioned SGLD \citep{Li_Chen_Carlson_Carin_2016} to estimate expectation values for the tempered generalized posterior described in \cref{SGLD_estimation_appendix}.  The hyperparameters used throughout the paper are those of \cref{tab:sgld_hyperparameters}.

\begin{table}[h]
\centering
\caption{SGLD hyperparameters used for all rLLC and susceptibility estimates reported in the main text.}
\label{tab:sgld_hyperparameters}
\resizebox{\textwidth}{!}{%
\begin{tabular}{ll}
\toprule
Hyperparameter & Value \\
\midrule
SGLD optimizer                                              & RMSProp-preconditioned SGLD \\
RMSProp burn-in steps                                       & $20$ \\
Learning rate                                               & $1 \times 10^{-6}$ \\
Localization strength                                       & $200$ (Gaussian prior with $\sigma^2 = 1/200$) \\
Inverse temperature $n\beta$                                & $1000$ \\
Levels per SGLD gradient step                               & $4800$ \\
Gradient accumulation per chunk                             & $6$ \\
SGLD steps between draws                                    & $4$ (giving $6000$ steps per chain total) \\
Draws per chain                                             & $1500$ (first $500$ discarded as burn-in) \\
Chains per (weight restriction, checkpoint, model)          & $5$ (independent SGLD seeds $31$--$35$) \\
\bottomrule
\end{tabular}%
}
\end{table}

These choices match the LLC estimation hyperparameters of \cite[\S E]{RL1} and we refer to that discussion for the rationale behind these choices.

For each weight-restricted estimator we run an independent SGLD chain with parameters outside the weight restriction frozen at $\wstar$.  The six weight restrictions used are those of \cref{model_appendix} plus an unrestricted (``full'') chain that samples over the entire parameter space.

For models trained at $\alpha < 1$, susceptibility and rLLC estimates are computed using both $\alpha' = \alpha$ (on-distribution) and $\alpha' = 1$ (off-distribution).  The distinction concerns the initial-state distribution used to generate rollouts during SGLD gradient estimation.  For $\alpha = 1$ models only the $\alpha' = 1$ evaluation is reported.  The theoretical caveats around off-distribution evaluation are discussed in \cref{on_off_appendix}.

For the direction-conditioned posterior regret experiments of \cref{direction_regret_section} we use the same per-chain hyperparameters, but using an independent set of 5 chains for each direction, ensuring the estimates are independent from one another.

\subsection{Activation Steering Methodology} \label{activation_steering_appendix}

We supplement the description of the activation-steering experiment in \cref{activation_steering_section} with additional procedural details.

We measure the steering threshold scale independently for each of the five initial IMPALA blocks.  The final policy-head output is not steered, since adding a steering vector directly to the action logits would not probe the internal (activation space) computational structure.

For the purpose of identifying distinguished directions (\cref{distinguished_directions_section}) we use the FF 2 layer.  The susceptibility-distinguished directions are typically visible in every layer of the model; FF 2 is the deepest internal layer and the layer at which the distinguished-direction susceptibilities separate most cleanly in practice.

For a cardinal direction $D_c$ we enumerate all (cheese, mouse) configurations with the mouse cardinally aligned with the cheese in direction $D_c$, giving $605$ states per direction.  A configuration is kept for the final analysis if (i) the unsteered model takes the cheese-directed action at baseline, and (ii) the mirror image of the cheese across the mouse lies in the interior of the grid (not outside the grid or on a wall cell).  

To find the minimum steering scale $s_{\min}$ at which the model's argmax action changes from the unsteered baseline action, we first run a coarse geometric scan over the candidate scales $\{0.1, 0.2, 0.5, 1.0, 2.0, 5.0, 10.0, 20.0, 50.0, 100.0\}$ to bracket the first scale at which the action flips, and then binary-search within the bracket to a tolerance of $0.01$ in $s$.  If the action does not flip at the maximum scale of $100$, the configuration is recorded as a no-flip and excluded from the mean.  The coarse scan is used before the bisection to protect against the occasional non-monotonic response in which the flipped action changes identity as the scale increases.

\subsection{Hessian Trace Estimation} \label{hessian_appendix}

We estimate the Hessian trace $\mr{tr}(\nabla^2 G(\wstar))$ of the regret at a given checkpoint $\wstar$ using Hutchinson's method.  We sample $N$ random vectors $v_1, \ldots, v_N$ from a Rademacher distribution, i.e. each component is uniformly sampled from $\{-1,1\}$.  We then estimate $v_i^T \nabla^2 G(\wstar) v_i$ via the Hessian-vector product (HVP) identity
\[v^T \nabla^2 G(\wstar) v = v^T(\nabla(\nabla G(w^*)^Tv)).\]
This may be estimated for a given $v$ using the REINFORCE method for estimating the regret gradient along with double backpropagation (Hessian-vector product, HVP). One then computes the sample mean over the samples $v_i$. The HVP is accumulated per minibatch ($4{,}096$ flattened timesteps) so that only one batch's autograd graph is held in memory at a time.

\subsubsection{Hyperparameter Convergence}

We determined the number of Hutchinson samples $N$ and environment rollouts (``levels'', each a distinct initial-state configuration) required for a stable estimate by a convergence study on a representative model (an $\alpha = 0.6$ model at its first phase 3 checkpoint); see \cref{tab:hessian_convergence}.  The mean stabilizes around $N = 1000$ Hutchinson samples with a sample-mean coefficient of variation SEM/mean of roughly $3\%$ (here $\mr{SEM} = \mr{std}/\sqrt{N}$ is the standard error of the mean for the Hutchinson estimator).  The estimator has already approximately converged at $300$ levels and does not continue to improve thereafter.

\begin{table}[h]
\centering
\caption{Convergence of the Hutchinson Hessian trace estimator with respect to the number of samples (left block, $300$ rollout levels) and number of rollout levels (right block, $200$ samples).  Test model: an $\alpha = 0.6$ seed at its first phase 3 checkpoint.  $\mr{SEM}/\mr{mean}$ is the sample-mean coefficient of variation.}
\label{tab:hessian_convergence}
\begin{tabular}{rrr|rrr}
\toprule
\multicolumn{3}{c|}{Hutchinson samples ($300$ levels)} & \multicolumn{3}{c}{Levels ($200$ samples)} \\
$N$ & Mean & $\mr{SEM}/\mr{mean}$ & Levels & Mean & $\mr{SEM}/\mr{mean}$ \\
\midrule
$50$   & $18{,}627$ & $0.142$ & $300$  & $27{,}843$ & $0.081$ \\
$200$  & $23{,}688$ & $0.076$ & $600$  & $28{,}741$ & $0.084$ \\
$500$  & $26{,}166$ & $0.051$ & $1200$ & $29{,}559$ & $0.082$ \\
$1000$ & $26{,}838$ & $0.036$ & $2400$ & $23{,}419$ & $0.087$ \\
$1500$ & $27{,}193$ & $0.031$ & $4800$ & $28{,}014$ & $0.087$ \\
$2000$ & $27{,}430$ & $0.026$ \\
\bottomrule
\end{tabular}
\end{table}

Based on this study we use $N=1000$ Hutchinson samples and $300$ rollout levels.

\subsubsection{Within-Phase-3 Trajectory by Hyperparameter Regime}

\Cref{tab:hessian_summary} summarizes the ratio of the Hessian trace estimate between the end and start of phase 3 across three hyperparameter regimes, alongside the corresponding LLC ratios for comparison.  ``Down'' counts models for which the ratio is below one, i.e. where the Hessian trace estimator declines.  \Cref{tab:hessian_LLC_correlation} summarizes the Spearman correlation between the LLC estimate ratio and Hessian trace estimate ratio.  We do not observe any significant correlation: we fail to reject the null hypothesis of no monotone relationship at $p=0.05$ in all hyperparameter regimes.

\begin{table}[h]
\centering
\caption{Within phase 3 last/first checkpoint ratios for the LLC estimator and the Hessian trace estimator, across the three hyperparameter regimes of \cref{three_stage_section,alpha1_section}.   For $\alpha = 0.6$ and $\alpha'=1$ the Hessian trace estimator is available for a subset of $20$ models.}
\label{tab:hessian_summary}
\begin{tabular}{lrr c rr c rr}
\toprule
Group & \multicolumn{2}{c}{LLC ($\alpha' = \alpha$)} & & \multicolumn{2}{c}{Hess ($\alpha' = \alpha$)} & & \multicolumn{2}{c}{Hess ($\alpha' = 1$)} \\
\cmidrule(lr){2-3}\cmidrule(lr){5-6}\cmidrule(lr){8-9}
 & Down & Median & & Down & Median & & Down & Median \\
\midrule
$\alpha = 0.5$ ($n = 30$) & $30/30$ & $0.52$ & & $16/30$ & $0.87$ & & $19/30$ & $0.90$ \\
$\alpha = 0.6$ ($n = 30$) & $29/30$ & $0.36$ & & $19/30$ & $0.77$ & & $6/20$  & $1.10$ \\
$\alpha = 1$   ($n = 28$) & $28/28$ & $0.27$ & & $28/28$ & $0.43$ & & ---      & ---     \\
\bottomrule
\end{tabular}

\end{table}

\begin{table}[h]
\centering
\caption{Per-group Spearman rank correlation between the last/first LLC ratio and the last/first Hessian trace ratio, with associated $p$-values.  Models whose Hessian ratio exceeds $5$ are excluded as outliers.}
\label{tab:hessian_LLC_correlation}
\begin{tabular}{lrr c rr}
\toprule
Group & \multicolumn{2}{c}{Hess ($\alpha' = \alpha$)} & & \multicolumn{2}{c}{Hess ($\alpha' = 1$)} \\
\cmidrule(lr){2-3}\cmidrule(lr){5-6}
 & $\rho$ & $p$ & & $\rho$ & $p$ \\
\midrule
$\alpha = 0.5$ ($n = 30$) & $+0.04$ & $0.82$ & & $+0.18$ & $0.36$ \\
$\alpha = 0.6$ ($n = 20$) & $-0.26$ & $0.28$ & & $-0.32$ & $0.18$ \\
$\alpha = 1$   ($n = 28$) & $+0.07$ & $0.73$ & & ---      & ---     \\
\bottomrule
\end{tabular}
\end{table}

\subsection{Compute Cost}
The training and SGLD sampling for this paper was performed on RTX-5090 GPUs.  Training a model for 16270 steps cost around 4 GPU-hours, whereas computing a single SGLD chain cost around 0.9 GPU hours for the full model, with weight-restricted SGLD sampling for the later layers costing only 0.2 GPU hours.  We estimate that the SGLD chains used in this paper had an average cost of around 0.5 GPU hours/SGLD chain. We estimate that training all the models used for this paper cost on the order of 3000 GPU hours, whereas the total cost of the SGLD sampling for the data in this paper is estimated to be around 9000 GPU hours. The compute spent on other experiments like activation steering and Hessian Trace estimation is much less. In addition to this, we spent compute on exploratory experiments that didn't make it into this paper.

\section{Distinguished Direction Criteria} \label{cluster_criteria_appendix}

In \cref{branching_logic_section,alpha1_section} we referred to \emph{clusters} in susceptibility space associated to the relative position of the agent and the goal.  There are many ways one can straightforwardly define and detect clusters.  In this appendix we compare a range of choices to verify that different criteria give qualitative similar results as we vary their definition.

We will compare three families of criteria for partitioning either the four cardinal directions or the eight cardinal and diagonal directions into a \emph{distinguished} set and its complement:

\begin{enumerate}
    \item \emph{Percentile-based} (P99/P1 and P95/P5): rank cardinal directions by median susceptibility; find the largest low cluster whose pooled high-percentile lies below the remaining directions' pooled low-percentile.  P99/P1 (99th percentile of the distinguished set lies below 1st percentile of its complement) is the strict criterion used in the main text; P95/P5 (95th percentile of the distinguished set lies below 5th percentile of its complement) is a more relaxed counterpart.
    \item \emph{Mean-separation} (stddev\_$s$): rank directions by their median; a candidate low cluster qualifies as distinguished if the separation score difference of means between the clusters exceeds the threshold $s \widehat \sigma_{\mr{high}}$ where $\widehat \sigma_{\mr{high}}$ is the empirical standard deviation of the high clusters.  We sweep over the set $s \in \{1.0, 1.5, 2.0, 2.5, 3.0, 3.5, 4.0\}$.
    \item \emph{Gap-based} (gap\_$r$): sort the direction medians, locate the largest gap between consecutive medians, and split there.  We include a guard parameter $r$, requiring that the largest gap must exceed $r$ times the median gap; otherwise no split is made.  We sweep over the set $r \in \{0, 2, 4, 8, 16, 32\}$.
\end{enumerate}

Each criterion is evaluated on the FF 2 layer susceptibilities, on-distribution at $\alpha' = \alpha$ for each population. 

\subsection{Cardinal Direction Clustering}
We first check that the qualitative conclusion of \cref{activation_steering_section} -- that susceptibility-distinguished directions are systematically harder to steer than non-distinguished ones -- does not depend on the specific cluster criterion chosen.  For each (criterion, population) combination we compute the activation-steering $t$-statistic of \cref{activation_steering_section} and check whether it remains significant.  We report the three populations where $\alpha = 0.5, 0.6$ and 1 separately.  The data is presented in two tables.  \Cref{tab:cluster_crit_early} reports the $t$-statistic and seed count $n$ at the first phase 3 checkpoint by population. \Cref{tab:cluster_crit_cross} reports the same quantities for the cross-checkpoint test, i.e. where we use the same clusters and attempt activation steering late in phase three.

\begin{remark} \label{rem:P99_alpha_less_than_1}
The strict percentile-based criterion P99/P1 is, in practice, insufficiently sensitive to detect meaningful clusters when $\alpha < 1$.  For example, for $\alpha = 0.6$ in a representative subset of $20$ $\alpha = 0.6$ seeds, the criterion P99/P1 fires on only $2$ seeds for off-distribution evaluation $\alpha' = 1$ and on none for on-distribution evaluation $\alpha' = \alpha$.
\end{remark}

\begin{table}[h]
\centering
\caption{Activation-steering $t$-statistics and seed counts at the first phase 3 checkpoint on the FF 2 layer, broken out by training-distribution population.  Each cell reports the one-sample two-sided $t$-statistic of \cref{activation_steering_section} together with $n$, the number of seeds in which the criterion identifies at least one distinguished direction.  The symbol ``--'' indicates that the criterion is not applicable to the population as in \cref{rem:P99_alpha_less_than_1} or that fewer than $5$ seeds supplied a distinguished direction.}
\label{tab:cluster_crit_early}
\begin{tabular}{l rr c rr c rr}
\toprule
Criterion & \multicolumn{2}{c}{$\alpha = 0.5$} & & \multicolumn{2}{c}{$\alpha = 0.6$} & & \multicolumn{2}{c}{$\alpha = 1$} \\
\cmidrule(lr){2-3}\cmidrule(lr){5-6}\cmidrule(lr){8-9}
 & $t$ & $n$ & & $t$ & $n$ & & $t$ & $n$ \\
\midrule
P99/P1 & -- & -- & & -- & -- & & $6.7$ & $64$ \\
P95/P5 & $9.3$ & $66$ & & $2.6$ & $29$ & & $8.1$ & $90$ \\
stddev\_1.0 & $7.2$ & $97$ & & $7.2$ & $63$ & & $10.8$ & $85$ \\
stddev\_1.5 & $7.8$ & $99$ & & $7.3$ & $64$ & & $11.7$ & $89$ \\
stddev\_2.0 & $7.8$ & $96$ & & $6.3$ & $64$ & & $11.1$ & $94$ \\
stddev\_2.5 & $7.9$ & $89$ & & $6.4$ & $63$ & & $9.4$ & $92$ \\
stddev\_3.0 & $8.1$ & $86$ & & $7.0$ & $60$ & & $10.1$ & $86$ \\
stddev\_3.5 & $9.1$ & $85$ & & $6.6$ & $53$ & & $7.6$ & $74$ \\
stddev\_4.0 & $8.3$ & $78$ & & $3.6$ & $37$ & & $8.0$ & $65$ \\
gap\_0 & $10.6$ & $104$ & & $5.6$ & $67$ & & $6.8$ & $99$ \\
gap\_2 & $10.6$ & $104$ & & $5.6$ & $67$ & & $6.8$ & $98$ \\
gap\_4 & $10.4$ & $103$ & & $5.3$ & $64$ & & $6.8$ & $86$ \\
gap\_8 & $8.0$ & $75$ & & $4.0$ & $37$ & & $7.3$ & $56$ \\
gap\_16 & $3.2$ & $21$ & & $1.1$ & $8$ & & $3.6$ & $19$ \\
gap\_32 & -- & -- & & -- & -- & & $4.2$ & $5$ \\
\bottomrule
\end{tabular}
\end{table}

\begin{table}[h]
\centering
\caption{Activation-steering $t$-statistics and seed counts for the cross-checkpoint test on the FF 2 layer: distinguished directions identified at the first phase 3 checkpoint, used to predict steering resistance at end of training.  Notation is the same as for \cref{tab:cluster_crit_early}.}
\label{tab:cluster_crit_cross}
\begin{tabular}{l rr c rr c rr}
\toprule
Criterion & \multicolumn{2}{c}{$\alpha = 0.5$} & & \multicolumn{2}{c}{$\alpha = 0.6$} & & \multicolumn{2}{c}{$\alpha = 1$} \\
\cmidrule(lr){2-3}\cmidrule(lr){5-6}\cmidrule(lr){8-9}
 & $t$ & $n$ & & $t$ & $n$ & & $t$ & $n$ \\
\midrule
P99/P1 & -- & -- & & -- & -- & & $7.7$ & $65$ \\
P95/P5 & $13.7$ & $66$ & & $2.6$ & $29$ & & $9.2$ & $90$ \\
stddev\_1.0 & $12.9$ & $97$ & & $12.6$ & $63$ & & $12.2$ & $85$ \\
stddev\_1.5 & $13.8$ & $99$ & & $12.9$ & $64$ & & $12.6$ & $89$ \\
stddev\_2.0 & $14.6$ & $96$ & & $12.7$ & $64$ & & $13.0$ & $94$ \\
stddev\_2.5 & $14.9$ & $89$ & & $12.1$ & $63$ & & $12.2$ & $92$ \\
stddev\_3.0 & $14.9$ & $86$ & & $11.6$ & $60$ & & $10.6$ & $86$ \\
stddev\_3.5 & $15.0$ & $85$ & & $10.2$ & $53$ & & $9.6$ & $74$ \\
stddev\_4.0 & $15.7$ & $78$ & & $6.4$ & $37$ & & $8.9$ & $65$ \\
gap\_0 & $15.4$ & $104$ & & $6.9$ & $67$ & & $9.2$ & $100$ \\
gap\_2 & $15.4$ & $104$ & & $6.9$ & $67$ & & $9.2$ & $99$ \\
gap\_4 & $15.3$ & $103$ & & $6.6$ & $64$ & & $9.2$ & $87$ \\
gap\_8 & $12.7$ & $75$ & & $4.7$ & $37$ & & $8.0$ & $56$ \\
gap\_16 & $5.2$ & $21$ & & $1.2$ & $8$ & & $5.8$ & $19$ \\
gap\_32 & -- & -- & & -- & -- & & $5.3$ & $5$ \\
\bottomrule
\end{tabular}
\end{table}

The $t$-statistic remains significant across all three populations, all three criterion families, and a wide range of thresholds within each family.  The exceptions are the strictest criteria (high-threshold gap and the strictest stddev variants) for $\alpha = 0.6$, which detect clusters on so few seeds that the test loses power.  Cross-checkpoint $t$-statistics are uniformly larger than same-checkpoint ones, reinforcing the observation from \cref{within_optimal_section} that susceptibility labels at the first phase 3 checkpoint predict end-of-training steering resistance better than end-of-training labels do themselves.  Among criteria with sufficient seed counts no single one dominates: the stddev family with thresholds $1.5$--$2.0$ achieves the largest $t$-statistics in most populations, but the percentile and gap families also produce significant results across the same range of populations.  We retain P99/P1 and the generalized P95/P5 of \cref{distinguished_directions_section} as the main-text default criteria for their interpretability rather than for their statistical power.

When we instead use susceptibilities at the end of training for the $\alpha = 0.6$ population to define clusters we find that no criterion produces a statistically significant prediction using same-checkpoint susceptibilities (all $|t| \le 1.6$), confirming that observation from \cref{within_optimal_section} that the susceptibility signal weakens by the end of the optimal phase is robust to different clustering criteria.

\subsection{Cardinal and Diagonal Direction Clustering}

Several of our cluster-based observations at $\alpha < 1$ (including discussion in \cref{branching_logic_section} and \cref{within_optimal_section}) use the P95/P5 criterion to define clusters in the set of eight cardinal and diagonal directions.  The activation-steering test of \cref{activation_steering_section} is, however, only defined for cardinally-aligned mouse-cheese pairs: mirroring the cheese across the mouse is well-posed for a cardinal alignment but requires additional choices for diagonals.  When computing the steering test statistic $X_i$ from an eight-direction-distinguished cluster we use only its cardinal subset.  We will check, therefore, whether the conclusions \cref{activation_steering_section} would change had we used a cardinal-only cluster criterion throughout instead of the eight-direction generalization.

\Cref{tab:cluster_crit_4_vs_8} reports activation-steering $t$-statistics for a few representative criteria computed under both cardinal-only (four-direction) and cardinal-plus-diagonal (eight-direction) clustering on the $\alpha = 0.5$ population (in which the natural clusters frequently mix cardinal and diagonal directions).  The $t$-statistics are comparable across criterion families.  The qualitative conclusion of \cref{activation_steering_section} -- that susceptibility-distinguished directions are systematically harder to steer -- is unchanged.  We retain the eight-direction criterion in the main text because it is essential to the cluster description at $\alpha < 1$ flagged at the start of this subsection (i.e. distinguished clusters including the Down-Right class); the steering analysis of \cref{activation_steering_section} yields the same qualitative conclusion either way.

\begin{table}[h]
\centering
\caption{End of phase 3 activation-steering $t$-statistics and seed counts for some representative cluster-detection criteria for $\alpha = 0.5$, comparing clusters in cardinal directions only (four-direction) and cardinal and diagonal directions.}
\label{tab:cluster_crit_4_vs_8}
\begin{tabular}{l rr c rr}
\toprule
Criterion & \multicolumn{2}{c}{Four-direction} & & \multicolumn{2}{c}{Eight-direction} \\
\cmidrule(lr){2-3}\cmidrule(lr){5-6}
 & $t$ & $n$ & & $t$ & $n$ \\
\midrule
P95/P5 & $13.7$ & $29$ & & $12.8$ & $41$ \\
stddev\_1.5 & $6.3$ & $74$ & & $3.9$ & $39$ \\
stddev\_2.0 & $6.2$ & $65$ & & $5.3$ & $38$ \\
stddev\_3.0 & $6.6$ & $55$ & & $4.8$ & $41$ \\
gap\_0 & $15.9$ & $104$ & & $16.1$ & $101$ \\
\bottomrule
\end{tabular}
\end{table}

\begin{remark}
The cluster-detection criteria above are all evaluated on-distribution, i.e. for $\alpha' = \alpha$.  For $\alpha < 1$ models the cluster of \cref{branching_logic_section} is not consistently surfaced when susceptibilities are evaluated off-distribution at $\alpha' = 1$, even at converged phase 3 checkpoints; we therefore reserve off-distribution evaluation for the comparisons in \cref{regression_appendix,phase2_residue_section}, where the analysis does not depend on cluster detection.
\end{remark}

\section{Symmetry Breaking and Initialization} \label{fixed_init_appendix}
In this appendix we describe the evidence behind the claim in \cref{init_sensitivity_section} that the distinguished directions in the $\alpha = 1$ susceptibility clusters are determined primarily by the initialization of the model parameters.

We selected two $\alpha = 1$ reference models, one with a strongly distinguished Right direction and one with a strongly distinguished Left direction.  For each reference we initialized nine additional training runs from the reference's initial parameters for a total of ten, varying only the seed used for trajectory sampling.  Susceptibilities estimated at checkpoint 2000 in each of the ten resulting models show the same distinguished direction as their reference: Right in all five runs derived from the first initialization, and Left in all five runs derived from the second. Indeed the susceptibilities are structurally very similar.  We test this by computing the distances between the Gram matrices of the ten models with each initialization seed via centered kernel alignment \citep[CKA;][]{Kornblith2019CKA}.  The mean CKA distance of models with the same initialization seed but different trajectory sampling seeds is 0.024, whereas the mean CKA distance of models within the set with different initialization seeds is 0.971.  See \cref{fig:shared_init} for an illustration of this observation.

\begin{figure}[h]
    \centering
    \includegraphics[width=0.99\linewidth]{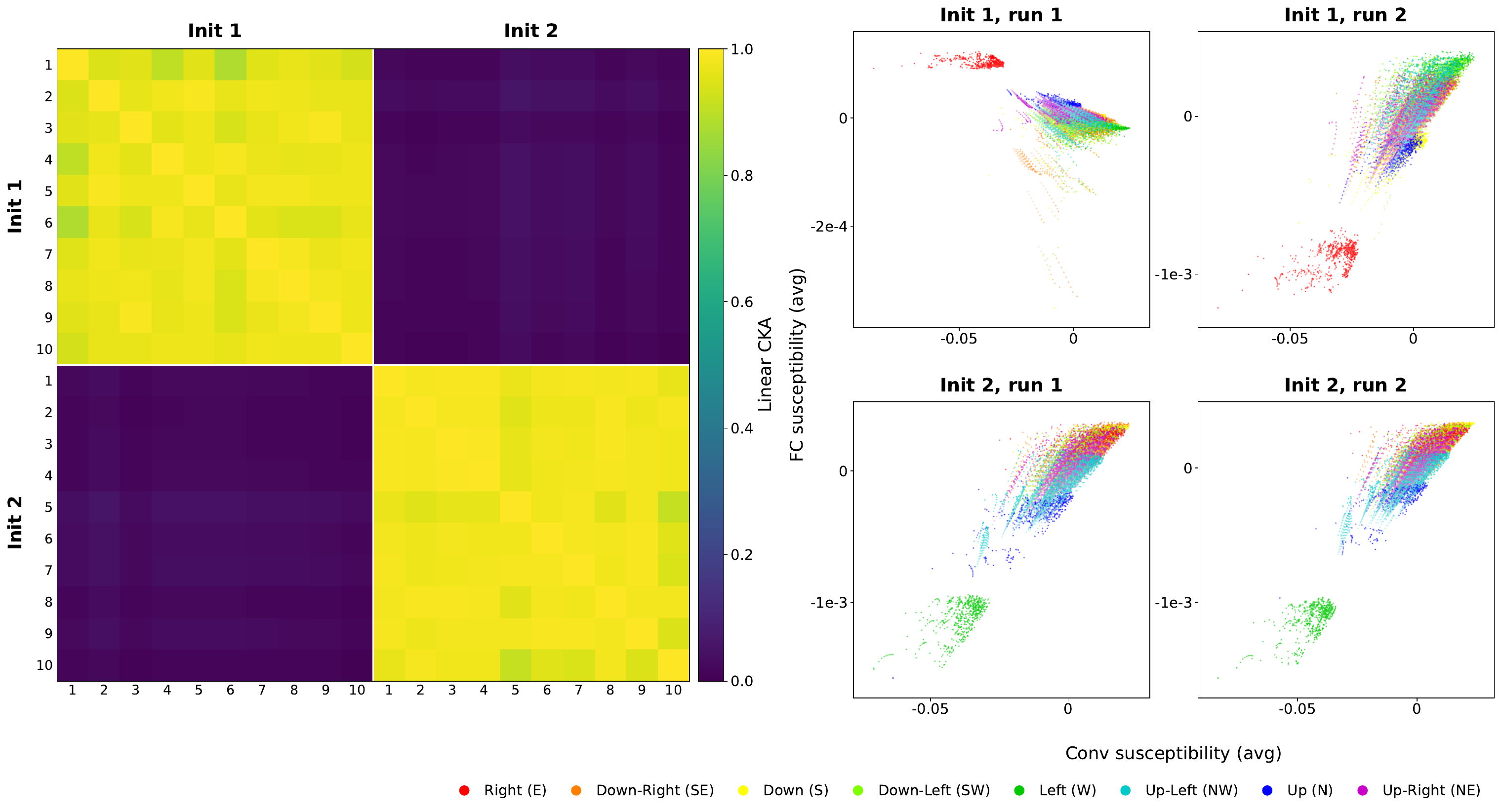}
    \caption{Similarity between susceptibilities of pairs of runs across two initialization seeds.  Susceptibilities are evaluated at checkpoint 2000 in each case and distances are computed by CKA applied to their Gram matrices.  On the left we see the CKA distances: the populations with the two distinct initialization seeds are labelled "Init 1" and "Init 2" respectively.  On the right we see scatter plots of a 2d projection of the susceptibilities for two independent runs with initialization seed 1 (above) and two independent runs with initialization seed 2 (below).}
    \label{fig:shared_init}
\end{figure}

\begin{remark}
 Note that we are not claiming that this is a generic feature of RL training: the small state space ($\sim 14{,}500$ states) and large effective batch size ($\sim 6 \times 10^5$ environment steps per gradient update) in our environment leave little room for trajectory randomness to accumulate, and we would expect the balance between initialization and trajectory randomness to differ in environments where the stochasticity of individual rollouts contributes more to each gradient step.
\end{remark}

\section{Supporting Statistical Analysis} \label{supporting_appendix}
This appendix provides supplementary statistical data in support of the following claims in \cref{evidence_section}.  We first discuss the activation steering analysis of \cref{activation_steering_section}.
\begin{itemize}
    \item \cref{steering_full_data_appendix} reports the full set 
    of $t$-test conditions for models trained with $\alpha = 0.5$.
    \item \cref{rank_correlation_appendix} extends the binary analysis of distinguished vs non-distinguished directions to a continuous correlation analysis.
    \item \cref{regression_appendix} investigates the relative contribution of susceptibilities and the binary classification associated to those directions that do and do not receive reward in phase 2.
\end{itemize}
Lastly we provide supplementary evidence for \cref{phase2_residue_section}. 
\begin{itemize}
    \item \cref{phase2_residue_appendix} verifies that the distinguished discrepancy associated to the Down-Right cluster does not replicate for other diagonal directions for which we don't expect the same asymmetry to have been introduced by intermediate phases.
\end{itemize}

\subsection{Activation Steering Additional Data} \label{steering_full_data_appendix}

We present the $t$-statistics for activation steering strength following \cref{activation_steering_section} compared to average susceptibility differences between clusters, as one varies the estimation process in several ways.  We will see that the qualitative result -- distinguished directions being significantly harder to steer -- is not specific to those conditions.  This data is presented in the $\alpha = 0.5$ case for the FF 2 layer as usual; the results for $\alpha = 0.6$ are similar.

Specifically, \cref{tab:al05_steering_full} presents  activation-steering $t$-test results for the following options.  We may define clusters using the on-distribution susceptibilities $\alpha' = 0.5$ or the off-distribution susceptibilities $\alpha' = 1$ either computed early or late in phase 3, though we find no clustering for $\alpha'=1$ late in phase 3 with the exception 4 training runs (and therefore exclude it from \cref{tab:al05_steering_full}).  We may also use the na\"ive binary partition defining $\{\mr{Right}, \mr{Down}\}$ as a cluster.  In each case we may attempt activation steering either early or late in phase 3 (though we do not attempt to steer early in phase 3 using susceptibilities computed late in phase 3).  This gives a total of seven possibilities. 

\begin{table}[h]
\centering
\caption{$\alpha = 0.5$ activation steering $t$-tests at the FF 2 layer.  $X_i^{\mr{FF 2}}$ is the per-seed test statistic defined in \cref{activation_steering_section}: the mean steering threshold for distinguished directions minus the mean steering threshold for non-distinguished directions.  Here and below ``phase 3 start'' refers to the first checkpoint of phase 3 for the seed in question, and ``phase 3 end'' refers to the last training checkpoint.}
\label{tab:al05_steering_full}
\resizebox{\textwidth}{!}{%
\begin{tabular}{lrrrrr}
\toprule
Condition & $n$ & $\mr{mean}(X)$ & $\mr{std}(X)$ & $t$ & $p$ (two-sided) \\
\midrule
Phase-3-start labels ($\alpha' = 1$), steered at phase 3 start & 41 & $+0.141$ $[+0.108, +0.172]$ & $0.107$ & $8.47$ & $1.9 \times 10^{-10}$ \\
Phase-3-start labels ($\alpha' = 0.5$), steered at phase 3 start & 66 & $+0.124$ $[+0.098, +0.151]$ & $0.108$ & $9.34$ & $1.3 \times 10^{-13}$ \\
Phase-3-start labels ($\alpha' = 1$), steered at phase 3 end & 41 & $+0.239$ $[+0.194, +0.280]$ & $0.143$ & $10.67$ & $2.8 \times 10^{-13}$ \\
Phase-3-start labels ($\alpha' = 0.5$), steered at phase 3 end & 66 & $+0.214$ $[+0.184, +0.245]$ & $0.127$ & $13.69$ & $8.1 \times 10^{-21}$ \\
Phase-3-end labels ($\alpha' = 0.5$), steered at phase 3 end & 41 & $+0.254$ $[+0.213, +0.290]$ & $0.127$ & $12.82$ & $9.4 \times 10^{-16}$ \\
Fixed $\{\mr{Right}, \mr{Down}\}$, steered at phase 3 start & 104 & $+0.131$ $[+0.108, +0.154]$ & $0.119$ & $11.20$ & $1.6 \times 10^{-19}$ \\
Fixed $\{\mr{Right}, \mr{Down}\}$, steered at phase 3 end & 104 & $+0.205$ $[+0.180, +0.230]$ & $0.133$ & $15.73$ & $3.8 \times 10^{-29}$ \\
\bottomrule
\end{tabular}%

}
\end{table}

All seven conditions reject the null hypothesis $\bb E(X^{\mr{FF 2}}) = 0$ with $p < 10^{-9}$.  The fixed $\{\mr{Right}, \mr{Down}\}$ partition produces the largest $t$-statistics because it uses all 104 matched seeds, but the susceptibility-identified-directions conditions remain significant on smaller samples.

\subsection{Rank Correlation Between Susceptibility and Steering Orderings} \label{rank_correlation_appendix}

The activation-steering comparison presented in \cref{activation_steering_section} relies on a binary classification of directions into distinguished and non-distinguished clusters.  We present here an alternative based on rank-ordering by studying the Spearman rank correlation between susceptibility ranking and steering-threshold ranking across the four cardinal directions.

For each seed, we rank the four cardinal directions by their mean FF 2 susceptibility (lowest to highest) and by their mean activation steering threshold (highest to lowest) and compute the per-seed Spearman rank correlation $\rho$ (using medians in place of means in either or both rankings gives extremely similar values).  In \cref{tab:rank_correlation_full} we report the mean $\rho$ across seeds.

\begin{table}[h]
\centering
\caption{Mean per-seed Spearman rank correlation $\ol \rho$ between FF 2 susceptibility ranking and activation-steering-threshold ranking across the four cardinal directions, with both quantities summarized by their mean over the states in each direction.  Conditions are labelled by the training distribution, training checkpoint, and susceptibility evaluation distribution $\alpha'$.}
\label{tab:rank_correlation_full}
\begin{tabular}{lcc}
\toprule
Condition & $n$ & $\ol \rho$ \\
\midrule
$\alpha = 0.5$, phase 3 start, $\alpha' = 0.5$ & 104 & $+0.454$ $[+0.373, +0.533]$ \\
$\alpha = 0.5$, phase 3 start, $\alpha' = 1$ & 63 & $+0.394$ $[+0.270, +0.511]$ \\
$\alpha = 0.5$, phase 3 end, $\alpha' = 0.5$ & 104 & $+0.490$ $[+0.400, +0.579]$ \\
$\alpha = 0.5$, phase 3 end, $\alpha' = 1$ & 104 & $+0.158$ $[+0.050, +0.265]$ \\
$\alpha = 1$, phase 3 start, $\alpha' = 1$ & 100 & $+0.406$ $[+0.302, +0.502]$ \\
$\alpha = 1$, phase 3 end, $\alpha' = 1$ & 101 & $+0.053$ $[-0.069, +0.174]$ \\
$\alpha = 0.6$, phase 3 start, $\alpha' = 0.6$ & 67 & $+0.343$ $[+0.242, +0.439]$ \\
$\alpha = 0.6$, phase 3 start, $\alpha' = 1$ & 67 & $+0.415$ $[+0.307, +0.516]$ \\
$\alpha = 0.6$, phase 3 end, $\alpha' = 0.6$ & 67 & $+0.149$ $[+0.033, +0.263]$ \\
$\alpha = 0.6$, phase 3 end, $\alpha' = 1$ & 67 & $+0.084$ $[-0.048, +0.212]$ \\
\bottomrule
\end{tabular}
\end{table}

The mean rank correlation $\ol \rho$ is positive at the first phase 3 checkpoint across all populations and evaluation distributions, with values in the interval $[0.3, 0.45]$.  At end of training this remains the case for $\alpha = 0.5$ on-distribution but weakens substantially for $\alpha = 1$ and $\alpha = 0.6$ ($\ol \rho \le +0.15$), with 0 in the confidence interval for $\alpha'=1$, consistent with the within-phase-3 weakening of the susceptibility signal documented in \cref{within_optimal_section}.  

\subsection{Explanatory Power of Susceptibilities} \label{regression_appendix}

In the $\alpha < 1$ setting the result of \cref{activation_steering_section} that susceptibility-distinguished directions are harder to steer is consistent with an alternative explanation: that susceptibilities are just rediscovering that $\{\mr{Right}, \mr{Down}\}$ states are systematically harder to steer than $\{\mr{Left}, \mr{Up}\}$ states regardless of whether they are distinguished by susceptibilities (\cref{activation_steering_section}).  To separate these readings we fit linear regressions predicting per-direction steering resistance from both the fixed $\{\mr{Right}, \mr{Down}\}$ indicator and the susceptibility predictors, and test whether the susceptibility predictors contribute information beyond that encoded in the fixed partition.

We set up our analysis as follows.  We will define three predictors as a function of the seed $i$ and cardinal direction $D_c \in \{\mr{Right}, \mr{Down}, \mr{Left}, \mr{Up}\}$.  We will then fit the minimum activation steering strengths $s_{i,D_c}$ for the FF2 layer to affine linear combinations of these three predictors or subsets thereof.  The linear predictors we will use are 
\begin{enumerate}
    \item The indicator $f_{D_c}$ defined by $f_{D_c} = 1$ if $D_c \in \{\mr{Right}, \mr{Down}\}$ and $f_{D_c} = 0$ if $D_c \in \{\mr{Left}, \mr{Up}\}$.
    \item The Z-scored mean FF 2 susceptibility over states in direction class $D_c$ computed on distribution, denoted $\ol \chi^{\alpha}_{i,D_c}$.
    \item The Z-scored mean FF 2 susceptibility over states in direction class $D_c$ computed off distribution with $\alpha'=1$, denoted $\ol \chi^{1}_{i,D_c}$.
\end{enumerate}

We fit the affine linear model
\[s_{i,D_c} = a_1 f_{D_c} + a_2 \ol \chi^{\alpha}_{i,D_c} + a_3 \ol \chi^{1}_{i,D_c} + b\]
 by ordinary least squares.  We also fit restricted variants for which one or more of the $a_j$ coefficients are constrained to equal zero.  For each variant we report two statistics, both reflecting how much of the variance in $s$ the predictors explain after charging the model for its complexity: the \emph{adjusted coefficient of determination} $R^2_{\mr{adj}} = 1 - (1 - R^2)\tfrac{n - 1}{n - k - 1}$ and the \emph{Bayesian information criterion} $\mr{BIC} = -2 \log L + k \log n$, where $L$ is the maximum likelihood in the model.

In \cref{tab:regression_alpha05} we report these two statistics for each restricted model separately at the start and end of phase 3, for a population of 64 models with $\alpha = 0.5$.

\begin{table}[h]
\centering
\caption{Comparison of regression model selection statistics for $\alpha = 0.5$ models.  Here $k$ is the number of varying predictors.  The model selected by the Bayesian information criterion is indicated in bold.}
\resizebox{\textwidth}{!}{%
\begin{tabular}{l rll c rll}
\toprule
Predictors & \multicolumn{3}{c}{Phase 3 start} & & \multicolumn{3}{c}{Phase 3 end} \\
\cmidrule(lr){2-4}\cmidrule(lr){6-8}
 & $k$ & $R^2_{\mr{adj}}$\  & BIC\  & & $k$ & $R^2_{\mr{adj}}$\  & BIC\  \\
\midrule
$f_{D_c}$                                        & $1$ & $0.250\ [0.158, 0.355]$ & $-63.7\ [-102.2, -34.0]$ & & $1$ & $0.420\ [0.308, 0.541]$ & $-129.3\ [-189.1, -84.0]$ \\
$\ol \chi^{\alpha}$                              & $1$ & $0.252\ [0.160, 0.354]$ & $-64.1\ [-101.7, -34.7]$ & & $1$ & $0.366\ [0.253, 0.491]$ & $-106.5\ [-162.9, -64.7]$ \\
$\ol \chi^{1}$                                   & $1$ & $0.181\ [0.089, 0.297]$ & $-40.9\ [-80.1, -13.7]$ & & $1$ & $0.040\ [0.002, 0.116]$ & $-0.5\ [-21.4, 9.6]$ \\
$f_{D_c}, \ol \chi^{\alpha}$                     & $2$ & $0.255\ [0.165, 0.363]$ & $-60.9\ [-100.8, -31.6]$ & & $2$ & $0.437\ [0.330, 0.557]$ & $-132.3\ [-193.8, -88.1]$ \\
$f_{D_c}, \ol \chi^{1}$                          & $2$ & $0.280\ [0.189, 0.390]$ & $\mathbf{-69.5\ [-112.1, -39.0]}$ & & $2$ & $0.440\ [0.337, 0.557]$ & $\mathbf{-133.8\ [-193.8, -90.5]}$ \\
$\ol \chi^{\alpha}, \ol \chi^{1}$                & $2$ & $0.271\ [0.181, 0.380]$ & $-66.4\ [-107.8, -36.6]$ & & $2$ & $0.384\ [0.277, 0.509]$ & $-109.5\ [-167.4, -68.2]$ \\
$f_{D_c}, \ol \chi^{\alpha}, \ol \chi^{1}$       & $3$ & $0.278\ [0.189, 0.391]$ & $-64.3\ [-108.0, -34.6]$ & & $3$ & $0.450\ [0.352, 0.568]$ & $-133.8\ [-195.5, -91.9]$ \\
\bottomrule
\end{tabular}
}
\label{tab:regression_alpha05}

\end{table}

When estimated at the start of phase 3 the model selected by the BIC is parameterized by $f_{D_c}$ and $\ol \chi^1$ ($\mr{BIC} = -69.5$), beating $f_D$ alone ($-63.7$) by a substantial margin.  That is to say, the off-distribution susceptibility $\ol \chi^1$ contributes information beyond the fixed partition (a nested $F$-test for the addition produces $F = 11.5$, $p = 8.1 \times 10^{-4}$ to reject the null hypothesis that $\ol \chi^1$ does not contribute to the best linear two-parameter fit).  The on-distribution susceptibility $\ol \chi^{0.5}$ however does not add any additional explanatory power (the corresponding $F$-test fails to reject the null hypothesis at any non-zero power).  When steering at the end of phase 3, we find that adding $\ol \chi^{0.5}$ and $\ol \chi^{1}$ to $f_{D_c}$ produce similar BIC values, though the difference to $f_{D_c}$ alone is smaller than at the start of phase 3, with $F=8.57$ ($p=3.7\times10^{-3}$) and $F=10.20$ ($p=1.6\times10^{-3}$) for adding $\ol \chi^{0.5}$ and $\ol \chi^{1}$, respectively.   %When steering at the end of phase 3 instead the BIC still favors alone: neither susceptibility contributes significantly.  The off-distribution susceptibility carries information beyond the fixed partition early in phase 3 but not at end of training, when the fixed partition subsumes the susceptibility predictors entirely.

The same regression framework applied to a matched $52$-seed subset of the $\alpha = 0.6$ population gives a qualitatively similar conclusion. In this case the optimal model at the start of phase 3 according to the BIC is given by $\ol \chi^1$ alone ($\mr{BIC} = -39.2$).  

\subsection{Phase Transition History: Per-Direction Statistics} \label{phase2_residue_appendix}

Recall that in \cref{phase2_residue_section} we tested the Down-Right discrepancy 
\[\Delta_i = \ol \chi_{\mr{DR},i} - \tfrac{1}{3}(\ol \chi_{\mr{DL},i} + \ol \chi_{\mr{UL},i} + \ol \chi_{\mr{UR},i})\]
and its dependence on the timing of the phase transitions into phase 2 and into phase 3.  In this section we confirm that the significance of this distinction is specific to asymmetry between the Down-Right direction and the other three diagonal directions, and not similarly variable for the other three diagonals not singled out by asymmetry in the initial state distribution.

For each diagonal direction $D_d$ define a corresponding discrepancy
\[\Delta^{D_d}_i = \ol \chi_{D_d, i} - \tfrac{1}{3} \sum_{D_d' \ne D_d} \ol \chi_{D_d', i}.\] 
Of the 104 $\alpha = 0.5$ seeds we classify (\cref{methods_conventions_section}), 25 skipped phase 2 and 79 did not.  In \cref{tab:phase2_perdiag} we report the per-direction $t$ and Spearman statistics of \cref{phase2_residue_section} for each diagonal direction $D_d$.

\begin{table}[h]
\centering
\caption{Per-direction Welch's two-sample $t$-tests (comparing populations that visited and skipped phase 2) and Spearman rank correlations (phase 2 duration vs $\Delta^{D_d}_i$), computed at the end of phase 3 for the $\alpha = 0.5$ population.  The $t$-tests are taken with numbers of visited and skipped models $n_v = 79$, $n_s = 25$.  Spearman tests at $\alpha' = 0.5$ use $n = 104$. }
\label{tab:phase2_perdiag}
\begin{tabular}{l rr c rr c rr}
\toprule
Direction & \multicolumn{2}{c}{Welch $\alpha' = 0.5$} & & \multicolumn{2}{c}{Welch $\alpha' = 1$} & & \multicolumn{2}{c}{Spearman $\alpha' = 0.5$} \\
\cmidrule(lr){2-3}\cmidrule(lr){5-6}\cmidrule(lr){8-9}
 & $t$ & $p$ & & $t$ & $p$ & & $\rho$ & $p$ \\
\midrule
Down-Right & $-3.92$ & $1.6 \times 10^{-4}$ & & $-4.44$ & $2.4 \times 10^{-5}$ & & $-0.367$ & $1.3 \times 10^{-4}$ \\
Down-Left & $+4.09$ & $8.9 \times 10^{-5}$ & & $+2.33$ & $0.022$ & & $+0.360$ & $1.7 \times 10^{-4}$ \\
Up-Left & $+3.28$ & $1.5 \times 10^{-3}$ & & $+2.35$ & $0.021$ & & $+0.392$ & $3.9 \times 10^{-5}$ \\
Up-Right & $+3.53$ & $6.2 \times 10^{-4}$ & & $+2.21$ & $0.030$ & & $+0.352$ & $2.5 \times 10^{-4}$ \\
\bottomrule
\end{tabular}
\end{table}
In all three tests (Spearman and on- and off-distribution $t$-tests) the Down-Right statistic is singled out by its sign.  It is not surprising that all tests are significant since the $\Delta^{D_d}$ statistics are linearly dependent by construction.  The comparable statistic sizes, however, indicate that the asymmetry is distributed approximately uniformly across the non-Down-Right diagonals rather than being concentrated in any one individually.

\FloatBarrier

\section{Susceptibility and Refined LLC Plots} \label{extra_plots_appendix}

In this appendix we present a more comprehensive sampling of susceptibility scatter plots and weight-refined LLC curves across models trained with $\alpha = 0.6$ (see \cref{fig:alpha_0.6_all_susceptibilities}) and with $\alpha = 1$ (see \cref{fig:alpha_1.0_all_susceptibilities}).
\begin{figure}
  \centering
  \includegraphics[width=\textwidth]{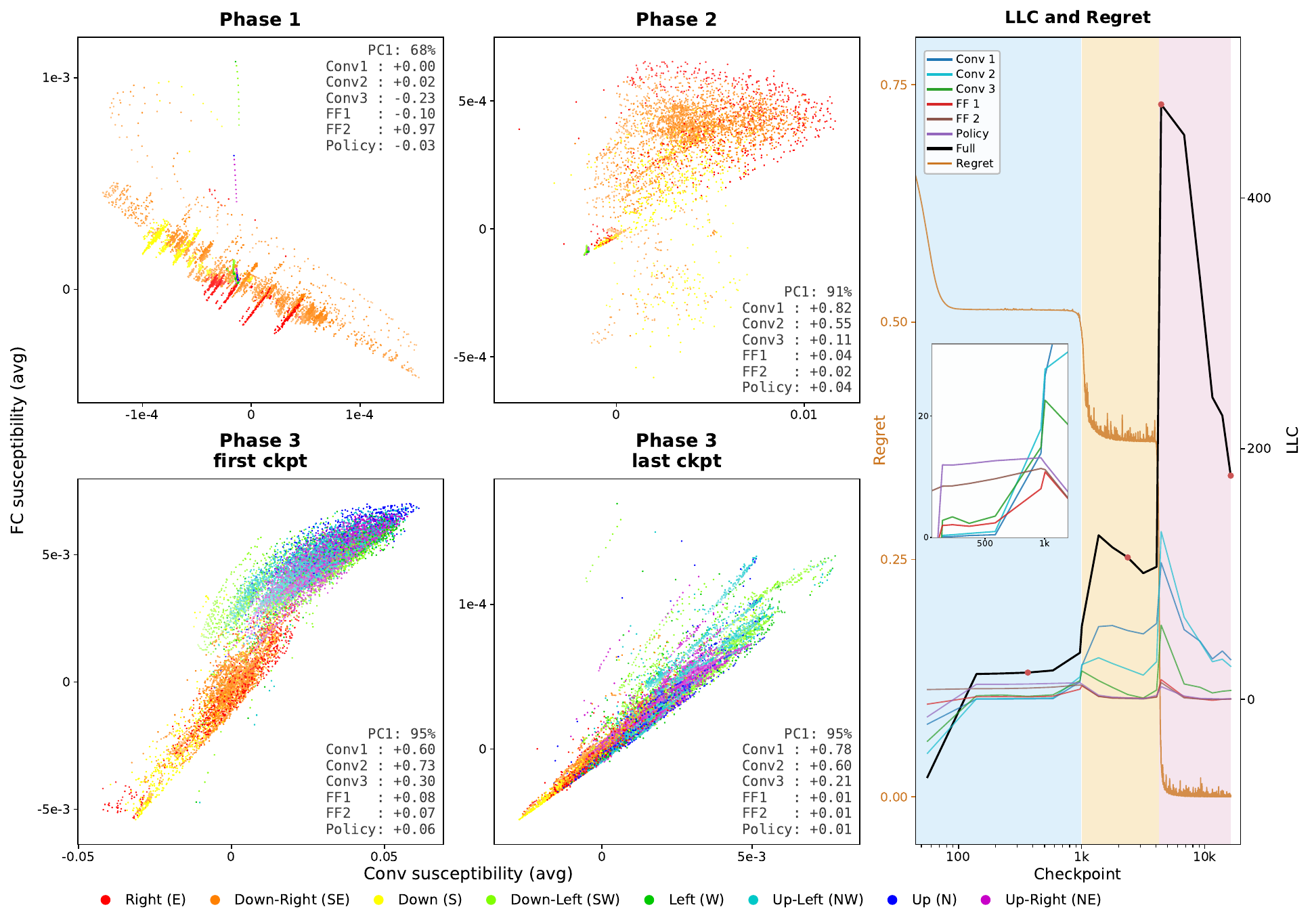}\\[1ex]
  \includegraphics[width=\textwidth]{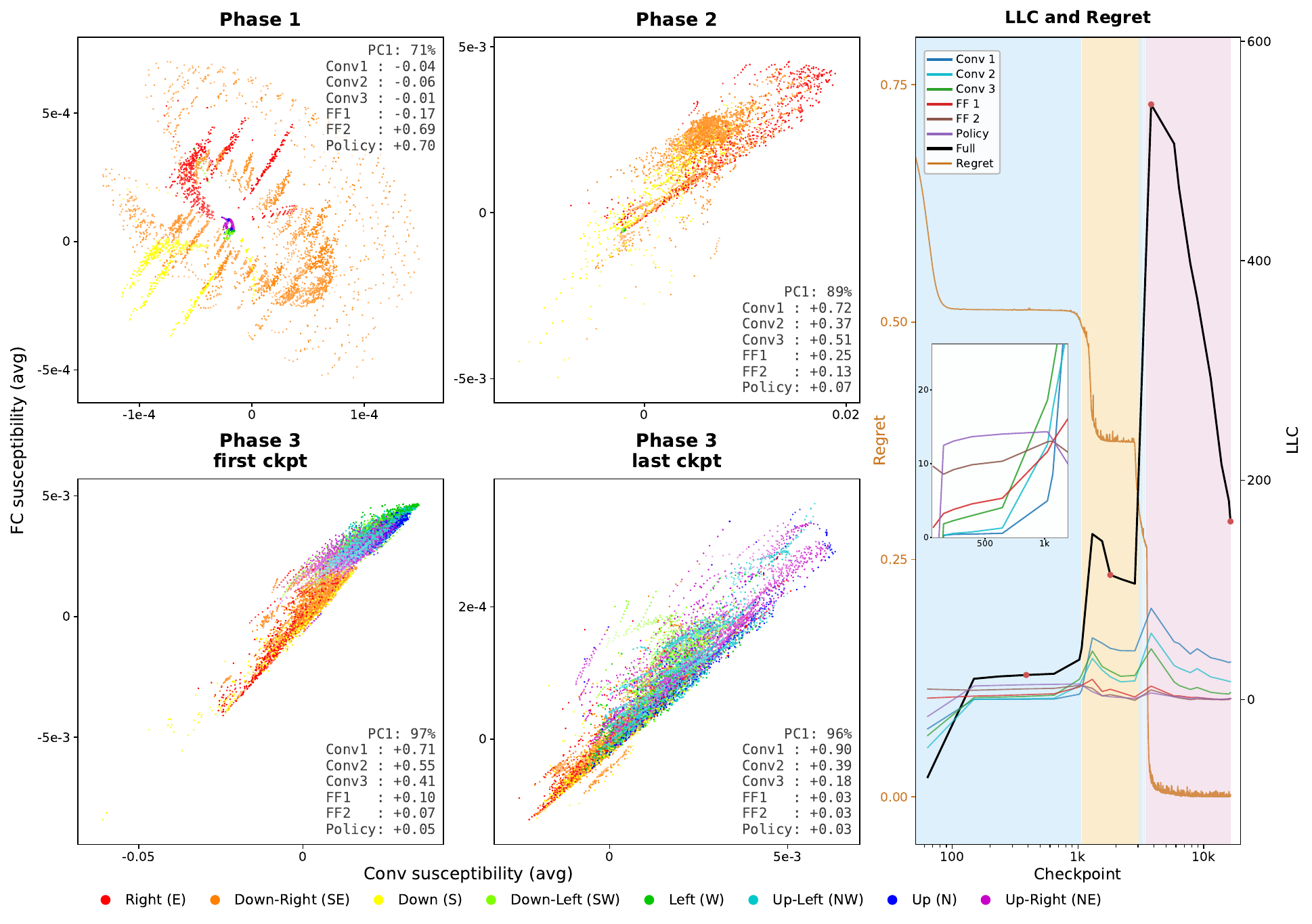}
\caption{Susceptibilities and LLC and regret curves for models trained with $\alpha=0.6$. The red dots in the LLC curves indicate the checkpoints included in the panels to the left of it. Each row is different checkpoints within the same run, different rows are different runs. The variation explained by PC1 and the cosine similarity between the PC1 and each of the weight restriction directions is indicated at the upper left corner of each susceptibility panel. All these runs go through 3 distinct phases, and there is significant susceptibility similarity across different runs.}
    \label{fig:alpha_0.6_all_susceptibilities}
\end{figure}

\begin{figure}\ContinuedFloat
  \centering
  \includegraphics[width=\textwidth]{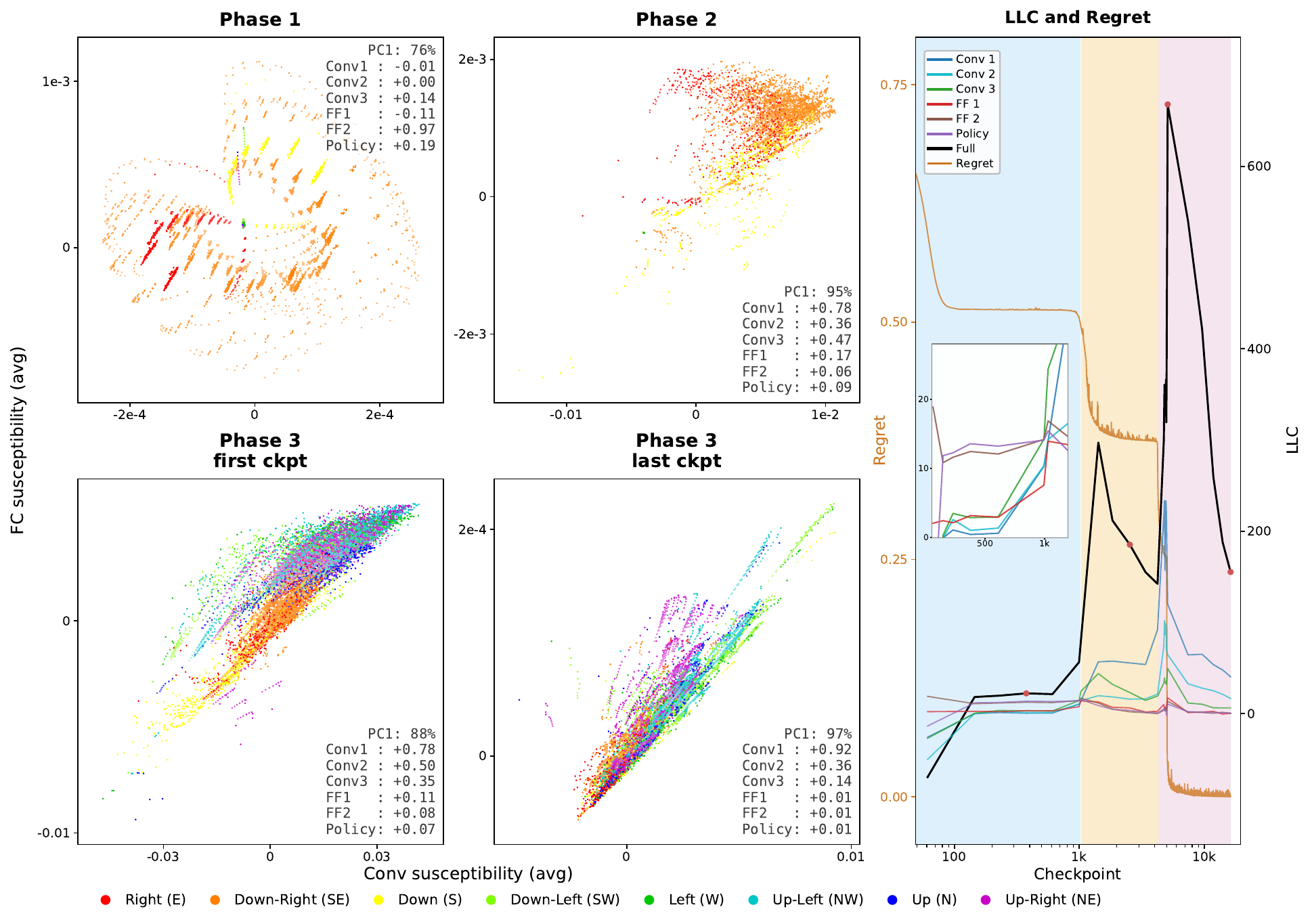}\\[1ex]
  \includegraphics[width=\textwidth]{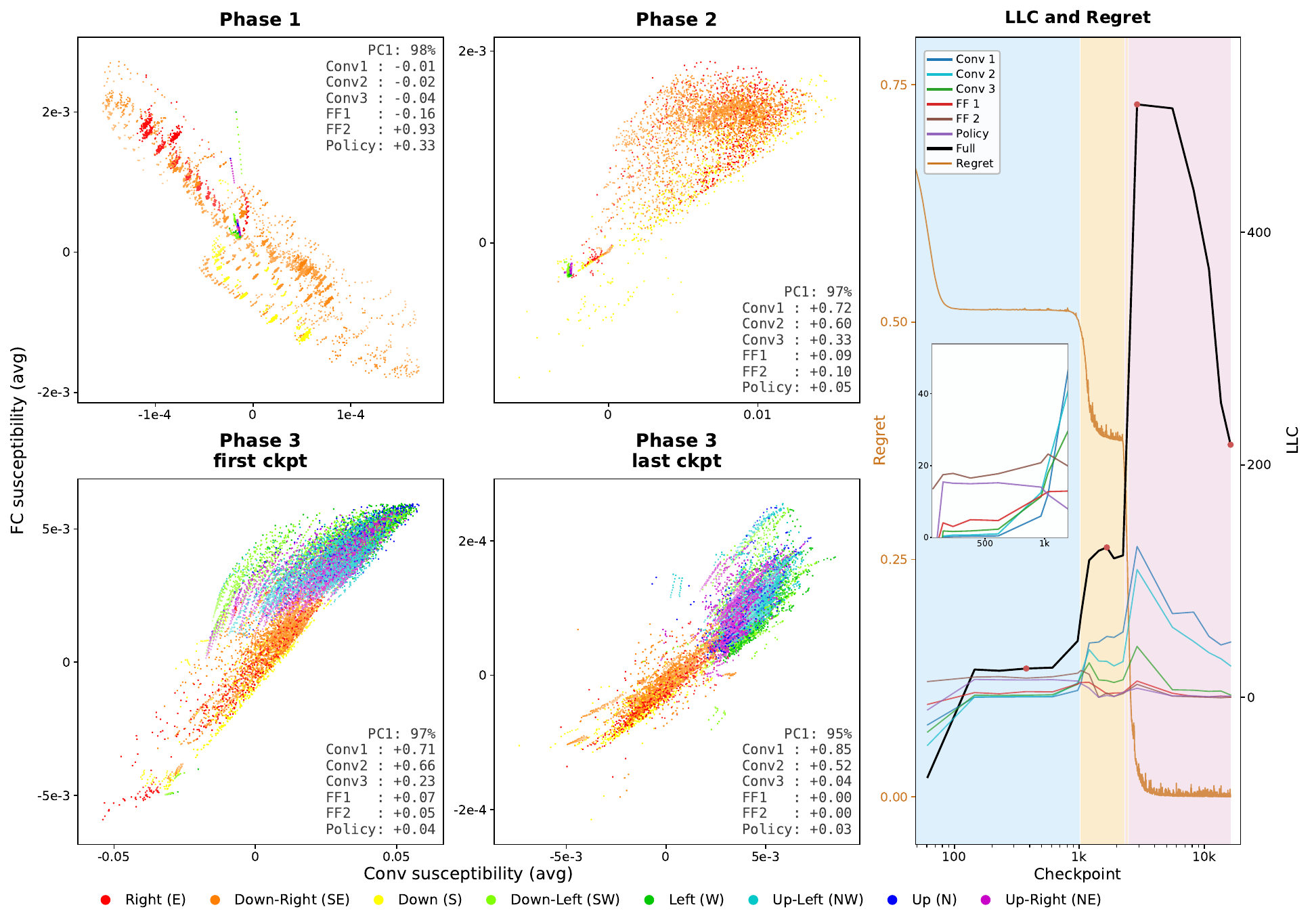}
  \caption{(continued)}
\end{figure}

\begin{figure}\ContinuedFloat
  \centering
  \includegraphics[width=\textwidth]{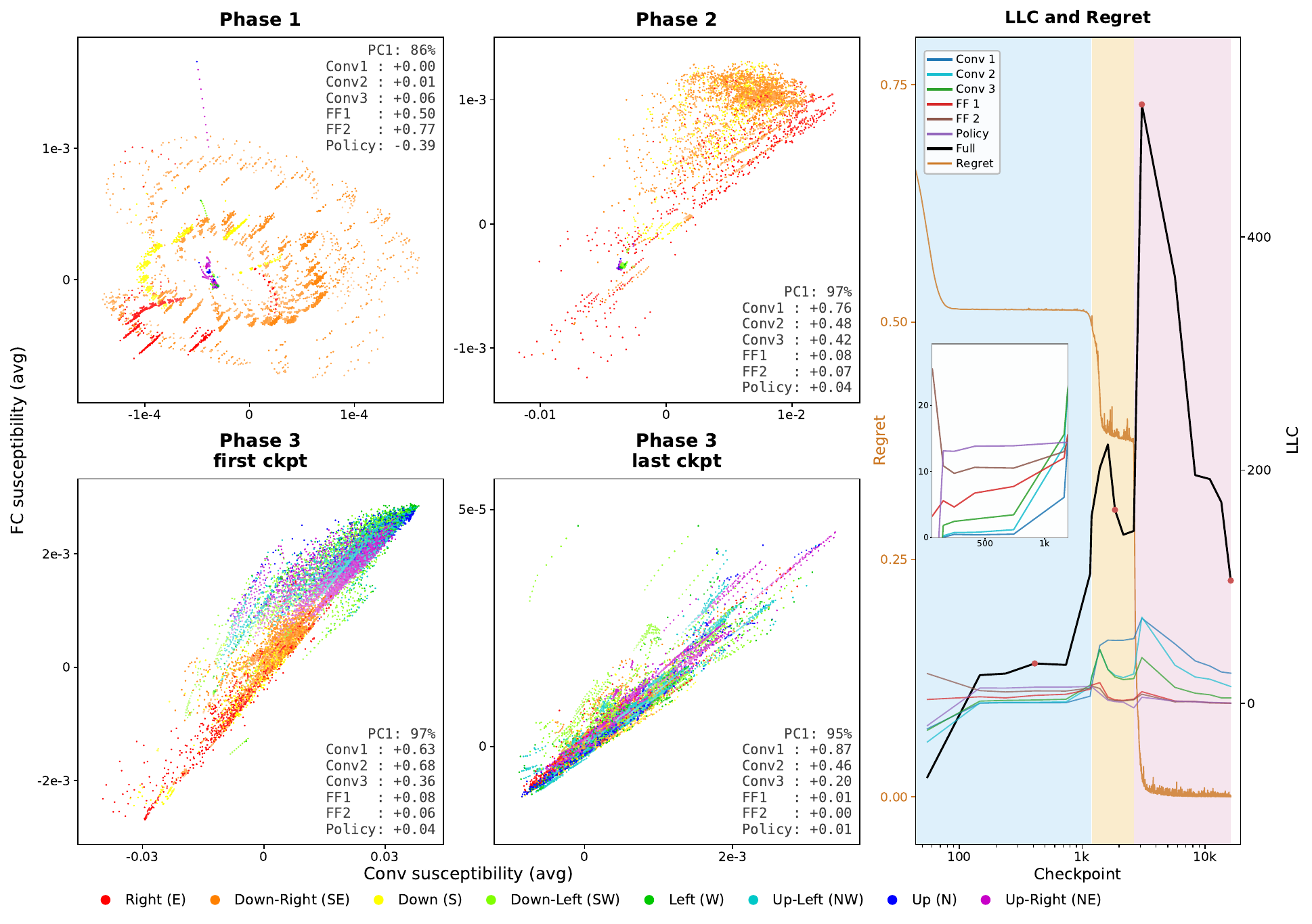}\\[1ex]
  \includegraphics[width=\textwidth]{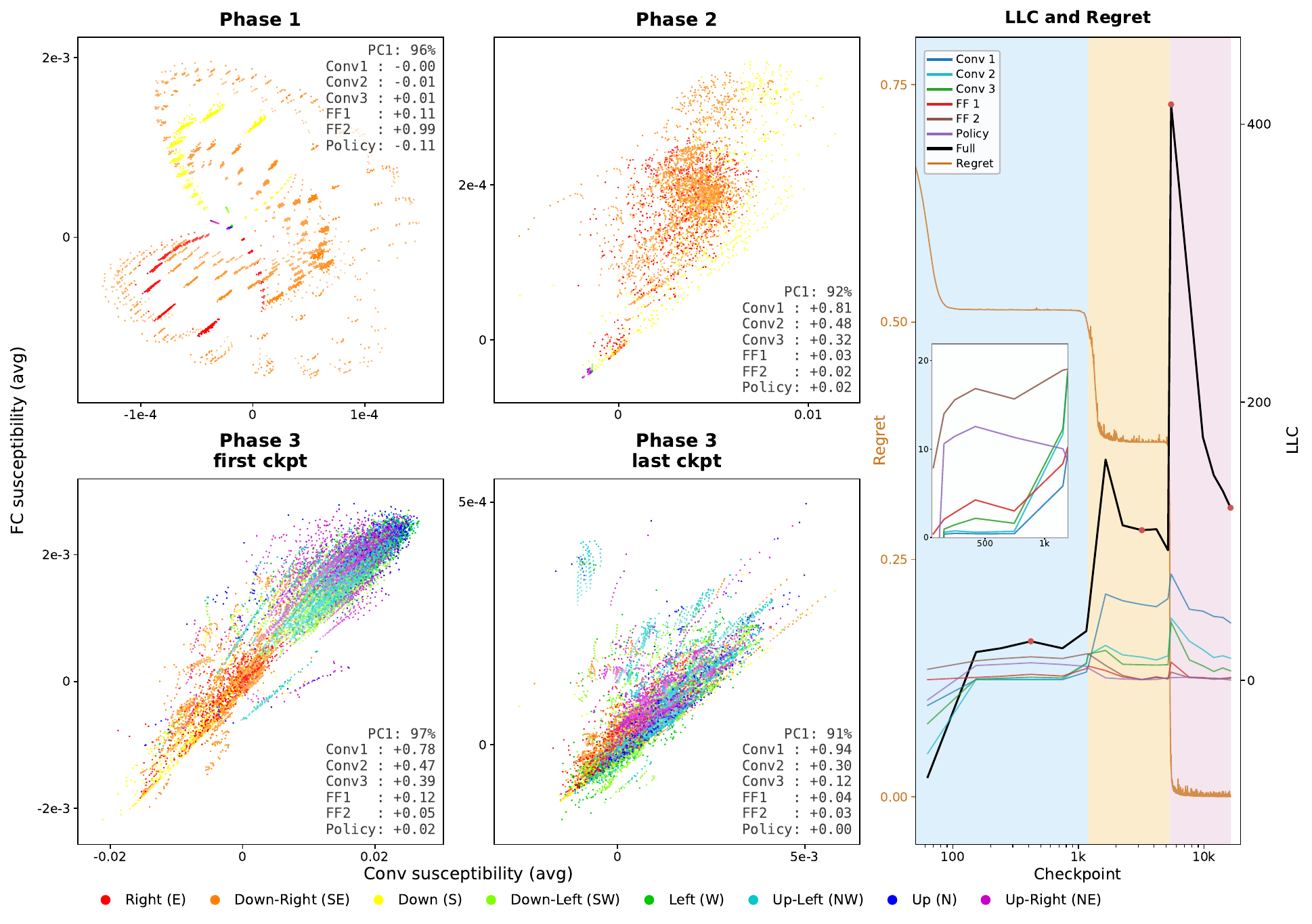}
  \caption{(continued)}
\end{figure}

\begin{figure}\ContinuedFloat
  \centering
  \includegraphics[width=\textwidth]{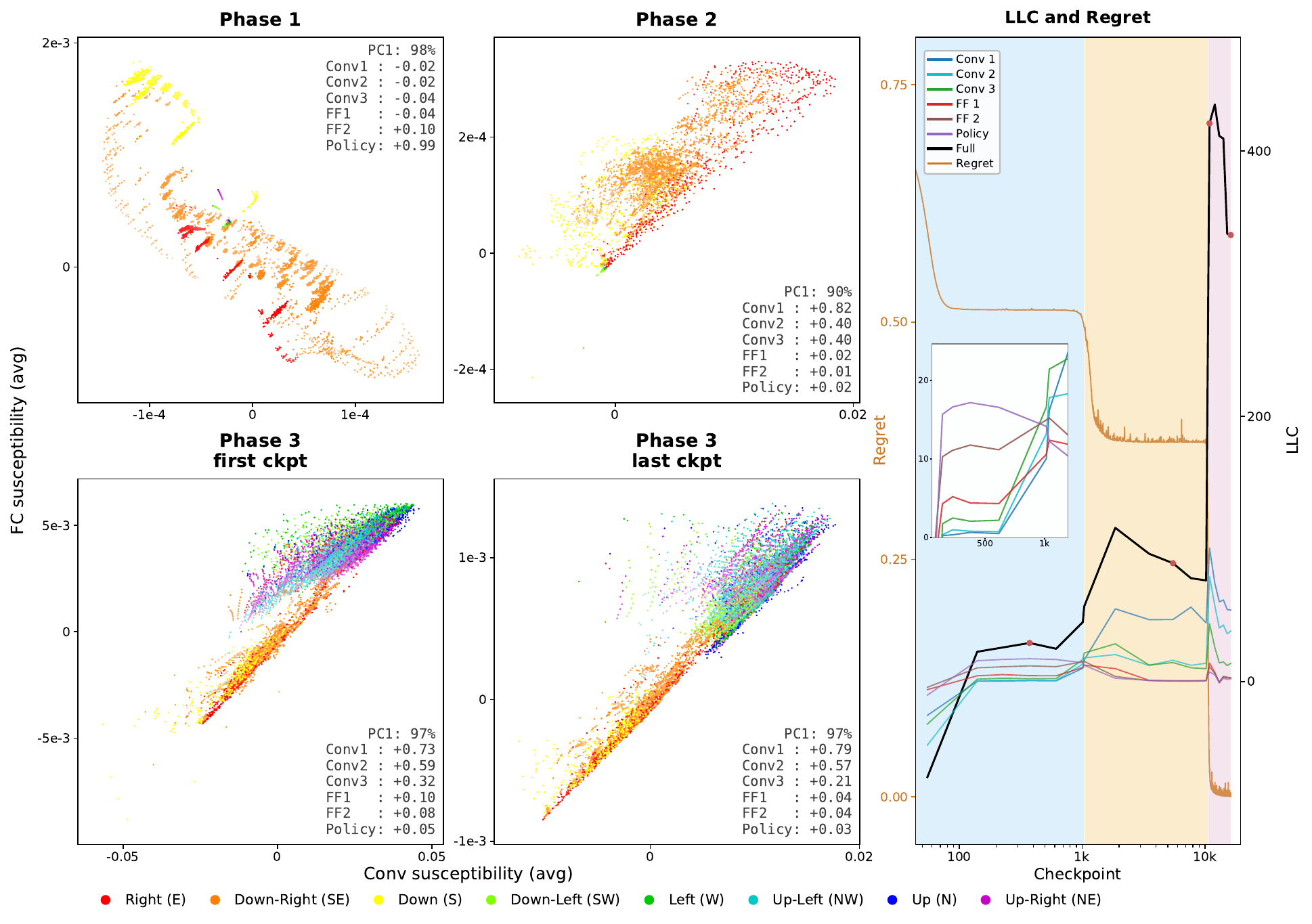}\\[1ex]
  \includegraphics[width=\textwidth]{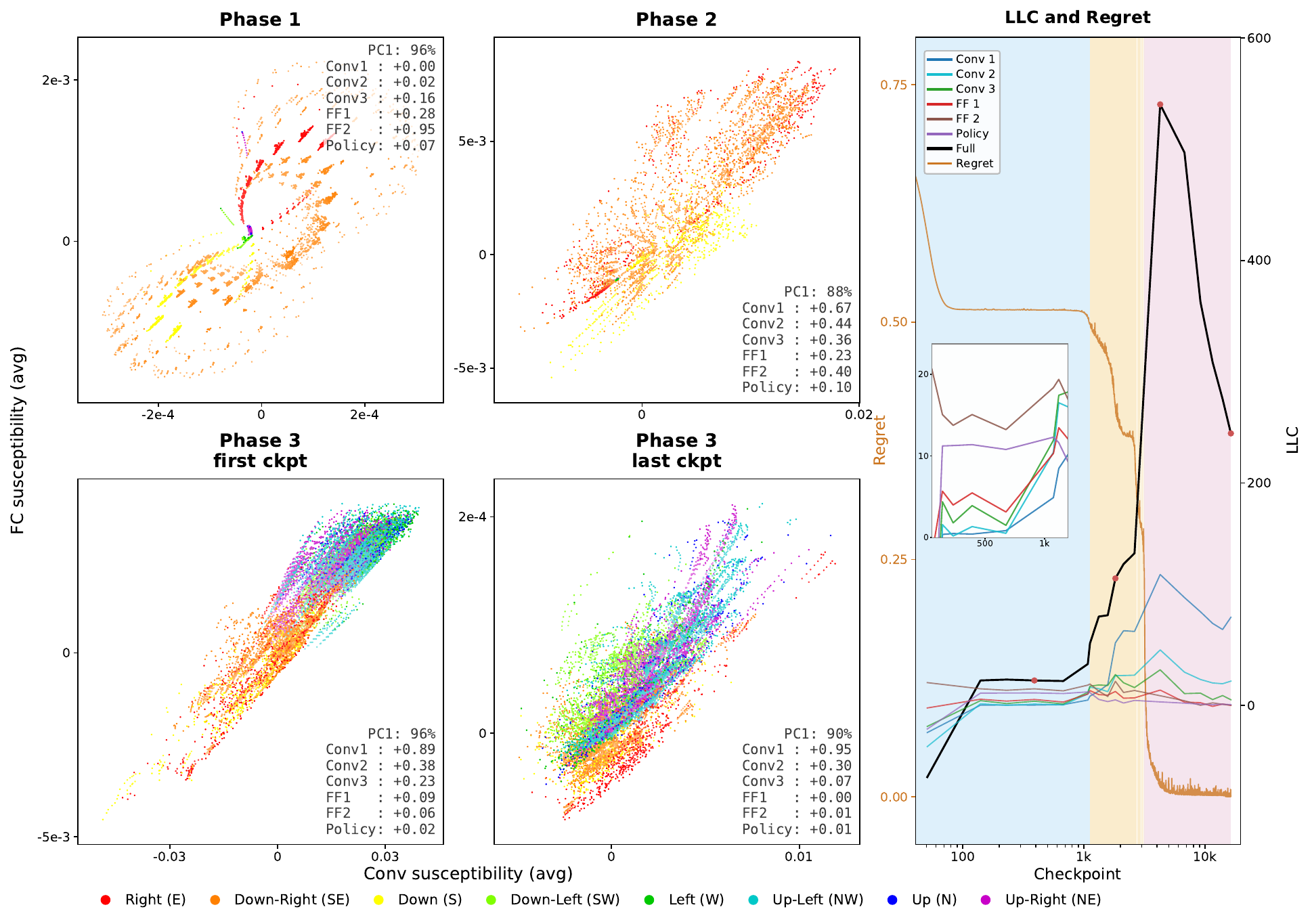}
  \caption{(continued)}
\end{figure}

\begin{figure}\ContinuedFloat
  \centering
  \includegraphics[width=\textwidth]{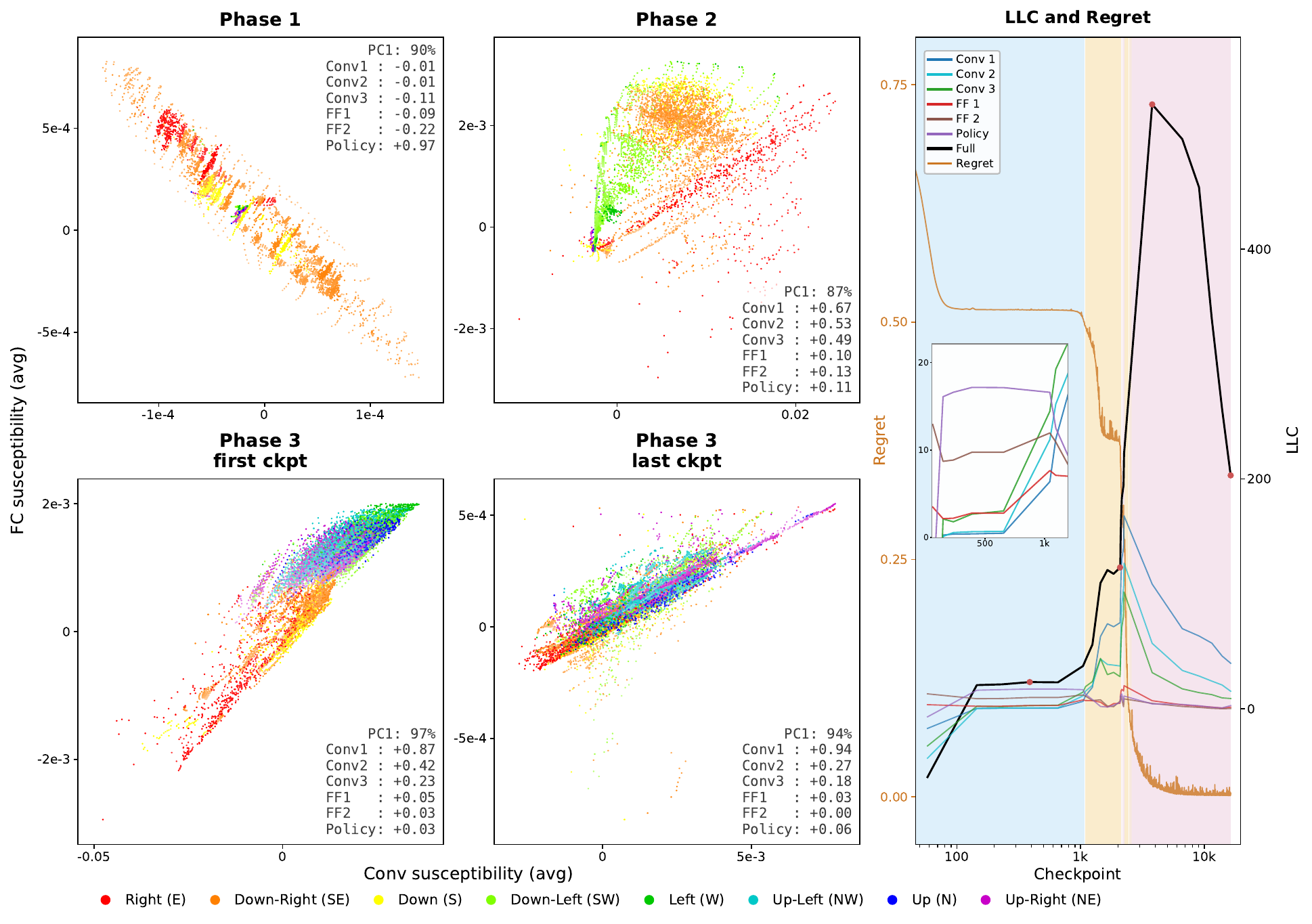}\\[1ex]
  \includegraphics[width=\textwidth]{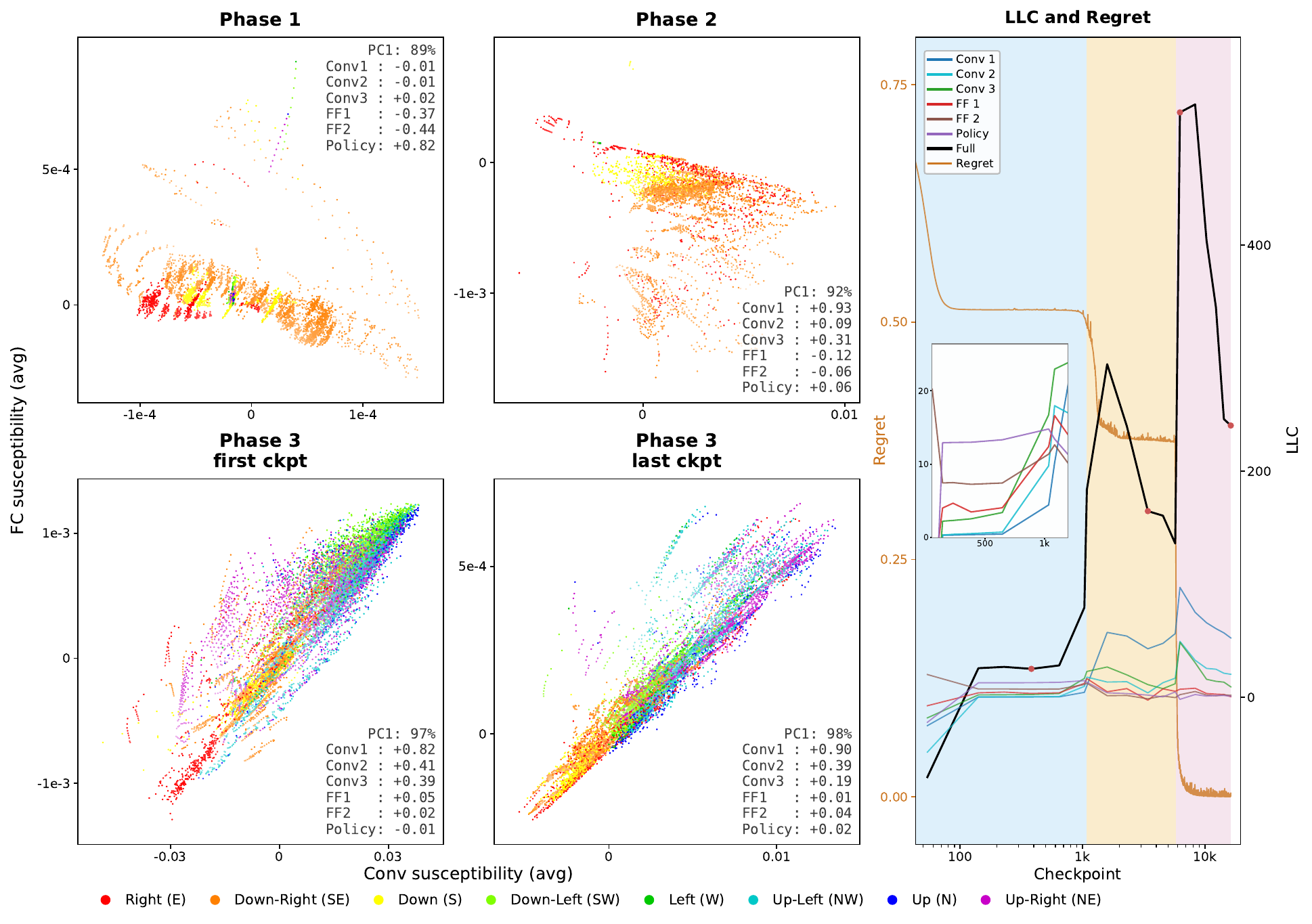}
  \caption{(continued)}
\end{figure}

% =====================================================================
% alpha = 1.0 (12 runs; id 25, 27, 29 skipped — incomplete ckpt2000 zarr)
% Three PDFs per figure environment / page. The three runs with a real
% LLC trajectory (id 15, 19, 30) are grouped onto the first page so the
% remaining pages carry the visually homogeneous "scatter only" panels.
% =====================================================================
\begin{figure}
  \centering
  \includegraphics[width=\textwidth]{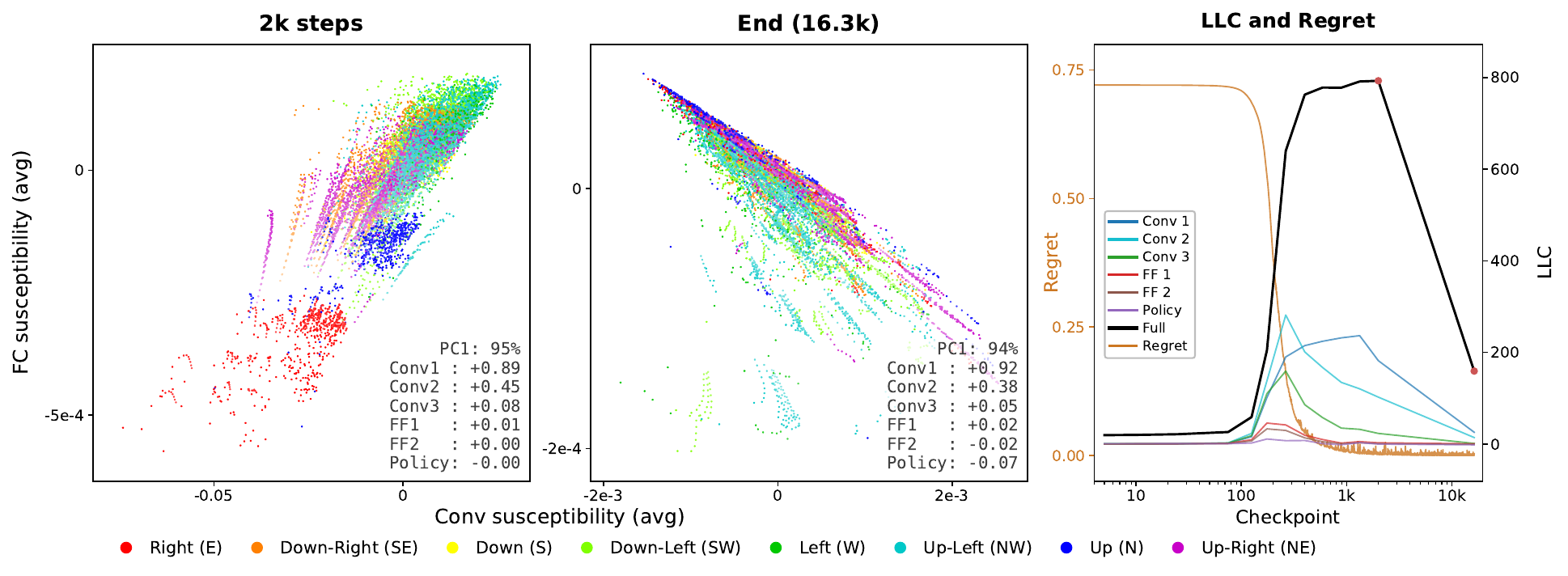}\\[1ex]
  \includegraphics[width=\textwidth]{figures/conv_vs_fc_grid_alpha1.0_id19_single.pdf}\\[1ex]
  \includegraphics[width=\textwidth]{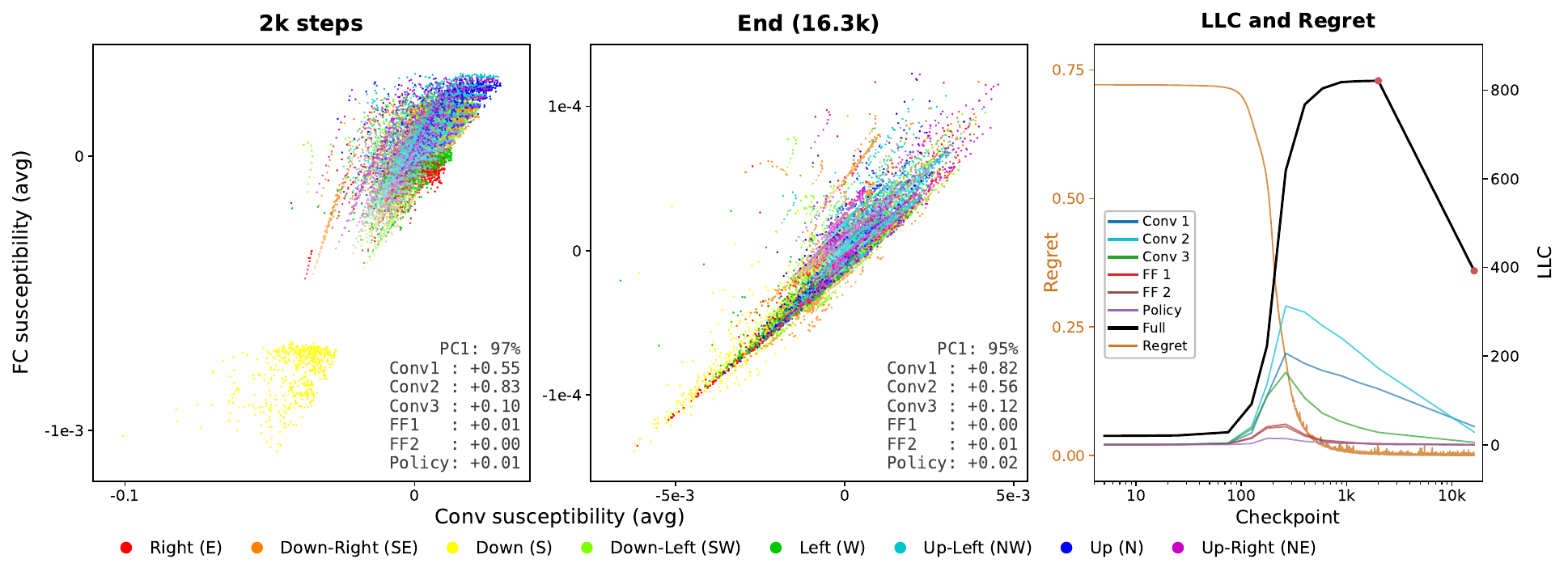}
  \caption{Susceptibilities and LLC and regret curves for models trained with $\alpha=1$. The two panels within the same row of each side of the dividing line are from the same run. The variation explained by PC1 and the cosine similarity between the PC1 and each of the weight restriction directions is indicated at the upper left corner of each susceptibility panel.}
    \label{fig:alpha_1.0_all_susceptibilities}
\end{figure}

\begin{figure}\ContinuedFloat
  \centering
  \includegraphics[width=\textwidth]{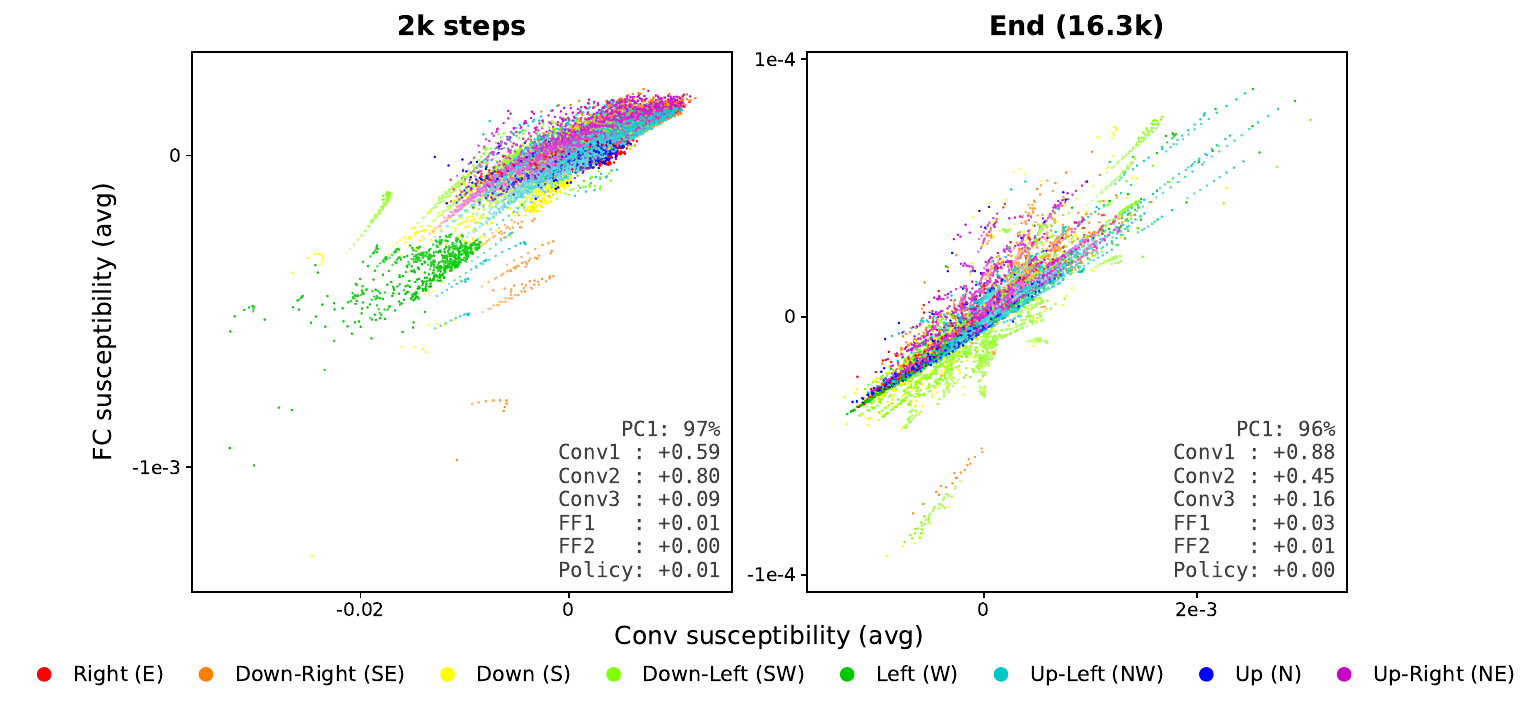}\\[1ex]
  \includegraphics[width=\textwidth]{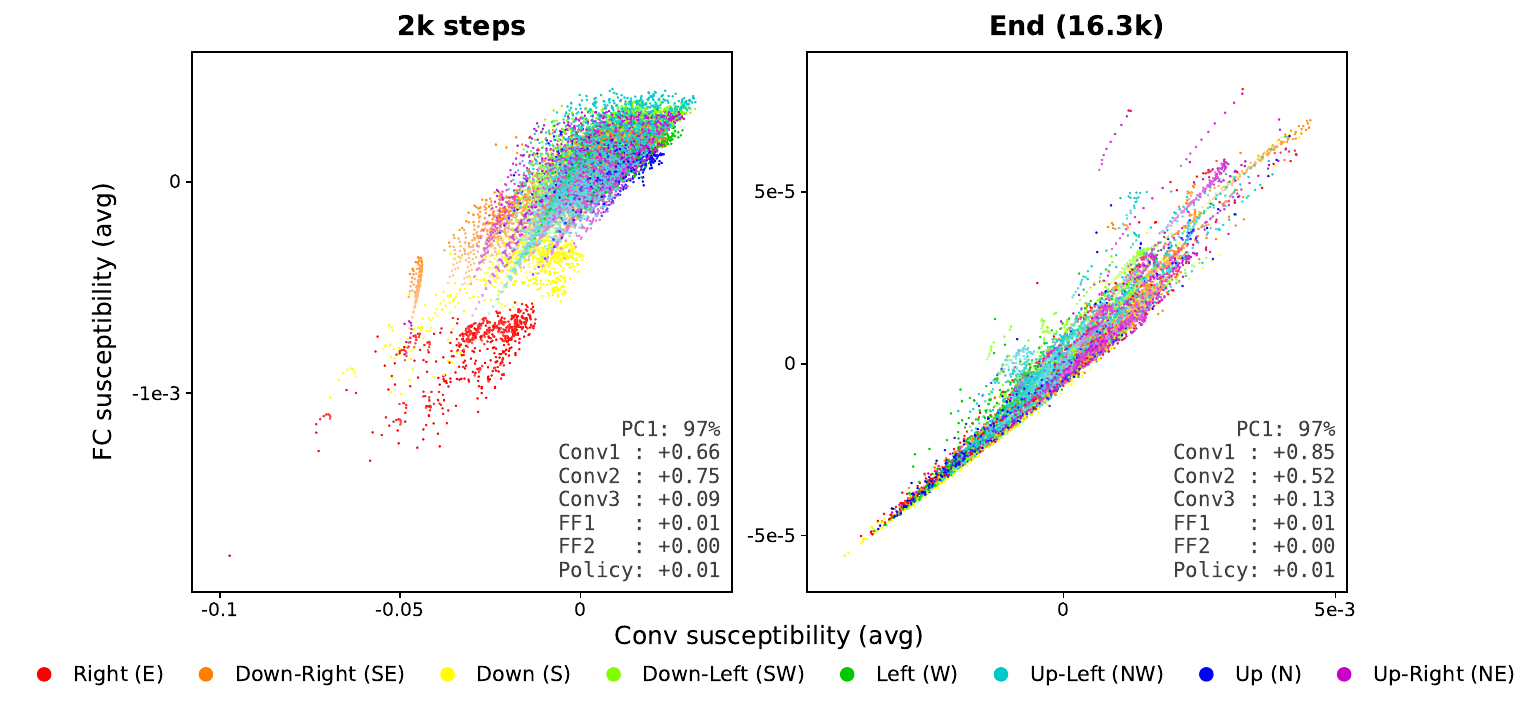}\\[1ex]
  \includegraphics[width=\textwidth]{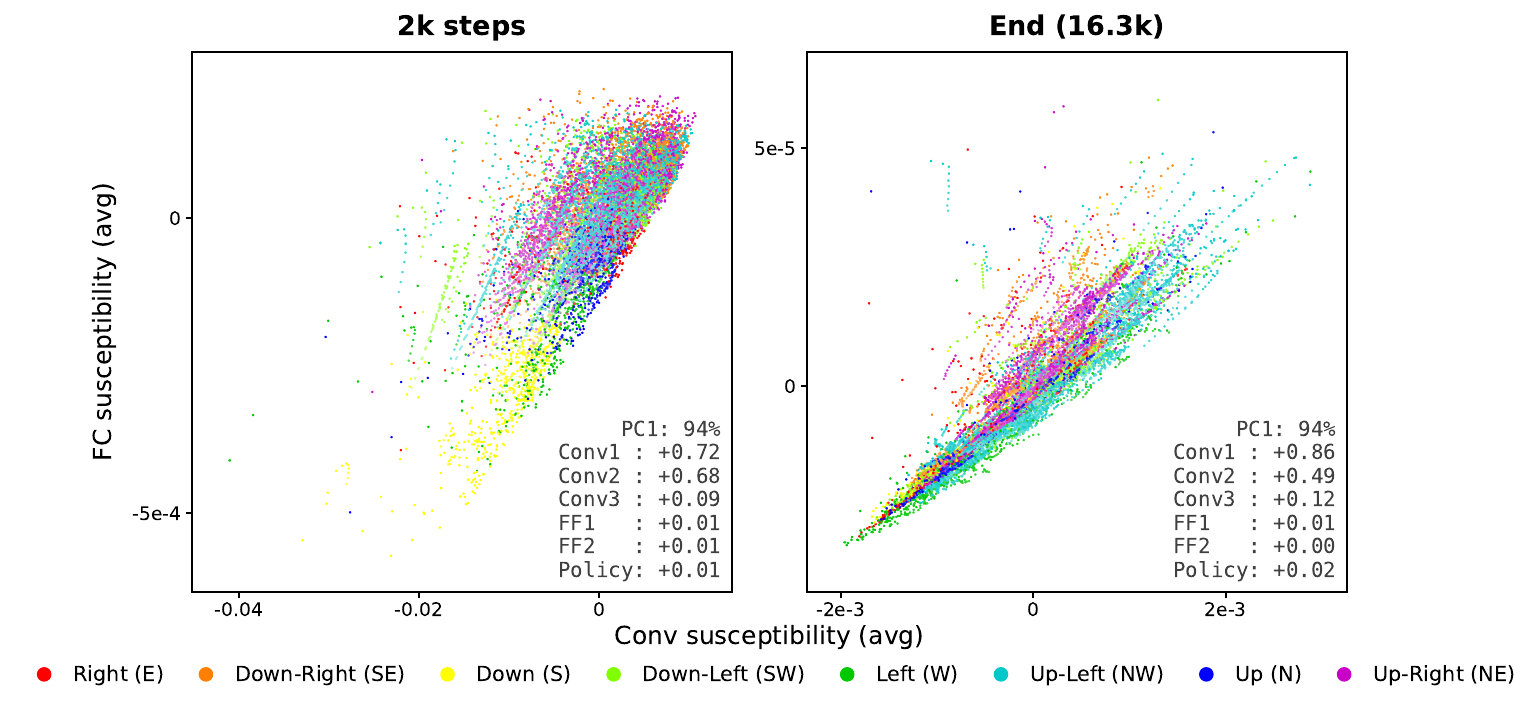}
  \caption{(continued)}
\end{figure}

\begin{figure}\ContinuedFloat
  \centering
  \includegraphics[width=\textwidth]{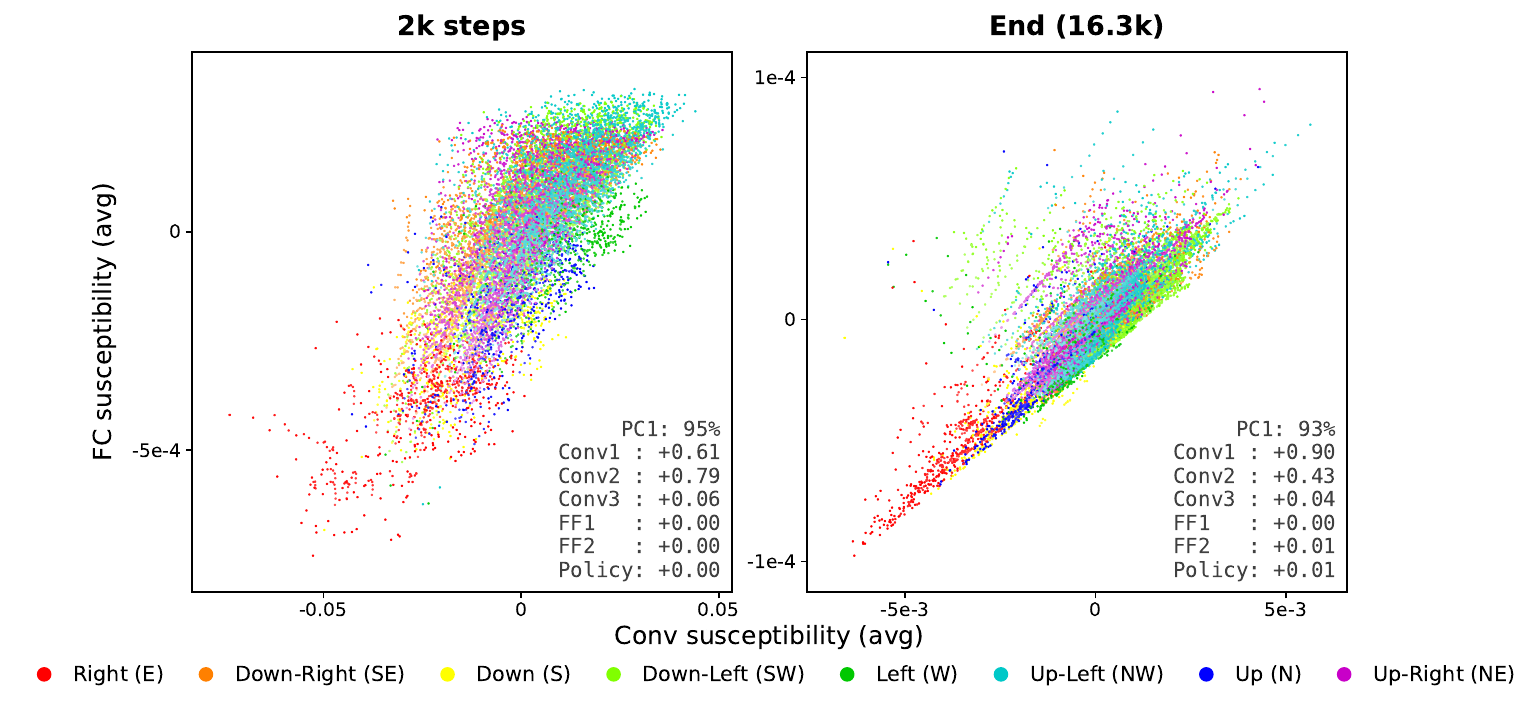}\\[1ex]
  \includegraphics[width=\textwidth]{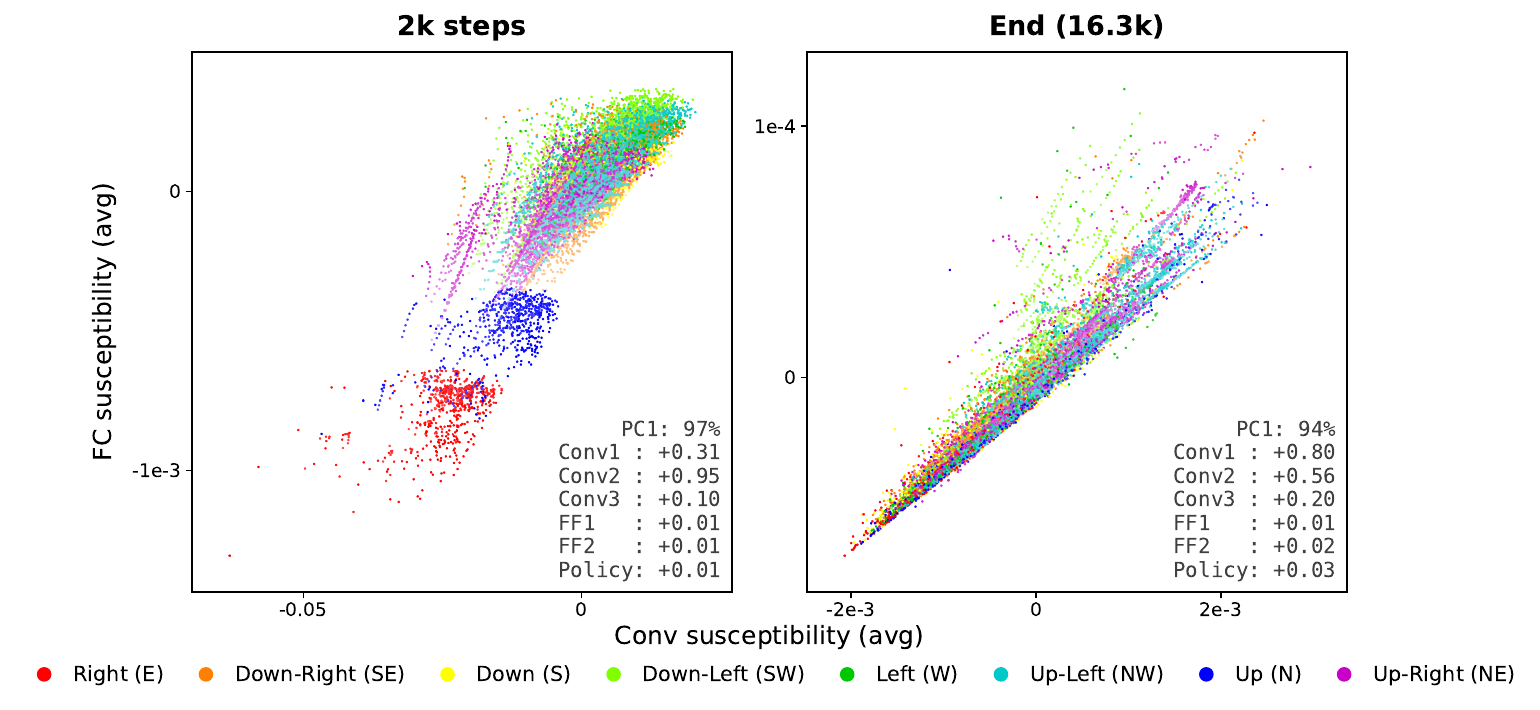}\\[1ex]
  \includegraphics[width=\textwidth]{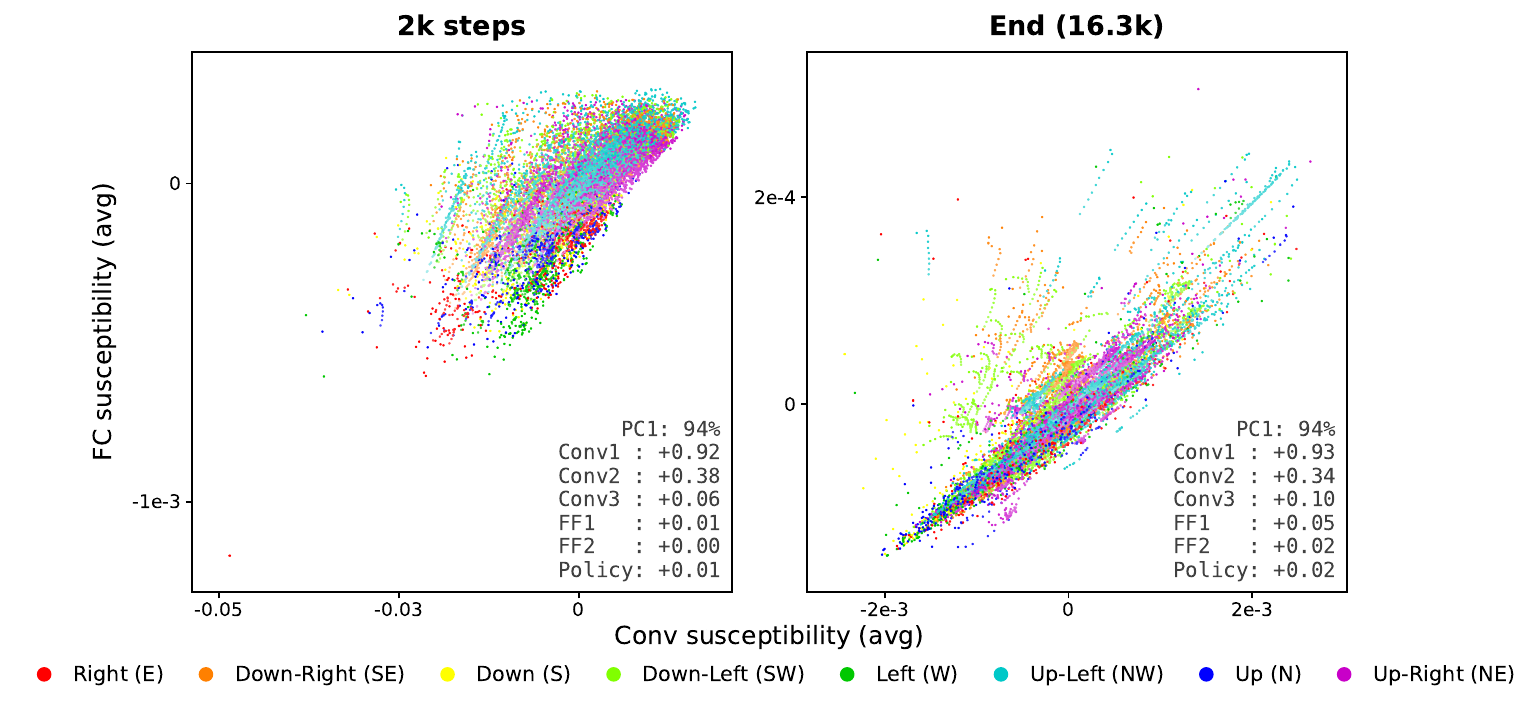}
  \caption{(continued)}
\end{figure}

\begin{figure}\ContinuedFloat
  \centering
  \includegraphics[width=\textwidth]{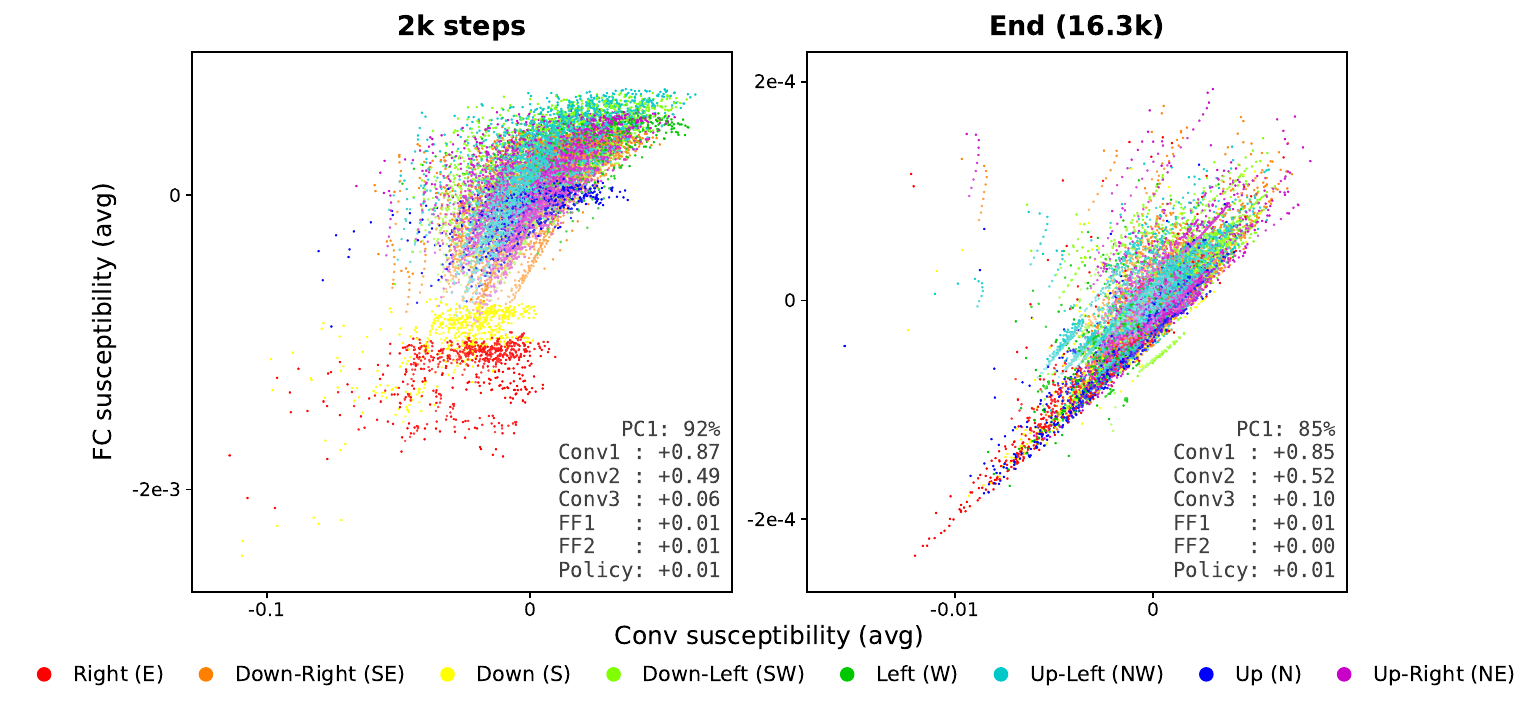}\\[1ex]
  \includegraphics[width=\textwidth]{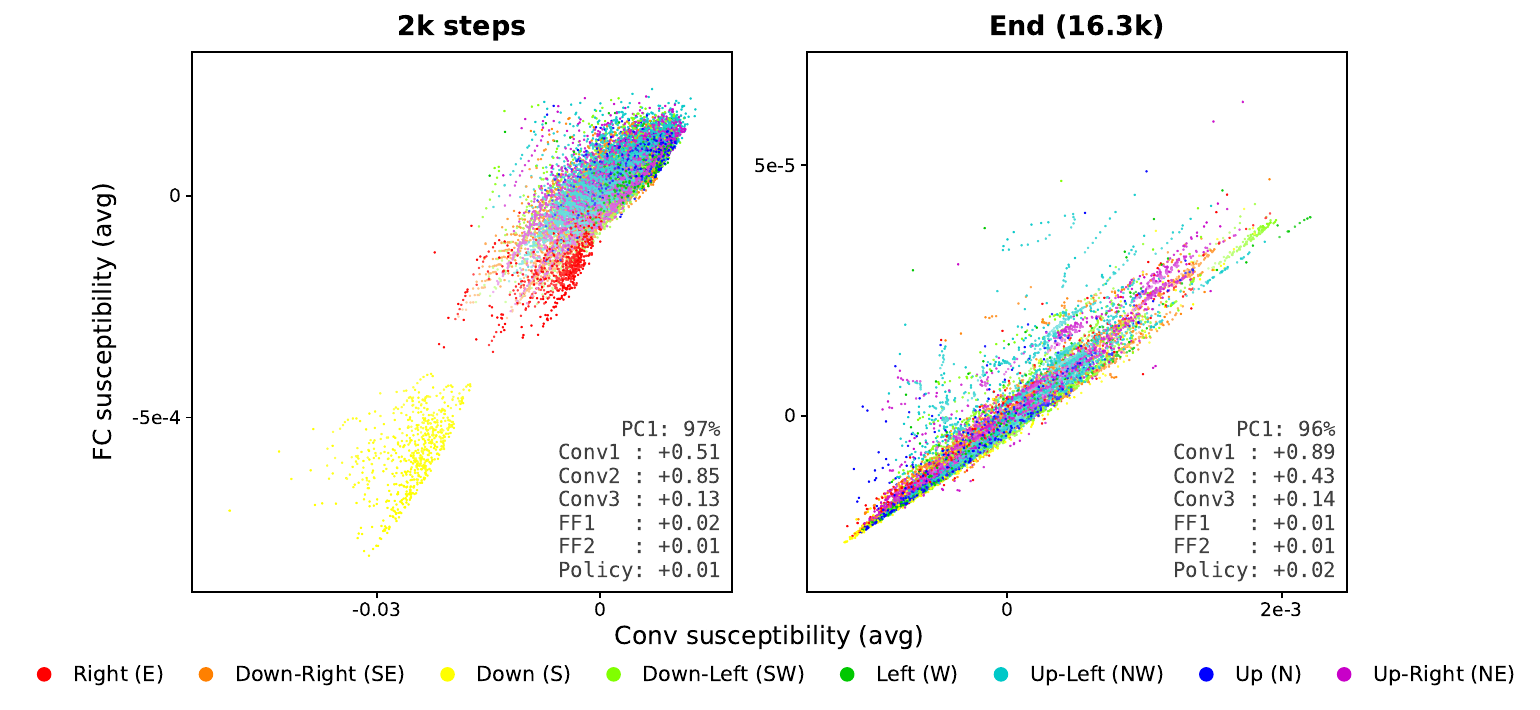}\\[1ex]
  \includegraphics[width=\textwidth]{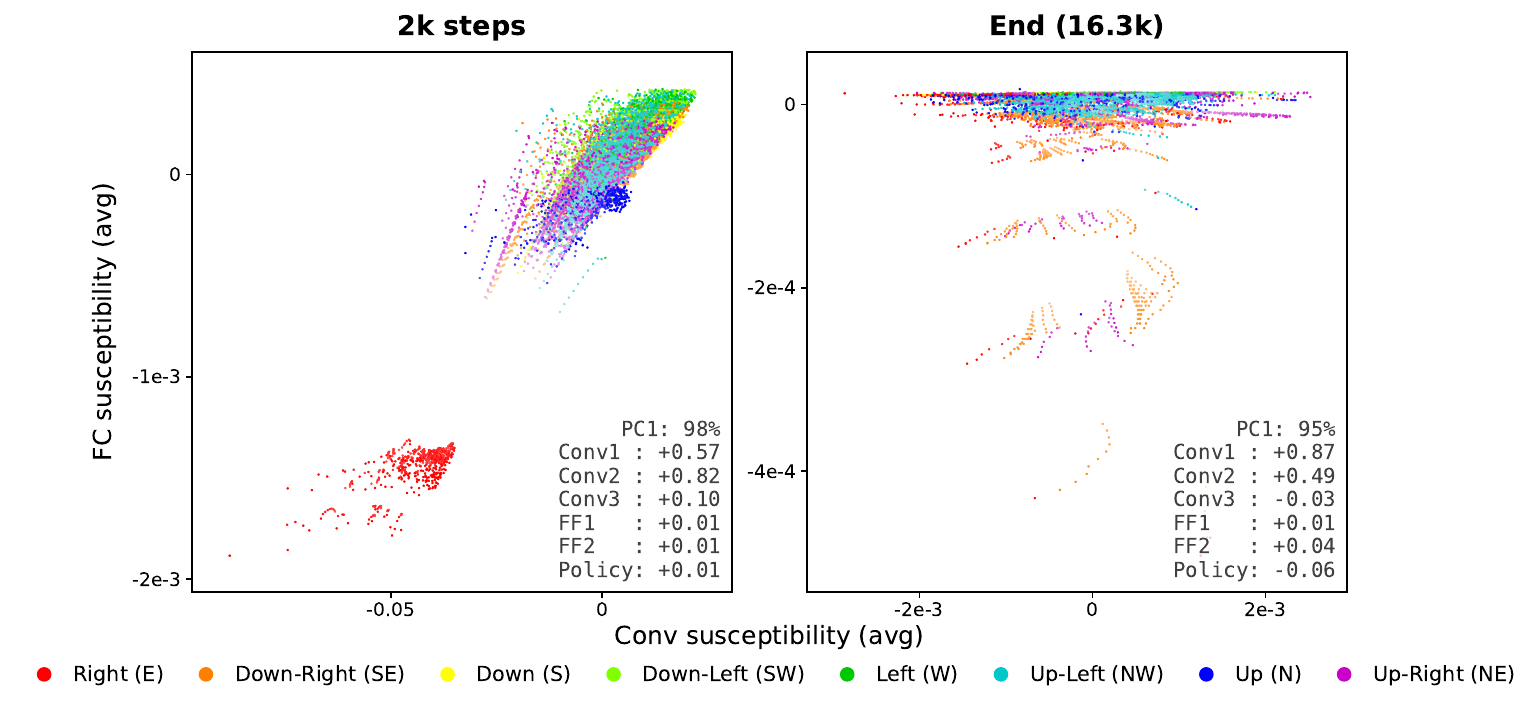}
  \caption{(continued)}
\end{figure}

\end{document}